
\documentclass{article}

\usepackage{microtype}
\usepackage{graphicx}
\usepackage{subfigure}
\usepackage{booktabs} 
\usepackage{amssymb,amsmath,dsfont,multirow}
\usepackage[many]{tcolorbox}
\usepackage{amsthm}
\usepackage{thmtools}
\usepackage{tikz}
\usepackage{mathtools}

\usepackage{hyperref}



\usepackage[accepted]{icml2021}

\icmltitlerunning{Fast Margin Maximization via Dual Acceleration}


\numberwithin{equation}{section}
\declaretheorem[numberlike=equation]{theorem}
\declaretheorem[numberlike=theorem]{lemma}
\declaretheorem[numberlike=theorem]{proposition}

\declaretheoremstyle[%
qed={\ensuremath\Diamond}]{remstyle}

\usepackage[inline]{enumitem}
\usepackage{cleveref}
\usepackage{commath}
\usepackage{xspace}
\usepackage{nicefrac}
\def\R{\mathbb R}
\def\1{\mathds 1}

\newcommand{\ip}[2]{\left\langle #1, #2 \right \rangle}

\DeclareMathOperator*{\argmin}{arg\,min}
\DeclareMathOperator*{\argmax}{arg\,max}

\crefname{algocf}{alg.}{algs.}
\Crefname{algocf}{Algorithm}{Algorithms}
\newcommand\T{{\scriptscriptstyle\mathsf{T}}}
\def\margin#1{\gamma(#1)}
\def\nf{\nabla f}
\def\bargamma{\bar{\gamma}}
\def\mulgamma{\gamma_{\textup{m}}}
\def\bmgamma{{\bar\gamma}_{\textup{m}}}
\def\mulcR{\cR_{\textup{m}}}
\def\baru{\bar{u}}
\def\nR{\nabla\cR}

\def\barq{\bar{q}}
\def\nphi{\nabla\phi}
\def\tcO{\widetilde{\mathcal O}}
\def\tF{{\textrm{F}}}

\def\ddefloop#1{\ifx\ddefloop#1\else\ddef{#1}\expandafter\ddefloop\fi}
\def\ddef#1{\expandafter\def\csname bb#1\endcsname{\ensuremath{\mathbb{#1}}}}
\ddefloop ABCDEFGHIJKLMNOPQRSTUVWXYZ\ddefloop
\def\ddef#1{\expandafter\def\csname c#1\endcsname{\ensuremath{\mathcal{#1}}}}
\ddefloop ABCDEFGHIJKLMNOPQRSTUVWXYZ\ddefloop
\def\ddef#1{\expandafter\def\csname h#1\endcsname{\ensuremath{\hat{#1}}}}
\ddefloop ABCDEFGHIJKLMNOPQRSTUVWXYZ\ddefloop
\def\ddef#1{\expandafter\def\csname v#1\endcsname{\ensuremath{\boldsymbol{#1}}}}
\ddefloop ABCDEFGHIJKLMNOPQRSTUVWXYZabcdefghijklmnopqrstuvwxyz\ddefloop
\def\ddef#1{\expandafter\def\csname v#1\endcsname{\ensuremath{\boldsymbol{\csname #1\endcsname}}}}
\ddefloop {alpha}{beta}{gamma}{delta}{epsilon}{varepsilon}{zeta}{eta}{theta}{vartheta}{iota}{kappa}{lambda}{mu}{nu}{xi}{pi}{varpi}{rho}{varrho}{sigma}{varsigma}{tau}{upsilon}{phi}{varphi}{chi}{psi}{omega}{Gamma}{Delta}{Theta}{Lambda}{Xi}{Pi}{Sigma}{varSigma}{Upsilon}{Phi}{Psi}{Omega}{ell}\ddefloop


\begin{document}

\twocolumn[
\icmltitle{Fast Margin Maximization via Dual Acceleration}



\icmlsetsymbol{equal}{*}

\begin{icmlauthorlist}
\icmlauthor{Ziwei Ji}{uiuc}
\icmlauthor{Nathan Srebro}{ttic}
\icmlauthor{Matus Telgarsky}{uiuc}
\end{icmlauthorlist}

\icmlaffiliation{uiuc}{Department of Computer Science, University of Illinois at Urbana-Champaign, Urbana, Illinois, USA}
\icmlaffiliation{ttic}{Toyota Technical Institute of Chicago, Chicago, Illinois, USA}

\icmlcorrespondingauthor{Ziwei Ji}{ziweiji2@illinois.edu}

\icmlkeywords{Machine Learning, ICML}

\vskip 0.3in
]



\printAffiliationsAndNotice{}  

\begin{abstract}
We present and analyze a momentum-based gradient method for training linear
classifiers with an exponentially-tailed loss (e.g., the exponential or
logistic loss), which maximizes the classification margin on separable data
at a rate of $\widetilde{\mathcal{O}}(1/t^2)$.
This contrasts with a rate of $\mathcal{O}(1/\log(t))$ for standard
gradient descent, and $\mathcal{O}(1/t)$ for normalized gradient descent.  This
momentum-based method is derived via the convex dual of the maximum-margin
problem, and specifically by applying Nesterov acceleration to this dual,
which manages to result in a simple and intuitive method in the primal.
This dual view can also be used to
derive a stochastic variant, which performs adaptive non-uniform sampling via
the dual variables.
\end{abstract}

\section{Introduction}

First-order optimization methods,
such as stochastic gradient descent (SGD) and variants thereof,
form the optimization backbone of deep learning,
where they can find solutions with both low training error and low
test error \citep{nati_implicit_gen,zhang_gen}.
Motivated by this observation of low test error, there has been extensive work on the
\emph{implicit bias} of these methods: amongst those predictors with low training error,
which predictors do these methods implicitly prefer?

For linear classifiers and linearly separable data, \citet{nati_logistic} prove
that gradient descent can not only minimize the training error, but also
maximize the \emph{margin}.
This could help explain the good generalization of gradient descent, since a larger
margin could lead to better generalization \citep{spec}.
However, gradient descent can only maximize the margin at a slow $\cO(1/\log(t))$ rate.

\begin{figure}[t!]
  \centering
  \vspace{0.2em}
  \includegraphics[width=\linewidth]{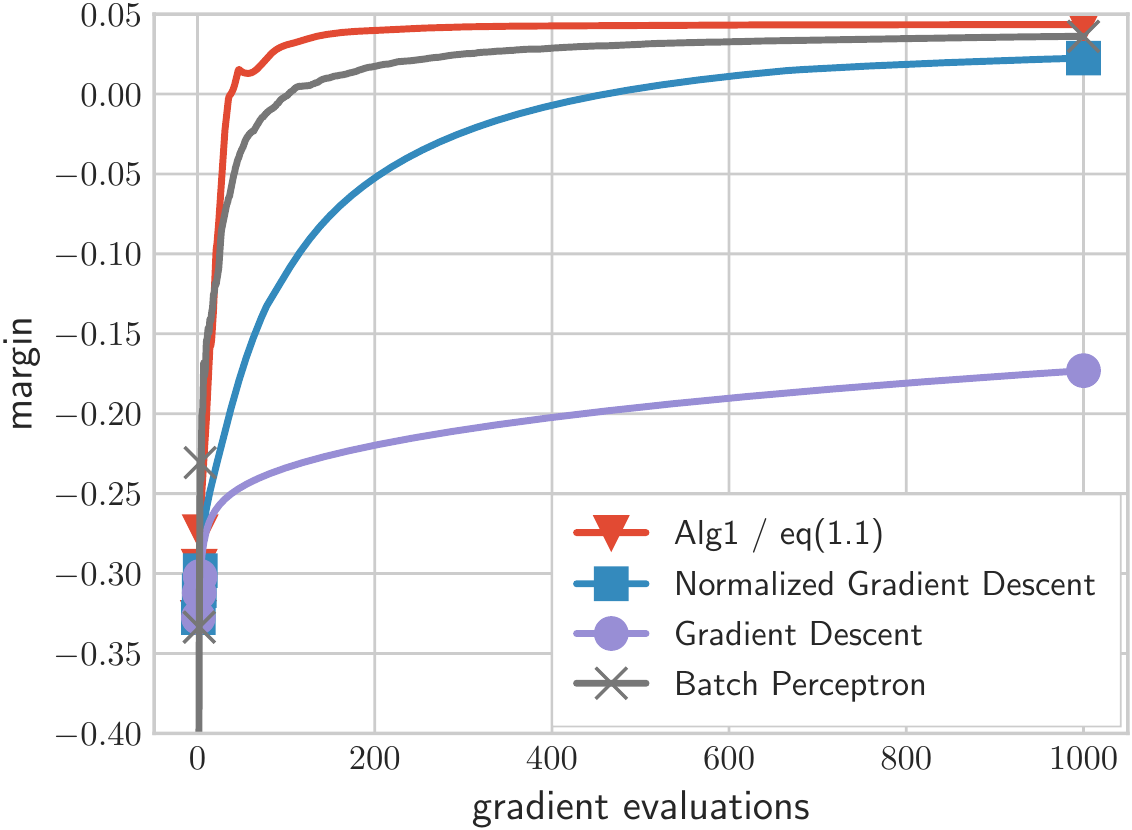}%
  \vspace{-1em}
  \caption{Margin-maximization performance of the new momentum-based method
    (cf.~\Cref{alg:momentron} and \cref{eq:nes}),
    which has a rate $\tcO(1/t^2)$, compared with prior work discussed below.
    All methods are first-order methods, and all but batch perceptron use an exponentially-tailed
    smooth loss, whereas batch perceptron applies gradient descent to the hard-margin problem
    directly.
    The data here is linearly separable, specifically \texttt{mnist} digits
  $0$ and $1$.}
  \label{fig:sep:batch}
\end{figure}

It turns out that the margin can be maximized much faster by simply normalizing
the gradient: letting $\theta_t$ denote the step size and $\cR$ the empirical risk
with the exponential loss, consider the normalized gradient step
\begin{align*}
  w_{t+1}:=w_t-\theta_t \frac{\nR(w_t)}{\cR(w_t)}.
\end{align*}
Using this normalized update,
margins are maximized at a $\widetilde{\cO}(1/\sqrt{t})$ rate with
$\theta_t=1/\sqrt{t}$ \citep{NLGSS18}, and at a $\cO(1/t)$ rate with $\theta_t=1$
\citep{refined_pd}.
A key observation in proving such rates is
that normalized gradient descent is equivalent to
an entropy-regularized mirror descent on a certain margin dual problem
(cf. \Cref{sec:dual_motiv}).

\paragraph{Contributions.}

In this work, we further exploit this duality relationship from prior work,
and design a momentum-based algorithm with iterates given by
  \begin{equation}
    \begin{aligned}
      g_t
      &:= \beta_t \del{ g_{t-1} + \frac{ \nabla \cR(w_t)}{\cR(w_t)}}, \\
      w_{t+1}
      &:= w_t - \theta_t \del{ g_t + \frac{ \nabla \cR(w_t)}{\cR(w_t)}}.
    \end{aligned}
    \label{eq:nes}
  \end{equation}
Our main result is that these iterates,
with a proper choice of $\theta_t$ and $\beta_t$, can maximize the margin at a rate of
$\widetilde{\cO}(1/t^2)$, whereas prior work had a rate of $\cO(1/t)$ at best.
The key idea is to reverse the primal-dual relationship mentioned above:
those works focus on primal normalized gradient descent, and show that it
is equivalent to dual mirror descent, but here we start from the dual,
and apply Nesterov acceleration to make dual optimization faster, and then
translate the dual iterates into the momentum form in \cref{eq:nes}.
Note that if our goal is just to accelerate dual optimization, then it is
natural to apply Nesterov's method; however, here our goal is to accelerate
(primal) margin maximization -- it was unclear whether the momentum method
changes the implicit bias, and our margin analysis is very different from the
standard analysis of Nesterov's method.
The connection between momentum in the primal and acceleration in the dual
also appears to be new, and we provide it as an auxiliary contribution.
We state the method in full in \Cref{alg:momentron}, and its analysis in \Cref{sec:momentron}.


Since our momentum-based iterates (cf. \cref{eq:nes}) are designed via a primal-dual framework,
they can be written purely with dual variables,
in which case they can be applied in a kernel setting.
However, calculating the full-batch gradient would require $n^2$ calls to the
kernel, where $n$ denotes the number of training examples.
To reduce this computational burden,
by further leveraging the dual perspective, we give an \emph{adaptive sampling}
procedure which avoids the earlier use of batch gradients and only needs $n$
kernel calls per iteration.
We prove a $\cO(1/\sqrt{t})$ margin rate for a momentum-free version of this
adaptive sampling method, but also provide empirical support for an aggressive
variant which uses our batch momentum formulation verbatim with these efficient stochastic updates.
These results are presented in \Cref{sec:adapt}.

For sake of presentation, the preceding analyses and algorithm definitions use the exponential
loss, however they can be extended to both binary and multiclass losses with
exponential tails.  The multiclass extension is in fact a straightforward reduction to
the binary case, and is used in most figures throughout this work.
We discuss these extensions in \Cref{sec:discussion}.

As an illustrative application of these fast margin maximization methods,
we use them to study the evolution of the kernel given by various stages of deep network
training.  The main point of interest is that while these kernels do seem to generally
improve during training (in terms of both margins and test errors), we provide an example where
simply changing the random seed switches between preferring the final kernel and the initial kernel.
These empirical results appear in \Cref{sec:emp}.

We conclude with open problems in \Cref{sec:open}.  Full proofs and further experimental
details are deferred to the appendices.

\subsection{Related Work}

This work is closely related to others on the implicit bias, most notably the original
analysis for gradient descent on linearly separable data \citep{nati_logistic}.
The idea of using normalized steps to achieve faster margin maximization rates
was first applied in the case of \emph{coordinate} descent \citep{mjt_margins},
where this normalization is closely associated with the usual step sizes in boosting
methods \citep{freund_schapire_adaboost}.
\citet{ramdas2016towards} studied a variant of the perceptron algorithm with
normalized steps, and showed it can always maximize the margin.
Many other works have used these normalized iterates, associated potential functions,
and duality concepts, both in the linear case
\citep{GLSS18,min_norm},
and in the nonlinear case
\citep{nati_lnn,kaifeng_jian_margin,chizat_bach_imp,dir_align}.

There appear to be few analyses of momentum methods; one example is the work of
\citet{heavy_ball_global}, which shows a $\cO(1/t)$ convergence rate for general
convex problems over bounded domains,
but can not be applied to the exponentially-tailed loss setting here
since the domain is unbounded and the solutions are off at infinity.
Connections between momentum in the primal and Nesterov acceleration in the dual
seem to not have been made before, and relatedly our use of momentum
coefficient $\beta_t = t/ (t+1)$ is non-standard.

Further on the topic of acceleration, \citet{tseng_agd} gave an application
to a smoothed version of the nonsmooth hard-margin objective, with a rate of $\cO(1/t)$ to
a fixed suboptimal margin.  This analysis requires accelerated methods for general geometries,
which were analyzed by \citet{tseng_agd} and \citet{allen_linear}.  The original accelerated
method for Euclidean geometry is due to \citet{nesterov_1983}.  A simultaneous analysis
of mirror descent and Nesterov acceleration is given here in \Cref{app_sec:unified}.

The methods here, specifically \Cref{fact:momentron_half},
can ensure a margin of $\bargamma/4$
in $4\sqrt{\ln(n)}/\bargamma$ steps, where $\bargamma$ denotes the optimal margin and will be
defined formally in \Cref{sec:notation}.
Another primal-dual method for fast \emph{linear feasibility} was given by
\citet{linear_optimism};
the method terminates in $\cO\del{\ln(n)/\bargamma}$ steps with a positive margin,
however the analysis does not reveal how large this margin is.

Various figures throughout this work include experiments with the \emph{batch perceptron},
which simply applies (super)gradient ascent to the explicit hard-margin maximization problem
\citep{batch_perceptron}.  Despite this simplicity, the method is hard to beat, and surpasses
prior implicit margin maximizers in experiments (cf. \Cref{fig:sep:batch}).
Interestingly, another standard method with strong guarantees
fared less well in experiments \citep{margin_lb},
and is thus omitted from the figures.



\section{Notation}
\label{sec:notation}

The dataset is denoted by $\{(x_i,y_i)\}_{i=1}^n$, where $x_i\in\R^d$ and
$y_i\in\{-1,+1\}$.
Without loss of generality, we assume $\|x_i\|_2\le1$.
Moreover, let $z_i:=-y_ix_i$, and collect these vectors into a matrix $Z\in\R^{n\times d}$,
whose $i$-th row is $z_i^\top$.

We consider linear classifiers.
The margin of a nonzero linear classifier $w\in\R^d$ is defined as
\begin{align*}
  \margin{w}:=\frac{\min_{1\le i\le n}y_i \langle w,x_i\rangle}{\|w\|_2}=\frac{-\max_{1\le i\le n}\langle w,z_i\rangle}{\|w\|_2},
\end{align*}
with $\margin{0}:=0$.
The \emph{maximum margin} is
\begin{align*}
  \bargamma:=\max_{\|w\|_2\le1}\margin{w}.
\end{align*}
If $\bargamma>0$, then the dataset is \emph{linearly separable}; in this case,
the \emph{maximum-margin classifier} is defined as
\begin{align*}
  \baru:=\argmax_{\|w\|_2\le1}\margin{w}=\argmax_{\|w\|_2=1}\margin{w}.
\end{align*}
If $\bargamma=0$, the dataset is linearly nonseparable.

Our algorithms are based on the empirical risk, defined as
\begin{align*}
  \cR(w):=\frac{1}{n}\sum_{i=1}^{n}\ell\del{\langle w,z_i\rangle}.
\end{align*}
For presentation, we mostly focus on the exponential loss $\ell(z):=e^z$, but our
analysis can be extended to other exponentially-tailed losses such as the
logistic loss $\ell(z):=\ln(1+e^z)$ and various multiclass losses; these extensions
are discussed in \Cref{sec:discussion}.

The following potential function $\psi:\R^n \to \R$ will be central to our analysis: given a strictly increasing
loss $\ell:\R\to\R$ with $\lim_{z\to-\infty}\ell(z)=0$ and
$\lim_{z\to\infty}\ell(z)=\infty$, for $\xi\in\R^n$, let
\begin{align}\label{eq:psi}
  \psi(\xi):=\ell^{-1}\del{\sum_{i=1}^{n}\ell(\xi_i)},
\end{align}
thus $\psi(Zw)=\ell^{-1}\del{n\cR(w)}$.
For the exponential loss,
$\psi$ is the ln-sum-exp function, meaning
$\psi(Zw)=\ln\del{\sum_{i=1}^{n}\exp(\langle w,z_i\rangle)}$.
This $\psi$ is crucial in our analysis since (i) it induces the dual
variable, which motivates our algorithms (cf. \Cref{sec:dual_motiv}); (ii) it
gives a smoothed approximation of margin, which helps in the margin analysis
(cf. \Cref{sec:momentron_margin}).
Here we note another useful property of $\psi$: the gradient of $\psi(Zw)$ with
respect to $w$ is $Z^\top\nabla\psi(Zw)$, which is a normalized version of
$\nR(w)$:
\begin{align}\label{eq:psi_grad}
  Z^\top\nabla\psi(Zw)=\frac{\sum_{i=1}^{n}\ell'\del{\langle w,z_i\rangle}z_i}{\ell'\del{\psi(Zw)}}=\frac{\nR(w)}{\ell'\del{\psi(Zw)}/n}.
\end{align}
For the exponential loss, $\nabla\psi(Zw)\in\Delta_n$ is just the softmax
mapping over $Zw$, where $\Delta_n$ denotes the probability simplex.
Moreover,
\begin{align}\label{eq:psi_exp}
  Z^\top\nabla\psi(Zw)=\frac{\nR(w)}{\cR(w)}.
\end{align}

\section{Analysis of \Cref{alg:momentron}}\label{sec:momentron}

\begin{figure}[t!]
  \centering
  \vspace{0.2em}
  \includegraphics[width=\linewidth]{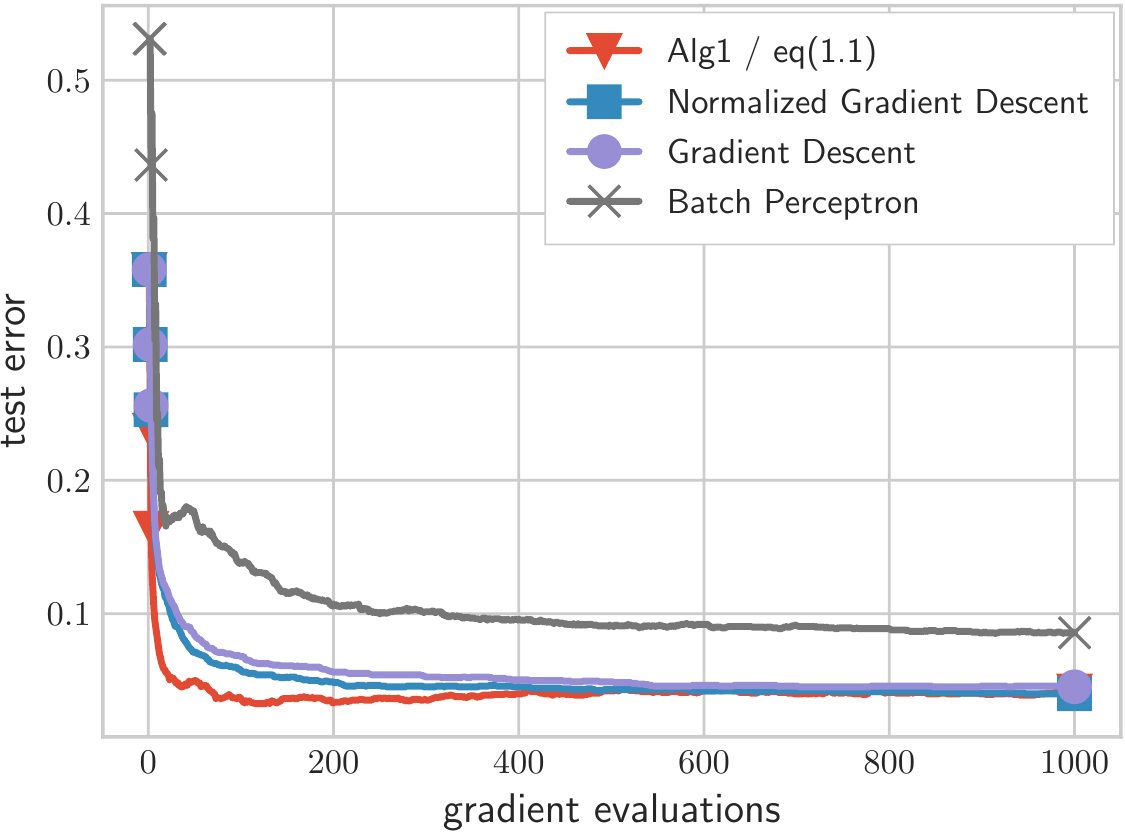}
  \vspace{-1.5em}
  \caption{Here the various margin-maximization methods from \Cref{fig:sep:batch} are run on
    \emph{non-separable} data, specifically \texttt{mnist} digits $3$ and $5$;
    as such, test error and not margin are reported.
    The methods based on exponential loss still perform well;
    by contrast, the batch perceptron suffers, and perhaps requires additional effort
  to tune a regularization parameter.}
  \label{fig:nonsep:batch}
\end{figure}

\begin{algorithm}[t!]
  \caption{}
  \label{alg:momentron}
  \begin{algorithmic}
    \STATE {\bfseries Input:} data matrix $Z\in\R^{n\times d}$, step size
    $(\theta_t)_{t=0}^\infty$, \\ momentum factor $(\beta_t)_{t=0}^\infty$.
    \STATE {\bfseries Initialize:}
    $w_0=g_{-1}=(0,\ldots,0)\in\R^d$, \\
    $q_0=(\frac{1}{n},\ldots,\frac{1}{n})\in\Delta_n$.
    \FOR{$t = 0,1,2,\ldots$}
      \STATE $g_t\gets\beta_t(g_{t-1}+Z^\top q_t)$.
      \STATE $w_{t+1}\gets w_t-\theta_t\del{g_t+Z^\top q_t}$.
      \STATE $q_{t+1}\propto\exp(Zw_{t+1})$, and $q_{t+1}\in\Delta_n$.
    \ENDFOR
  \end{algorithmic}
\end{algorithm}

A formal version of our batch momentum method is presented in
\Cref{alg:momentron}.
It uses the exponential loss, and is equivalent to \cref{eq:nes} since by
\cref{eq:psi_exp},
\begin{align*}
  Z^\top q_t=Z^\top\nabla\psi(Zw_t)=\frac{\nR(w_t)}{\cR(w_t)}.
\end{align*}
Here are our main convergence results.
\begin{theorem}\label{fact:momentron}
  Let $w_t$ and $g_t$ be given by \Cref{alg:momentron}
  with $\theta_t=1$ and $\beta_t=t/(t+1)$.
  \begin{enumerate}
    \item If the dataset is separable, then for all $t\ge1$,
    \begin{align*}
      \margin{w_t}\ge
      \bargamma-\frac{4\del{1+\ln(n)}\del{1+2\ln(t+1)}}{\bargamma(t+1)^2}.
    \end{align*}

    \item For any dataset, separable or nonseparable, it holds for all $t\ge1$
    that
    \begin{align*}
      \frac{4\|g_t\|_2^2}{t^2}-\frac{8\ln(n)}{(t+1)^2}\le\bargamma^2\le \frac{4\|g_t\|_2^2}{t^2}.
    \end{align*}
  \end{enumerate}
\end{theorem}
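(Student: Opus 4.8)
The plan is to read both statements off of the fact that \Cref{alg:momentron} is, in disguise, Nesterov-accelerated mirror descent for the smooth convex dual objective $F(q):=\tfrac12\|Z^\top q\|_2^2$ over the simplex $\Delta_n$. First I would record the minimax identity $\bargamma=\min_{q\in\Delta_n}\|Z^\top q\|_2$, valid for every dataset, so that $F$ has minimum value $F^\star=\bargamma^2/2$; since $\abs{\ip{z_i}{z_j}}\le1$, $F$ is $1$-smooth relative to $\|\cdot\|_1$ ($\|\nabla F(q)-\nabla F(q')\|_\infty=\|ZZ^\top(q-q')\|_\infty\le\|q-q'\|_1$), which is exactly matched to the negative-entropy mirror map on $\Delta_n$, whose Bregman range is $\ln n$. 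On the primal side I would use two properties of $\psi$: the sandwich $\max_i(Zw)_i\le\psi(Zw)\le\max_i(Zw)_i+\ln n$, which gives $\margin{w}\ge-\psi(Zw)/\|w\|_2$ for $w\neq0$; and the Fenchel identity $\psi(Zw)=\ip{q}{Zw}-\sum_i q_i\ln q_i$ with $q=\nabla\psi(Zw)=\mathrm{softmax}(Zw)$, which rearranges to the lower bound $-\psi(Zw)\ge\ip{-Z^\top q}{w}-\ln n$.

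Next I would unroll the iterates. With $\theta_t=1$ and $\beta_t=t/(t+1)$, an induction gives $g_t=\tfrac t2 Z^\top\barq_t$, where $\barq_t:=\tfrac{2}{t(t+1)}\sum_{s=1}^t s\,q_s\in\Delta_n$, and $-w_t=Z^\top m_t$, where $m_t=q_0+\sum_{s=1}^{t-1}\del{s(H_t-H_s)+1}q_s$ with $H_k$ the $k$-th harmonic number; summing the weights yields the total mass $M_t=t(t+3)/4\in[\tfrac14(t+1)^2,\tfrac12(t+1)^2]$, so $\hat q_t:=m_t/M_t\in\Delta_n$ and $w_t=-M_t Z^\top\hat q_t$. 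The one structural estimate I need here is that $s(H_t-H_s)+1=s(H_t-H_{s-1})\le s(1+\ln t)$, i.e.\ $m_t$ is, coordinate-for-coordinate, within a factor $1+\ln t$ of the linearly weighted measure behind $\barq_t$; this mild mismatch is the source of the logarithmic factor in the margin bound, since the primal iterate $w_t$ tracks the $\log$-weighted dual average $\hat q_t$ rather than the cleaner average $\barq_t$.

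The technical heart is then the accelerated-mirror-descent guarantee: identifying $q_s$ as the point where $\nabla F$ is evaluated, $w_s$ (equivalently $m_s$) as the dual-averaging variable, and $\beta_s=s/(s+1)$ as the extrapolation weight, the unified potential/estimate-sequence analysis (deferred to the appendix) should deliver, for all $q\in\Delta_n$ and all $t\ge1$, an inequality of the form $\sum_{s=1}^t s\del{F(q_s)-F(q)}\le\ln n$. From this I extract two corollaries: by Jensen with weights $s$, $F(\barq_t)-F^\star\le\tfrac{2\ln n}{t(t+1)}\le\tfrac{4\ln n}{(t+1)^2}$; and, comparing weights as above together with $F(q_0)\le\tfrac12$, $\;M_t\del{F(\hat q_t)-F^\star}\le\tfrac12+(1+\ln t)\ln n$, hence $\epsilon_t:=F(\hat q_t)-F^\star=O\del{(1+\ln n)(1+\ln(t+1))/(t+1)^2}$. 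Carrying out this accelerated analysis with the non-standard coefficient $\beta_s$ and in the non-Euclidean entropy geometry — rather than citing a textbook Euclidean bound — is the step I expect to be the main obstacle; the constant bookkeeping needed to land exactly on the stated numerators is routine afterwards.

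Finally I would assemble the claims. Statement~2 is immediate and needs no separability: $\bargamma^2=2F^\star\le2F(\barq_t)=\|Z^\top\barq_t\|_2^2=4\|g_t\|_2^2/t^2$, and $4\|g_t\|_2^2/t^2=2F(\barq_t)\le\bargamma^2+2\del{F(\barq_t)-F^\star}\le\bargamma^2+8\ln n/(t+1)^2$. For statement~1, assume the data separable, so $\bargamma>0$ and $\|w_t\|_2=M_t\|Z^\top\hat q_t\|_2\ge M_t\bargamma>0$. The Fenchel bound gives $-\psi(Zw_t)\ge\ip{-Z^\top q_t}{w_t}-\ln n=M_t\ip{Z^\top q_t}{Z^\top\hat q_t}-\ln n$, and by convexity of $F$, $\ip{Z^\top q_t}{Z^\top\hat q_t}=\ip{\nabla F(\hat q_t)}{q_t}=2F(\hat q_t)+\ip{\nabla F(\hat q_t)}{q_t-\hat q_t}\ge 2F(\hat q_t)+\del{F(q_t)-F(\hat q_t)}=F(q_t)+F(\hat q_t)\ge 2F^\star=\bargamma^2$; meanwhile $\|w_t\|_2=M_t\sqrt{2F(\hat q_t)}=M_t\sqrt{\bargamma^2+2\epsilon_t}\le M_t(\bargamma+\epsilon_t/\bargamma)$. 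Dividing,
\begin{align*}
  \margin{w_t}\;\ge\;\frac{-\psi(Zw_t)}{\|w_t\|_2}\;\ge\;\frac{M_t\bargamma^2-\ln n}{M_t\sqrt{\bargamma^2+2\epsilon_t}}\;\ge\;\bargamma-\frac{\ln n/M_t+\epsilon_t}{\bargamma},
\end{align*}
and substituting $M_t\ge(t+1)^2/4$ together with the bound on $\epsilon_t$ from the previous paragraph, then absorbing the absolute constants into the numerator $4(1+\ln n)(1+2\ln(t+1))$, yields the claimed rate.
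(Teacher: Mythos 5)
Your setup and final assembly are fine (the identities $g_t=\tfrac t2 Z^\top\bar q_t$ with $\bar q_t=\mu_t$, $w_t=-Z^\top m_t$ with harmonic-number weights, the entropy bound $-\psi(Zw_t)\ge\ip{-Z^\top q_t}{w_t}-\ln n$, and the convexity step $\ip{Z^\top q_t}{Z^\top\hat q_t}\ge\bargamma^2$ are all correct), but the proof hinges entirely on the deferred ``master inequality'' $\sum_{s=1}^t s\del{F(q_s)-F(q)}\le\ln n$, and this is a genuine gap, not bookkeeping. The estimate-sequence/linear-coupling analysis you invoke (and the paper's \Cref{fact:agd_f_full,fact:agd_f}) controls the function value only at the \emph{averaged} iterates $\mu_s$, never at the mirror/dual-averaging points $q_s$; nothing in that machinery bounds $F(q_s)-F^\star$, let alone with linearly growing weights and a right-hand side independent of $t$. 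Moreover the inequality appears to be false as stated: take $n=2$, $z_1=(-1,0)$, $z_2=(-0.9,0)$, so $\bargamma=0.9$ and $q_{s,1}=1/(1+e^{0.1\|w_s\|})$; running \Cref{alg:momentron} with $\theta_t=1$, $\beta_t=t/(t+1)$ one finds numerically that $\sum_{s=1}^{t}s\del{F(q_s)-F^\star}$ already exceeds $\ln 2\approx 0.693$ by $t\approx 7$ and keeps growing past it, so no argument can deliver the claimed constant. Since your decomposition $w_t=-M_tZ^\top\hat q_t$ puts harmonic-number weights on the $q_s$, your bound on $\epsilon_t=F(\hat q_t)-F^\star$ (and, as written, even your derivation of $F(\mu_t)-F^\star\le 4\ln(n)/(t+1)^2$ for part 2) rests on this unavailable control of $F$ at the $q_s$.

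This is exactly the difficulty the paper's proof is engineered to avoid. Part 2 is easy to repair: the standard accelerated guarantee (\Cref{fact:agd_f}, i.e.\ \Cref{fact:dual_phi_simple}) bounds $F(\mu_t)-F^\star$ directly, no Jensen over $q_s$ needed. For part 1, the paper does not upper-bound $F$ at any non-averaged iterate; instead it (i) re-expresses $w_t$ through the averaged iterates, $w_t=\tfrac1\rho Z^\top q_t-\tfrac1\rho Z^\top q_0-\sum_j\tfrac{1}{\rho\lambda_j}Z^\top\mu_{j+1}$ (\Cref{fact:agd_wt_alt}), and (ii) proves the refined lower bound on $-\psi(Zw_t)$ in terms of $\enVert{Z^\top\mu_j}_2^2$ and $\enVert{Z^\top\nu_j}_2^2$ (\Cref{fact:-psi_lb_agd}), so that the margin bound only needs the trivial lower bounds $\enVert{Z^\top\mu_j}_2,\enVert{Z^\top\nu_j}_2\ge\bargamma$ together with the per-step consequence $\tfrac{1}{2\rho\lambda_j}\del{\enVert{Z^\top\mu_{j+1}}_2^2-\bargamma^2}\le\lambda_j\ln(n)$ of \Cref{fact:agd_f}, summing to the $2\ln(n)\ln(t+1)$ term. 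Unless you can actually prove a weighted bound on $F(q_s)$ (which I do not believe follows from your cited tools), you need to switch your decomposition of $w_t$ from the $q_s$-basis to the $\mu_j$-basis as in the paper.
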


Our main result is in the separable case, where \Cref{alg:momentron} can
maximize the margin at a $\tcO(1/t^2)$ rate; by contrast, as mentioned in the
introduction, all prior methods have a $\cO(1/t)$ rate at best.
On the other hand, for any dataset, our algorithm can find an interval of
length $\cO(1/t^2)$ which includes $\bargamma^2$, in particular
certifying non-existence of predictors with margin larger than any value in this interval.
Moreover, as shown in \Cref{fig:nonsep:batch}, \Cref{alg:momentron} can also
achieve good test accuracy even in the nonseparable case; it is an interesting
open problem to build a theory for this phenomenon.

The rest of this section sketches the proof of \Cref{fact:momentron},
with full details deferred to the appendices.
In \Cref{sec:dual_motiv}, we first consider gradient descent without
momentum (i.e., $\beta_t=0$), which motivates consideration of a dual problem.
Then in \Cref{sec:pd_update}, we apply Nesterov acceleration
\citep{nesterov,tseng_agd,allen_linear} to this dual problem, and further derive
the corresponding primal method in \Cref{alg:momentron}, and also prove the second
part of \Cref{fact:momentron}.
Finally, we give a proof sketch of the margin rate in
\Cref{sec:momentron_margin}.

\subsection{Motivation from Gradient Descent}\label{sec:dual_motiv}

We start by giving an alternate presentation and discussion of certain observations
from the prior work of \citet{refined_pd}, which in turn motivates \Cref{alg:momentron}.

Consider gradient descent $w_{t+1}:=w_t-\eta_t\nR(w_t)$.
Define the dual variable by $q_t:=\nabla\psi(Zw_t)$; for the exponential loss,
it is given by $q_t\propto\exp(Zw_t)$, $q_t\in\Delta_n$.
Note that
\begin{align*}
  w_{t+1} & =w_t-\eta_t\nR(w_t) \\
   & =w_t-\eta_t\cR(w_t)\frac{\nR(w_t)}{\cR(w_t)} \\
   & =w_t-\theta_tZ^\top q_t,
\end{align*}
where we let $\theta_t=\eta_t\cR(w_t)$.
Moreover,
\begin{align*}
  q_{t+1}\propto\exp\del{Zw_{t+1}} & =\exp\del{Zw_t-\theta_tZZ^\top q_t} \\
   & \propto q_t\odot\exp\del{-\theta_tZZ^\top q_t} \\
   & =q_t\odot\exp\del{-\theta_t\nphi(q_t)},
\end{align*}
where $\phi(q):=\enVert{Z^\top q}_2^2/2$ and $\odot$ denotes coordinate-wise
product.
In other words, the update from $q_t$ to $q_{t+1}$ is a mirror descent / dual
averaging update with the entropy regularizer on the dual objective $\phi$.

This dual objective $\|Z^\T q\|_2^2/2$ is related to the usual hard-margin dual objective,
and is evocative of the SVM dual problem; this connection is made explicit in \Cref{sec:dual}.
Even without deriving this duality formally, it makes sense that $q_t$ tries to minimize $\phi$,
since $\phi$
encodes extensive structural information of the problem:
for instance, if the dataset is not separable, then $\min_{q\in\Delta_n}\phi(q)=0$ (cf.
\Cref{fact:margin_pd}).
With a proper step size, we can ensure
\begin{align*}
  \phi(q_t)=\frac{\enVert{\nR(w_t)}_2^2}{2\cR(w_t)^2}\to0,\quad\cR(w_t)\textup{ is
  nonincreasing},
\end{align*}
and it follows that $\enVert{\nR(w_t)}_2\to0$.
If the dataset is separable, then $\min_{q\in\Delta_n}\phi(q)=\bargamma^2/2$
(cf. \Cref{fact:margin_pd}), and
\begin{align*}
  Z^\top\barq=\bargamma\baru,\quad\textup{for }\barq\in\argmin_{q\in\Delta_n}\phi(q),
\end{align*}
where $\baru$ is the unique maximum-margin predictor, as defined in \Cref{sec:notation}.
As $q_t$ minimizes $\phi$, the vector $Z^\top q_t$ becomes biased towards $\baru$,
by which we can also show $w_t/\|w_t\|_2\to\baru$.
\citet{refined_pd} use this idea to show a $\cO(1/t)$ margin
maximization rate for primal gradient descent.

The idea in this work is to reverse the above process: we can start from the dual
and aim to minimize $\phi$ more efficiently,
and then take the dual iterates $(q_t)_{t=0}^\infty$
from this more efficient minimization and use them to
construct primal iterates $(w_t)_{t=0}^\infty$ satisfying
$\nabla\psi(Zw_t)=q_t$.
It is reasonable to expect such $w_t$ to maximize the margin faster, and indeed
we show this is true in the following, by applying Nesterov acceleration to the
dual, thanks to the $\ell_1$ smoothness of $\phi$ \citep[Lemma 2.5]{refined_pd}.

\subsection{Primal and Dual Updates}\label{sec:pd_update}

To optimize the dual objective $\phi$, we apply Nesterov's method with the
$\ell_1$ geometry \citep{tseng_agd,allen_linear}.
The following update uses the entropy regularizer; more general updates are
given in \Cref{app_sec:unified}.

Let $\mu_0=q_0:=(\frac{1}{n},\ldots,\frac{1}{n})$.
For $t\ge0$, let $\lambda_t,\theta_t\in(0,1]$, and
\begin{align*}
  \nu_t & :=(1-\lambda_t)\mu_t+\lambda_tq_t, \\
  q_{t+1} & \propto q_t\odot\exp\del{-\frac{\theta_t}{\lambda_t}ZZ^\top\nu_t},\quad q_{t+1}\in\Delta_n, \\
  \mu_{t+1} & :=(1-\lambda_t)\mu_t+\lambda_tq_{t+1}.
\end{align*}
If we just apply the usual mirror descent / dual averaging to $\phi$, then $\phi$
can be minimized at a $\cO(1/t)$ rate \citep[Theorem 2.2]{refined_pd}.
However, using the above accelerated process, we can minimize $\phi$ at a $\cO(1/t^2)$ rate.

\begin{lemma}\label{fact:dual_phi_simple}
  For all $t\ge0$, let $\theta_t=1$ and $\lambda_t=2/(t+2)$.
  Then for all $t\ge1$ and $\barq\in\argmin_{q\in\Delta_n}\phi(q)$,
  \begin{align*}
    \phi(\mu_t)-\phi(\barq)
    \le \frac{4\ln(n)}{(t+1)^2}.
  \end{align*}
\end{lemma}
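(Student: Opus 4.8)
The plan is to prove Lemma~\ref{fact:dual_phi_simple} by establishing the standard accelerated-method estimate sequence / potential argument, specialized to the $\ell_1$ geometry with the entropy regularizer. The key ingredient, already flagged in the text, is that $\phi(q) = \|Z^\top q\|_2^2/2$ is $1$-smooth with respect to the $\ell_1$ norm: since $\|z_i\|_2 \le 1$, for any $q, q'$ we have $\|\nabla\phi(q) - \nabla\phi(q')\|_\infty = \|ZZ^\top(q - q')\|_\infty \le \|q - q'\|_1$. Combined with the fact that the negative entropy $h(q) = \sum_i q_i \ln q_i$ is $1$-strongly convex with respect to $\|\cdot\|_1$ on the simplex (Pinsker), this is exactly the setting in which Nesterov/Tseng acceleration yields an $\mathcal O(1/t^2)$ rate, with the constant governed by the Bregman divergence from the initial point $q_0 = \mu_0$ to the optimum.

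The concrete steps I would carry out: (1) Show the one-step progress bound. Using $1$-smoothness of $\phi$ and the definition $q_{t+1} \propto q_t \odot \exp(-\tfrac{\theta_t}{\lambda_t} ZZ^\top \nu_t)$ — i.e. a mirror step from $q_t$ with step size $\theta_t/\lambda_t$ and linear term $\nabla\phi(\nu_t)$ — the standard mirror-descent inequality gives, for every $p \in \Delta_n$,
\begin{align*}
  \tfrac{\theta_t}{\lambda_t}\ip{\nabla\phi(\nu_t)}{\nu_t - p}
  \le \tfrac{\theta_t}{\lambda_t}\del{\phi(\nu_t) - \phi(\mu_{t+1})} + \tfrac{\theta_t}{\lambda_t}\cdot\tfrac{\theta_t}{\lambda_t}\cdot\tfrac12\|\cdot\|^2 + D(p, q_t) - D(p, q_{t+1}),
\end{align*}
where $D$ is the entropy (KL) Bregman divergence; the precise bookkeeping of the smoothness term against the $\lambda_t$-convex combination $\mu_{t+1} = (1-\lambda_t)\mu_t + \lambda_t q_{t+1}$ is what forces the choice $\theta_t \le 1$ and the specific $\lambda_t$. (2) Telescope. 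Define the potential $\Phi_t := A_t(\phi(\mu_t) - \phi(\barq)) + D(\barq, q_t)$ for suitable accumulated weights $A_t$ with $A_{t+1} - A_t = \theta_t A_{t+1} / \text{(something)} $; with $\theta_t = 1$ and $\lambda_t = 2/(t+2)$ the recursion $A_{t+1}\lambda_{t+1}^2 \approx A_t \lambda_t^2$ is satisfied with $A_t \asymp (t+1)^2/4$ (this is the familiar $A_t = t(t+1)/2$-type schedule), so $\Phi_t$ is non-increasing. (3) Conclude. $\Phi_t \le \Phi_0 = D(\barq, q_0)$, and since $q_0$ is uniform, $D(\barq, q_0) = \ln n - h(\barq) \le \ln n$. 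Solving for $\phi(\mu_t) - \phi(\barq) \le \ln(n)/A_t$ and plugging in the explicit $A_t$ gives the claimed $4\ln(n)/(t+1)^2$.

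The main obstacle I anticipate is step (1): getting the one-step inequality with the right constants so that, after telescoping, the smoothness penalty is exactly absorbed (rather than leaving a residual $\sum_t \theta_t^2/\lambda_t^2$ term that would blow up). This is the place where the interplay between the three step-parameters $\theta_t$ (gradient scaling), $\lambda_t$ (interpolation weight), and the implicit mirror step size $\theta_t/\lambda_t$ must be reconciled — concretely, one needs the inequality $A_{t+1}(1-\lambda_{t+1})\le A_t$ together with $A_{t+1}\lambda_{t+1}^2 \le A_t\lambda_t^2 + \theta_t(\cdot)$, and verifying that $\lambda_t = 2/(t+2)$, $\theta_t = 1$ makes both hold simultaneously. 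Since the paper defers to a unified treatment of mirror descent and Nesterov acceleration in Appendix~\ref{app_sec:unified}, I would in fact invoke that general lemma with the specific choice of regularizer (negative entropy, strong convexity modulus $1$), smoothness modulus $1$, and divergence bound $\ln n$, and simply substitute the parameter schedule — reducing the proof to a parameter-checking exercise plus the Pinsker bound $D(\barq, q_0) \le \ln n$.
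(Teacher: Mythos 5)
Your proposal is correct and follows essentially the same route as the paper: the paper also proves this lemma by invoking the general accelerated mirror-descent/dual-averaging bound of \Cref{fact:agd_f} in \Cref{app_sec:unified} (built on the one-step inequality and the telescoping of the weighted potential, using $1$-smoothness of $\phi$ w.r.t.\ $\|\cdot\|_1$ and $1$-strong convexity of the entropy regularizer), then checking that $\lambda_0=1$ and $\lambda_t=2/(t+2)$ satisfy the condition $\tfrac{1}{\lambda_t^2}-\tfrac{1}{\lambda_t}\le\tfrac{1}{\lambda_{t-1}^2}$ and bounding $D_{\psi^*}(\barq,q_0)\le\ln(n)$ for the uniform $q_0$. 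The only difference is cosmetic bookkeeping (your $A_t$ weights versus the paper's $\theta_t/\lambda_t^2$ coefficients), so no gap remains beyond the parameter check you already flag.
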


Next we construct corresponding primal variables $(w_t)_{t=0}^\infty$ such that
$\nabla\psi(Zw_t)=q_t$.
(We do not try to make $\nabla\psi(Zw_t)=\nu_t$ or $\mu_t$, since only $q_t$ is
constructed using a mirror descent / dual averaging update.)
Let $w_0:=0$, and for $t\ge0$, let
\begin{align}\label{eq:primal_var}
  w_{t+1}:=w_t-\frac{\theta_t}{\lambda_t}Z^\top\nu_t.
\end{align}
We can verify that $q_t$ is indeed the dual variable to $w_t$, in the sense that
$\nabla\psi(Zw_t)=q_t$: this is true by definition at $t=0$, since
$\nabla\psi(Zw_0)=\nabla\psi(0)=q_0$.
For $t\ge0$, we have
\begin{align*}
  q_{t+1} & \propto q_t\odot\exp\del{-\frac{\theta_t}{\lambda_t}ZZ^\top\nu_t} \\
   & \propto\exp(Zw_t)\odot\exp\del{-\frac{\theta_t}{\lambda_t}ZZ^\top\nu_t} \\
   & =\exp\del{Z\del{w_t-\frac{\theta_t}{\lambda_t}Z^\top\nu_t}}=\exp(Zw_{t+1}).
\end{align*}
In addition, we have the following characterization of $w_t$ based on a momentum
term, giving rise to the earlier \cref{eq:nes}.
\begin{lemma}\label{fact:wt_characterization_simple}
  For all $\lambda_t,\theta_t\in(0,1]$, if $\lambda_0=1$, then for all $t\ge0$,
  \begin{align*}
    w_{t+1}=w_t-\theta_t\del{g_t+Z^\top q_t},
  \end{align*}
  where $g_0:=0$, and for $t\ge1$,
  \begin{align*}
    g_t:=\frac{\lambda_{t-1}(1-\lambda_t)}{\lambda_t}\del{g_{t-1}+Z^\top q_t}.
  \end{align*}
  Specifically, for $\lambda_t=2/(t+2)$, it holds that
  \begin{align*}
    \frac{\lambda_{t-1}(1-\lambda_t)}{\lambda_t}=\frac{t}{t+1},\quad\textup{and}\quad g_t=\sum_{j=1}^{t}\frac{j}{t+1}Z^\top q_j,
  \end{align*}
  and $Z^\top\mu_t=2g_t/t$.
\end{lemma}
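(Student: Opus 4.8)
\emph{Proof idea.} The plan is to isolate a single invariant linking the momentum vector $g_t$ to the averaged dual iterate $\mu_t$, namely
\begin{align*}
  g_t = \frac{1-\lambda_t}{\lambda_t}\, Z^\top\mu_t \qquad\text{for all } t\ge 0,
\end{align*}
and then read off everything else by elementary algebra. First I would note that this invariant is \emph{equivalent} to the claimed momentum form of the update: since $\nu_t=(1-\lambda_t)\mu_t+\lambda_t q_t$, we have $\tfrac{1}{\lambda_t}Z^\top\nu_t=\tfrac{1-\lambda_t}{\lambda_t}Z^\top\mu_t+Z^\top q_t$, so the primal update \cref{eq:primal_var} reads $w_{t+1}=w_t-\theta_t\del{\tfrac{1-\lambda_t}{\lambda_t}Z^\top\mu_t+Z^\top q_t}$, which coincides with $w_{t+1}=w_t-\theta_t\del{g_t+Z^\top q_t}$ exactly when the invariant holds. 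Hence it suffices to establish the invariant.

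I would prove the invariant by induction on $t$. For the base case $t=0$, the assumption $\lambda_0=1$ gives $\tfrac{1-\lambda_0}{\lambda_0}=0$, matching $g_0=0$ (and consistent with $\mu_0=q_0$). For the inductive step, rewrite the hypothesis at $t-1$ as $(1-\lambda_{t-1})Z^\top\mu_{t-1}=\lambda_{t-1}g_{t-1}$; combining this with the definition $\mu_t=(1-\lambda_{t-1})\mu_{t-1}+\lambda_{t-1}q_t$ and applying $Z^\top$ yields $Z^\top\mu_t=\lambda_{t-1}\del{g_{t-1}+Z^\top q_t}$. Multiplying by $\tfrac{1-\lambda_t}{\lambda_t}$ gives $\tfrac{1-\lambda_t}{\lambda_t}Z^\top\mu_t=\tfrac{\lambda_{t-1}(1-\lambda_t)}{\lambda_t}\del{g_{t-1}+Z^\top q_t}$, which is precisely the stated definition of $g_t$. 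This closes the induction and establishes the general momentum characterization for arbitrary $\lambda_t,\theta_t\in(0,1]$ with $\lambda_0=1$.

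Finally, for the schedule $\lambda_t=2/(t+2)$ I would just substitute: $\tfrac{\lambda_{t-1}(1-\lambda_t)}{\lambda_t}=\tfrac{2}{t+1}\cdot\tfrac{t}{t+2}\cdot\tfrac{t+2}{2}=\tfrac{t}{t+1}$; the closed form $g_t=\sum_{j=1}^t\tfrac{j}{t+1}Z^\top q_j$ follows from a second one-line induction on $g_t=\tfrac{t}{t+1}\del{g_{t-1}+Z^\top q_t}$ (equivalently $\tfrac{t+1}{t}g_t=g_{t-1}+Z^\top q_t$, so the coefficient of $Z^\top q_j$ rescales from $\tfrac{j}{t}$ to $\tfrac{j}{t+1}$ and a new term $\tfrac{t}{t+1}Z^\top q_t$ appears); and $Z^\top\mu_t=2g_t/t$ is just the invariant together with $\tfrac{1-\lambda_t}{\lambda_t}=t/2$. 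I do not anticipate a genuine obstacle here — this is the familiar rewriting of an accelerated (dual-averaging) scheme in momentum form, and the argument is a routine two-step induction. The only point demanding care is index bookkeeping: the $\mu$-recursion advances using $\lambda_{t-1}$ whereas the primal update at step $t$ uses $\lambda_t$, and $g_0$ is defined directly rather than through the recursion, so the base case must be checked against that convention.
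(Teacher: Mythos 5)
Your proof is correct and essentially matches the paper's: the paper inducts on the equivalent invariant $g_t=Z^\top\bigl(\tfrac{1}{\lambda_t}\nu_t-q_t\bigr)$, which by $\nu_t=(1-\lambda_t)\mu_t+\lambda_t q_t$ is the same as your $g_t=\tfrac{1-\lambda_t}{\lambda_t}Z^\top\mu_t$, and the intermediate identity $Z^\top\mu_t=\lambda_{t-1}(g_{t-1}+Z^\top q_t)$ that you obtain inside your induction is exactly the paper's auxiliary relation (proved there by a separate induction). The closing computations for $\lambda_t=2/(t+2)$ are identical.
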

Consequently, with $\lambda_t=2/(t+2)$, the primal iterate defined by
\cref{eq:primal_var} coincides with the iterate given by \Cref{alg:momentron}
with $\beta_t=t/(t+1)$.

Additionally, \Cref{fact:dual_phi_simple,fact:wt_characterization_simple}
already prove the second part of \Cref{fact:momentron}, since
$\phi(\mu_t)=4\|g_t\|_2^2/(2t^2)$ by \Cref{fact:wt_characterization_simple},
while $\phi(\barq)=\bargamma^2/2$ by \Cref{fact:margin_pd}.

\subsection{Margin Analysis}\label{sec:momentron_margin}

Now we consider the margin maximization result of \Cref{fact:momentron}.
The function $\psi$ will be important here, since it gives a smoothed
approximation of the margin: recall that $\psi(Zw)$ is defined as
\begin{align*}
  \psi(Zw)=\ell^{-1}\del{\sum_{i=1}^{n}\ell\del{\langle z_i,w\rangle}}.
\end{align*}
Since $\ell$ is increasing, we have
\begin{align*}
  -\psi(Zw)\le & -\ell^{-1}\del{\max_{1\le i\le n}\ell\del{\langle z_i,w\rangle}} \\
   & =-\ell^{-1}\del{\ell\del{\max_{1\le i\le n}\langle z_i,w\rangle}} \\
   & =-\max_{1\le i\le n}\langle z_i,w\rangle.
\end{align*}
As a result, to prove a lower bound on $\margin{w_t}$, we only need to prove a
lower bound on $-\psi(Zw_t)/\|w_t\|_2$, and it would be enough if we have a
lower bound on $-\psi(Zw_t)$ and an upper bound on
$\|w_t\|_2$.

Below is our lower bound on $-\psi$ for \Cref{alg:momentron}.
Its proof is based on a much finer analysis of dual Nesterov, and uses both
primal and dual smoothness.
\begin{lemma}\label{fact:-psi_lb_agd_simple}
  Let $\theta_t=1$ for all $t\ge0$, and $\lambda_0=1$, then for all $t\ge1$,
  \begin{align*}
    -\psi(Zw_t)\ge & \ -\psi(Zw_0)+\frac{1}{2\lambda_{t-1}^2}\enVert{Z^\top\mu_t}_2^2 \\
     & \ +\sum_{j=1}^{t-1}\frac{1}{2}\del{\frac{1}{\lambda_{j-1}^2}-\frac{1-\lambda_j}{\lambda_j^2}}\enVert{Z^\top\mu_j}_2^2 \\
     & \ +\sum_{j=0}^{t-1}\frac{1}{2\lambda_j}\enVert{Z^\top\nu_j}_2^2.
  \end{align*}
\end{lemma}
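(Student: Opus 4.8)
Write $\xi_t := Zw_t$, so that by the construction of the primal variables $q_t = \nabla\psi(\xi_t)$, and since $\theta_t = 1$ the primal update is $\xi_{t+1} - \xi_t = -\tfrac{1}{\lambda_t}ZZ^\top\nu_t$. The plan is to derive an \emph{exact} one-step expansion of $-\psi(\xi_{t+1})$, lower-bound its error term using the mirror-descent step defining $q_{t+1}$, then telescope and regroup the resulting sum via the convex-combination identities relating $\nu_t$, $\mu_t$, $q_t$.

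For the first step, the key fact is that for $\psi = \ln\sum_i e^{(\cdot)_i}$ the Bregman divergence of $\psi$ equals the $\mathrm{KL}$ divergence of the associated gradients: $\psi(\xi_{t+1}) - \psi(\xi_t) - \langle\nabla\psi(\xi_t), \xi_{t+1} - \xi_t\rangle = \mathrm{KL}(q_t\,\|\,q_{t+1})$ (this is the Fenchel duality between $\psi$ and its conjugate, the negative entropy on $\Delta_n$, and can be checked directly from the definition of $\psi$). Hence
\[
  -\psi(\xi_{t+1}) = -\psi(\xi_t) + \frac{1}{\lambda_t}\langle Z^\top q_t, Z^\top\nu_t\rangle - \mathrm{KL}(q_t\,\|\,q_{t+1}).
\]
It will matter that this is an identity: replacing $\mathrm{KL}(q_t\|q_{t+1})$ by the $\ell_\infty$-smoothness estimate $\tfrac{1}{2\lambda_t^2}\|Z^\top\nu_t\|_2^2$ is too lossy and loses the $\|Z^\top\nu_j\|_2^2$ terms from the final bound.

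Next, $q_{t+1} \propto q_t\odot\exp(-\tfrac{1}{\lambda_t}ZZ^\top\nu_t)$ is the entropic mirror-descent step $\argmin_{q\in\Delta_n}\{\tfrac{1}{\lambda_t}\langle\nabla\phi(\nu_t), q\rangle + \mathrm{KL}(q\,\|\,q_t)\}$, with $\nabla\phi(\nu_t) = ZZ^\top\nu_t$; its first-order optimality condition plus the Bregman three-point identity, evaluated at the test point $q = q_t$, give $\mathrm{KL}(q_t\|q_{t+1}) \le -\tfrac{1}{\lambda_t}\langle ZZ^\top\nu_t, q_{t+1} - q_t\rangle - \mathrm{KL}(q_{t+1}\|q_t)$. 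Substituting this into the identity above, the two occurrences of the inner product with $Z^\top\nu_t$ merge, and using Pinsker's inequality together with $\|Z^\top v\|_2 \le \|v\|_1$ (valid since $\|z_i\|_2\le1$) to bound $\mathrm{KL}(q_{t+1}\|q_t) \ge \tfrac12\|Z^\top q_{t+1} - Z^\top q_t\|_2^2$ yields the per-step bound
\[
  -\psi(\xi_{t+1}) \ge -\psi(\xi_t) + \frac{1}{\lambda_t}\langle Z^\top\nu_t, Z^\top q_{t+1}\rangle + \frac12\|Z^\top q_{t+1} - Z^\top q_t\|_2^2.
\]

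Finally I would sum this over $j = 0,\dots,t-1$, eliminate $Z^\top q_{j+1}$ via $\lambda_j Z^\top q_{j+1} = Z^\top\mu_{j+1} - (1-\lambda_j) Z^\top\mu_j$, and expand $\langle Z^\top\nu_j, Z^\top\mu_{j+1}\rangle$ and $\langle Z^\top\nu_j, Z^\top\mu_j\rangle$ by polarization, using $Z^\top\nu_j - Z^\top\mu_{j+1} = \lambda_j(Z^\top q_j - Z^\top q_{j+1})$ and $Z^\top\nu_j - Z^\top\mu_j = \lambda_j(Z^\top q_j - Z^\top\mu_j)$. After collecting terms, the $\tfrac12\|Z^\top q_{j+1} - Z^\top q_j\|_2^2$ contributions cancel exactly against a negative square produced by polarization, the $\|Z^\top\nu_j\|_2^2$ coefficients collapse to $\tfrac{1}{2\lambda_j}$, the $\|Z^\top\mu_j\|_2^2$ coefficients telescope into precisely the coefficients in the statement (the $j=0$, i.e.\ $\mu_0$, term vanishing because $\lambda_0=1$), and what remains is an extra $\sum_{j=0}^{t-1}\tfrac{1-\lambda_j}{2}\|Z^\top(q_j - \mu_j)\|_2^2 \ge 0$ that is simply dropped. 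The only genuinely non-routine step is the pairing in the last two paragraphs: one must keep $D_\psi$ as the exact $\mathrm{KL}(q_t\|q_{t+1})$ (so it cancels the primal $\mathrm{KL}$) and \emph{simultaneously} retain $\mathrm{KL}(q_{t+1}\|q_t)$ from the mirror step (so Pinsker produces exactly the square that the polarization step needs to cancel); everything else is careful index bookkeeping.
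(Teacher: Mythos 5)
Your proposal is correct, and every step checks out: the Bregman identity $D_\psi(Zw_{t+1},Zw_t)=\mathrm{KL}(q_t\,\|\,q_{t+1})$ holds for ln-sum-exp, the optimality/three-point bound at the test point $q_t$ is valid since all iterates are strictly positive, and the polarization bookkeeping does collapse to exactly the stated coefficients, with the $\mu_0$ term killed by $\lambda_0=1$ and a leftover $\sum_j\tfrac{1-\lambda_j}{2}\enVert{Z^\top(q_j-\mu_j)}_2^2\ge 0$ dropped. In substance you arrive at the same per-step inequality the paper telescopes, namely $\psi(Zw_j)-\psi(Zw_{j+1})\ge\tfrac{1}{\lambda_j^2}\phi(\mu_{j+1})-\tfrac{1-\lambda_j}{\lambda_j^2}\phi(\mu_j)+\tfrac{1}{\lambda_j}\phi(\nu_j)$ (your version is slightly stronger by the extra nonnegative square), but the route is different in how that inequality is obtained: the paper reuses the general accelerated mirror-descent estimate \cref{eq:agd_f_iter} evaluated at the comparator $\barq$, and then cancels the $\barq$-dependence through the conjugacy identity \cref{eq:-psi_lb_tmp2} relating $D_{\psi^*}(\barq,q_t)-D_{\psi^*}(\barq,q_{t+1})$ to $\psi(Zw_t)-\psi(Zw_{t+1})$; you never introduce a comparator, working purely in the primal with the exact KL identity, Pinsker plus $\enVert{Z^\top v}_2\le\enVert{v}_1$ (which are precisely the $1/\rho$-strong convexity of $\psi^*$ and $\ell_1$-smoothness of $\phi$ that the paper cites, with $\rho=1$), and direct polarization in place of convexity/smoothness arguments for the quadratic $\phi$. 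What your version buys is a self-contained, identity-driven proof whose cancellations are transparent; what it gives up is the generality of the paper's argument, which goes through verbatim for any $\rho$-smooth $\psi$ (e.g., the logistic loss, where the KL-specific facts and $\rho=1$ no longer apply) and reuses the Appendix~\ref{app_sec:unified} machinery needed elsewhere.
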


Additionally, here are our bounds on $\|w_t\|_2$.
\begin{lemma}\label{fact:wt_norm_simple}
  Let $\theta_t=1$ for all $t\ge0$, then
  \begin{align*}
    \sum_{j=0}^{t-1}\frac{\bargamma}{\lambda_j}\le\|w_t\|_2\le \sum_{j=0}^{t-1}\frac{1}{\lambda_j}\enVert{Z^\top\nu_j}_2.
  \end{align*}
\end{lemma}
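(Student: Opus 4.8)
The plan is to unfold the primal recursion and then compare $w_t$ against the maximum-margin direction $\baru$; both bounds then fall out quickly.

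Since $\theta_t=1$, the update in \cref{eq:primal_var} reads $w_{t+1}=w_t-\frac{1}{\lambda_t}Z^\top\nu_t$, and because $w_0=0$ this telescopes to $w_t=-\sum_{j=0}^{t-1}\frac{1}{\lambda_j}Z^\top\nu_j$. The upper bound is then immediate from the triangle inequality. For the lower bound I would use that $\|w_t\|_2\ge\ip{w_t}{\baru}$ by Cauchy--Schwarz, since $\|\baru\|_2=1$ (if $\bargamma=0$ the lower bound is vacuous, so assume $\bargamma>0$ and take $\baru$ to be the unit-norm max-margin predictor of \Cref{sec:notation}). Expanding, $\ip{w_t}{\baru}=-\sum_{j=0}^{t-1}\frac{1}{\lambda_j}\ip{Z^\top\nu_j}{\baru}=-\sum_{j=0}^{t-1}\frac{1}{\lambda_j}\ip{\nu_j}{Z\baru}$. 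Because $\baru$ attains margin $\bargamma$, we have $y_i\ip{x_i}{\baru}\ge\bargamma$ for all $i$, i.e.\ every coordinate of $Z\baru$ is at most $-\bargamma$; and since each $\nu_j$ lies in the probability simplex, $\ip{\nu_j}{Z\baru}\le-\bargamma$. Summing yields $\ip{w_t}{\baru}\ge\bargamma\sum_{j=0}^{t-1}1/\lambda_j$, which is exactly the claimed bound.

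The one step that needs a short argument is that $\nu_j\in\Delta_n$ for all $j$: this follows by induction, using $\lambda_t\in(0,1]$ and the fact that $q_{t+1}$ is explicitly renormalized into $\Delta_n$, so that $\mu_{t+1}=(1-\lambda_t)\mu_t+\lambda_t q_{t+1}$ and hence $\nu_{t+1}$ are convex combinations of simplex vectors. Beyond that bookkeeping — and keeping careful track of the sign flip $z_i=-y_ix_i$ when translating the margin condition on $\baru$ — there is no real obstacle: the lemma is essentially a telescoping estimate plus a one-line duality bound, and the only ``idea'' is to recognize that the correct test vector for the lower bound is $\baru$ itself.
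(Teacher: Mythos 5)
Your proof is correct and follows essentially the same route as the paper: the upper bound via the triangle inequality on the unrolled update $w_t=-\sum_{j=0}^{t-1}\lambda_j^{-1}Z^\top\nu_j$, and the lower bound by testing against $\baru$ and using $\langle -z_i,\baru\rangle\ge\bargamma$ together with $\nu_j$ lying in the simplex (the paper states this as $\nu_j>0$ and $\|\nu_j\|_1\ge1$ to also cover the logistic-loss case, but for the exponential loss your inductive $\nu_j\in\Delta_n$ argument is the same thing). No gaps.
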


With \Cref{fact:-psi_lb_agd_simple,fact:wt_norm_simple}, we can prove
\Cref{fact:momentron}.
Here we show a weaker result which gives $1/t^2$ convergence to $\bargamma/2$;
its proof is also part of the full proof of \Cref{fact:momentron}, but much
simpler.
The remaining proof of \Cref{fact:momentron} is deferred to
\Cref{app_sec:momentron}.

\begin{proposition}[weaker version of \Cref{fact:momentron}]\label{fact:momentron_half}
  With $\theta_t=1$ and $\lambda_t=2/(t+2)$, we have
  \begin{align*}
    \margin{w_t}\ge \frac{\bargamma}{2}-\frac{4\ln(n)}{\bargamma(t+1)^2}.
  \end{align*}
\end{proposition}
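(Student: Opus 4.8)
The plan is to combine the three preceding lemmas directly. First I would recall from the discussion at the end of \Cref{sec:momentron_margin} that $\margin{w_t} \ge -\psi(Zw_t)/\|w_t\|_2$, so it suffices to lower bound $-\psi(Zw_t)$ and upper bound $\|w_t\|_2$, both in terms of the quantities $\|Z^\top\mu_j\|_2$ and $\|Z^\top\nu_j\|_2$ that appear in \Cref{fact:-psi_lb_agd_simple,fact:wt_norm_simple}. With $\lambda_t = 2/(t+2)$ one has $1/\lambda_j = (j+2)/2$, so the coefficients become explicit; in particular the middle sum in \Cref{fact:-psi_lb_agd_simple} has coefficient $\frac12(1/\lambda_{j-1}^2 - (1-\lambda_j)/\lambda_j^2)$, and one checks (this is the kind of routine algebra I would not grind through) that this equals $(j+1)/4$ — i.e., it is positive — while the leading term is $\frac{1}{2\lambda_{t-1}^2}\|Z^\top\mu_t\|_2^2 = \frac{(t+1)^2}{8}\|Z^\top\mu_t\|_2^2$ and $\psi(Zw_0) = \psi(0) = \ln n$.

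The key simplification for this weaker statement is that I only need to keep the single leading term. Dropping the (nonnegative) middle and final sums in \Cref{fact:-psi_lb_agd_simple} gives
\begin{align*}
  -\psi(Zw_t) \ge -\ln n + \frac{(t+1)^2}{8}\enVert{Z^\top\mu_t}_2^2.
\end{align*}
For the norm bound, I would use the upper bound in \Cref{fact:wt_norm_simple}, $\|w_t\|_2 \le \sum_{j=0}^{t-1}\frac{1}{\lambda_j}\|Z^\top\nu_j\|_2$, but this still involves the $\nu_j$'s. To get a clean bound I would instead relate everything to $\|Z^\top\mu_t\|_2$. Here is where I would invoke the convergence of the accelerated dual: by \Cref{fact:dual_phi_simple}, $\phi(\mu_t) - \phi(\barq) \le 4\ln(n)/(t+1)^2$, and $\phi(\barq) = \bargamma^2/2$ by \Cref{fact:margin_pd}, while $\phi(\mu_t) = \|Z^\top\mu_t\|_2^2/2$; hence
\begin{align*}
  \enVert{Z^\top\mu_t}_2^2 \ge \bargamma^2 + \frac{\bargamma^2(t+1)^2 - 8\ln(n)}{(t+1)^2}\cdot 0,
\end{align*}
wait — more simply, $\|Z^\top\mu_t\|_2 \ge \bargamma$ from $\phi(\mu_t)\ge\phi(\barq)$. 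I also need an \emph{upper} bound on $\|Z^\top\mu_t\|_2$ to control $-\psi$ and $\|w_t\|_2$ together; since each $q_j \in \Delta_n$ and $\|z_i\|_2\le 1$ we get $\|Z^\top\nu_j\|_2 \le 1$, and $\|Z^\top\mu_t\|_2^2 = 2\phi(\mu_t) \le 2\phi(\barq) + 8\ln(n)/(t+1)^2 = \bargamma^2 + 8\ln(n)/(t+1)^2$.

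Putting the pieces together: using $\|Z^\top\nu_j\|_2\le 1$ in the lower bound of \Cref{fact:wt_norm_simple} rewritten via $Z^\top\mu_t = 2g_t/t$ and the relation $w_t = \sum_j \frac{\theta_j}{\lambda_j} Z^\top\nu_j$ — actually the cleanest route is: $\|w_t\|_2 \le \sum_{j=0}^{t-1}\frac{j+2}{2}\cdot 1 \le (t+1)^2/2$ roughly, but this loses too much; instead I expect one should pair $-\psi(Zw_t)\ge -\ln n + \frac{(t+1)^2}{8}\bargamma^2$ (using $\|Z^\top\mu_t\|_2\ge\bargamma$) with a bound $\|w_t\|_2 \le \frac{(t+1)^2}{4}\bargamma + (\text{lower order})$ obtained by bounding $\|Z^\top\nu_j\|_2 \le \|Z^\top\mu_j\|_2 + \lambda_j\|Z^\top q_j - Z^\top\mu_j\|_2$-type estimates and then using $\|Z^\top\mu_j\|_2 \le \bargamma + O(\sqrt{\ln n}/(j+1))$. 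Dividing, $\margin{w_t} \ge \frac{(t+1)^2\bargamma^2/8 - \ln n}{(t+1)^2\bargamma/4 + \cdots} \approx \frac{\bargamma}{2} - \frac{4\ln n}{\bargamma(t+1)^2}$ after a short manipulation. The main obstacle I anticipate is precisely this last step: getting the constant $1/2$ (rather than something smaller) requires the $\|w_t\|_2$ upper bound to have leading term exactly $\frac{(t+1)^2}{4}\bargamma$, which in turn forces one to show $\|Z^\top\nu_j\|_2$ is close to $\bargamma$ on average — i.e., to feed the dual convergence rate from \Cref{fact:dual_phi_simple} back into \Cref{fact:wt_norm_simple} carefully, tracking the $\sqrt{\ln n}$ error terms so they collect into the stated $4\ln(n)/(\bargamma(t+1)^2)$. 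Everything else is bookkeeping with the explicit values $1/\lambda_j = (j+2)/2$.
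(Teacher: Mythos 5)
There is a genuine gap, and it sits exactly where you flagged it. Your decomposition keeps only the leading $\frac{1}{2\lambda_{t-1}^2}\enVert{Z^\top\mu_t}_2^2$ term from \Cref{fact:-psi_lb_agd_simple} and discards the sum $\sum_{j=0}^{t-1}\frac{1}{2\lambda_j}\enVert{Z^\top\nu_j}_2^2$, which then forces you to prove that the denominator $\sum_{j=0}^{t-1}\frac{1}{\lambda_j}\enVert{Z^\top\nu_j}_2$ has leading term exactly $\frac{\bargamma(t+1)^2}{4}$ by arguing that $\enVert{Z^\top\nu_j}_2$ is close to $\bargamma$. This cannot yield the stated error term: even using \Cref{fact:dual_phi_simple}, the best per-step control is $\enVert{Z^\top\mu_j}_2\le\bargamma+O(\sqrt{\ln n}/j)$ (and the extra $\lambda_j q_j$ piece of $\nu_j$ contributes another $O(1)$ per step after multiplying by $1/\lambda_j$), so the denominator overshoots $\frac{\bargamma(t+1)^2}{4}$ by an amount of order $t\sqrt{\ln n}$. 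Dividing, your route gives $\margin{w_t}\ge\frac{\bargamma}{2}-O\del{\frac{\sqrt{\ln n}+1}{t}}$, i.e.\ a $1/t$ error, not the claimed $\frac{4\ln(n)}{\bargamma(t+1)^2}$. The error terms do not ``collect'' into a $1/t^2$ quantity; they genuinely accumulate at the slower rate. (A small side error: with $\lambda_t=2/(t+2)$ the middle coefficient $\frac12\del{\lambda_{j-1}^{-2}-(1-\lambda_j)\lambda_j^{-2}}$ equals $\tfrac18$, not $(j+1)/4$; this is harmless since you only need nonnegativity.)

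The paper's proof makes the opposite choice of which terms to keep, and that is the whole trick: drop the $\mu$ terms (using only that their coefficients are nonnegative) and \emph{retain} $\sum_{j=0}^{t-1}\frac{1}{2\lambda_j}\enVert{Z^\top\nu_j}_2^2$ in the lower bound on $-\psi(Zw_t)$. This sum has the same structure as the upper bound $\|w_t\|_2\le\sum_{j=0}^{t-1}\frac{1}{\lambda_j}\enVert{Z^\top\nu_j}_2$ from \Cref{fact:wt_norm_simple}, so the ratio is bounded termwise: since $\nu_j\in\Delta_n$ gives $\enVert{Z^\top\nu_j}_2\ge\bargamma$ by \Cref{fact:margin_pd}, each summand satisfies $\frac{1}{2\lambda_j}\enVert{Z^\top\nu_j}_2^2\ge\frac{\bargamma}{2}\cdot\frac{1}{\lambda_j}\enVert{Z^\top\nu_j}_2$, so the ratio is at least $\frac{\bargamma}{2}$ with \emph{no} error at all. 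The only remaining loss is $\psi(Zw_0)=\ln(n)$ divided by $\|w_t\|_2\ge\sum_{j=0}^{t-1}\frac{\bargamma}{\lambda_j}\ge\frac{\bargamma(t+1)^2}{4}$, which is precisely $\frac{4\ln(n)}{\bargamma(t+1)^2}$; notably, \Cref{fact:dual_phi_simple} is not needed for this weaker proposition. (The refined pairing that does exploit the $\mu$ terms is what the paper reserves for the full \Cref{fact:momentron}.) To repair your argument you would need to abandon the ``numerator keeps only $\mu_t$, denominator must match $\bargamma$'' strategy and instead pair the $\nu_j$ sums as above.
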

\begin{proof}
  With $\lambda_t=2/(t+2)$, it holds that
  \begin{align*}
    \frac{1}{\lambda_{j-1}^2}-\frac{1-\lambda_j}{\lambda_j^2}\ge0,
  \end{align*}
  therefore
  \begin{align}\label{eq:momentron_half_tmp1}
    -\psi(Zw_t)\ge-\psi(Zw_0)+\sum_{j=0}^{t-1}\frac{1}{2\lambda_j}\enVert{Z^\top\nu_j}_2^2.
  \end{align}
  Then \cref{eq:momentron_half_tmp1} and \Cref{fact:wt_norm_simple} imply
  \begin{align}\label{eq:momentron_half_tmp3}
    \frac{\psi(Zw_0)-\psi(Zw_t)}{\|w_t\|_2}\ge \frac{\sum_{j=0}^{t-1}\frac{1}{2\lambda_j}\enVert{Z^\top\nu_j}_2^2}{\sum_{j=0}^{t-1}\frac{1}{\lambda_j}\enVert{Z^\top\nu_j}_2}\ge \frac{\bargamma}{2},
  \end{align}
  since $\enVert{Z^\top\nu_j}_2\ge\bargamma$ (cf. \Cref{fact:margin_pd}).
  On the other hand, \Cref{fact:wt_norm_simple} and $\lambda_t=2/(t+2)$ imply
  \begin{align*}
    \|w_t\|_2\ge \sum_{j=0}^{t-1}\frac{\bargamma}{\lambda_j}\ge \frac{\bargamma(t+1)^2}{4},
  \end{align*}
  and thus
  \begin{align}\label{eq:momentron_half_tmp4}
    \frac{\psi(Zw_0)}{\|w_t\|_2}=\frac{\ln(n)}{\|w_t\|_2}\le \frac{4\ln(n)}{\bargamma(t+1)^2}.
  \end{align}
  It then follows from \cref{eq:momentron_half_tmp3,eq:momentron_half_tmp4} that
  \begin{align*}
    \margin{w_t}\ge\frac{-\psi(Zw_t)}{\|w_t\|_2}\ge \frac{\bargamma}{2}-\frac{4\ln(n)}{\bargamma(t+1)^2}.
  \end{align*}
\end{proof}

\section{Analysis of \Cref{alg:adaptron}}\label{sec:adapt}

Since \Cref{alg:momentron} is derived from dual Nesterov, it can also be run
completely in the dual, meaning primal iterates and in particular the primal dimensionality
never play a role.
However, this dual version would require calculating $ZZ^\top q_t$,
which in the kernel setting requires $n^2$ kernel calls.
In \Cref{alg:adaptron}, we replace $Z^\top q_t$ with a single column $z_{i_t}$ of $Z^\top$, where
$i_t$ is sampled from $q_t\in\Delta_n$.  This sampling allows us to make only
$n$ kernel calls per iteration, rather than $n^2$ as in \Cref{alg:momentron}.

\begin{algorithm}[t!]
  \caption{}
  \label{alg:adaptron}
  \begin{algorithmic}
    \STATE {\bfseries Input:} data matrix $Z\in\R^{n\times d}$, step size
    $(\theta_t)_{t=0}^\infty$, \\ momentum factor $(\beta_t)_{t=0}^\infty$.
    \STATE {\bfseries Initialize:}
    $w_0=g_{-1}=(0,\ldots,0)\in\R^d$, \\
    $q_0=(\frac{1}{n},\ldots,\frac{1}{n})\in\Delta_n$.
    \FOR{$t = 0,1,2,\ldots$}
      \STATE Sample $i_t\sim q_t$.
      \STATE $g_t\gets\beta_t\del{g_{t-1}+z_{i_t}}$.
      \STATE $w_{t+1}\gets w_t-\theta_t\del{g_t+z_{i_t}}$.
      \STATE $q_{t+1}\propto\exp(Zw_{t+1})$, and $q_{t+1}\in\Delta_n$.
    \ENDFOR
  \end{algorithmic}
\end{algorithm}

Unfortunately, we do not have a general theory for \Cref{alg:adaptron}.
Instead, as follows, we provide here an analysis with momentum disabled, meaning $\beta_t=0$,
and a small constant step size $\theta_t$.
\begin{theorem}\label{fact:primal_margin_sgd}
  Given $\epsilon>0$ and $\delta\in(0,1)$, let
  \begin{align*}
    t=\max\del{\left\lceil \frac{32\ln(n)+64\ln(2/\delta)}{\bargamma^2\epsilon^2}\right\rceil,\left\lceil\frac{32}{\delta\epsilon^2}\right\rceil},
  \end{align*}
  and $\theta_j=\sqrt{\ln(n)/t}$ for $0\le j<t$, then with probability
  $1-\delta$,
  \begin{align*}
    \margin{w_t}\ge\bargamma-\epsilon.
  \end{align*}
\end{theorem}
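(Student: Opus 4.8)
The plan is to observe that \Cref{alg:adaptron} with $\beta_t=0$ is exactly stochastic mirror descent on the dual objective $\phi$ over $\Delta_n$, the exact direction $Z^\top q_t$ being replaced by a single sample $z_{i_t}$ with $\mathbb{E}[z_{i_t}\mid\cF_t]=Z^\top q_t$, where $\cF_t:=\sigma(i_0,\dots,i_{t-1})$, and then to rerun the momentum-free specialization of the margin argument of \Cref{sec:momentron_margin}, paying martingale-concentration tolls for the sampling. With $\beta_t=0$ and constant step $\theta_j\equiv\theta:=\sqrt{\ln(n)/t}$, the iterates are $w_{j+1}=w_j-\theta z_{i_j}$ with $w_0=0$, hence $w_j=-\theta\sum_{k<j}z_{i_k}$, while $q_j=\nabla\psi(Zw_j)\propto\exp(Zw_j)$ and $i_j\mid\cF_j\sim q_j$. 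Since $-\psi(Zw_t)\le-\max_i\ip{z_i}{w_t}$, as in \Cref{sec:momentron_margin} it suffices to lower bound $-\psi(Zw_t)$ and upper bound $\|w_t\|_2$.

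For the numerator I would use primal smoothness of $\psi$ (the $\ln$-sum-exp is $1$-smooth w.r.t. $\|\cdot\|_\infty$) together with $\|\theta Zz_{i_j}\|_\infty\le\theta$ (unit norms, Cauchy--Schwarz):
\begin{align*}
  \psi(Zw_{j+1})\le\psi(Zw_j)-\theta\ip{Z^\top q_j}{z_{i_j}}+\frac{\theta^2}{2},
\end{align*}
which telescopes, using $\psi(Zw_0)=\ln n$, to $-\psi(Zw_t)\ge-\ln n+\theta\sum_{j<t}\ip{Z^\top q_j}{z_{i_j}}-t\theta^2/2$. For the denominator the essential trick — and the reason the rate reaches $\bargamma$ rather than $\bargamma^2$ — is to avoid the trivial bound $\|w_t\|_2\le t\theta$ and instead write $\sum_{j<t}z_{i_j}=\sum_{j<t}Z^\top q_j+M_t$ with $M_t:=\sum_{j<t}(z_{i_j}-Z^\top q_j)$ a vector-valued martingale, so that $\|w_t\|_2\le\theta\del{\sum_{j<t}\|Z^\top q_j\|_2+\|M_t\|_2}$. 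One then invokes the deterministic facts $\bargamma\le\|Z^\top q_j\|_2\le1$ (the lower bound from \Cref{fact:margin_pd}, the upper bound by convexity of $\|\cdot\|$), which give $\sum_{j<t}\|Z^\top q_j\|_2^2\ge\bargamma\sum_{j<t}\|Z^\top q_j\|_2\ge\bargamma^2 t$.

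Next I would control the two stochastic errors. The scalar martingale $\sum_{j<t}\del{\ip{Z^\top q_j}{z_{i_j}}-\|Z^\top q_j\|_2^2}$ has increments bounded by $2$, so Azuma--Hoeffding gives $\sum_{j<t}\ip{Z^\top q_j}{z_{i_j}}\ge\sum_{j<t}\|Z^\top q_j\|_2^2-\sqrt{8t\ln(2/\delta)}$ with probability at least $1-\delta/2$. For $M_t$, orthogonality of martingale increments gives $\mathbb{E}\|M_t\|_2^2=\sum_{j<t}\mathbb{E}\|z_{i_j}-Z^\top q_j\|_2^2\le t$, so Markov's inequality gives $\|M_t\|_2\le\sqrt{2t/\delta}$ with probability at least $1-\delta/2$; it is precisely this weaker second-moment control of the vector martingale (rather than an exponential tail) that contributes the $\lceil 32/(\delta\epsilon^2)\rceil$ term to the horizon, whereas Azuma contributes the $\ln(2/\delta)$ inside the other term. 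Then union bound over the two events.

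On the good event, writing $S:=\sum_{j<t}\|Z^\top q_j\|_2\in[\bargamma t,\,t]$ and combining the two bounds,
\begin{align*}
  \margin{w_t}\ge\frac{-\ln n+\theta\bargamma S-\theta\sqrt{8t\ln(2/\delta)}-t\theta^2/2}{\theta\del{S+\sqrt{2t/\delta}}}\ge\bargamma-\sqrt{\frac{2}{\delta t}}-\frac{\ln n+\theta\sqrt{8t\ln(2/\delta)}+t\theta^2/2}{\theta\bargamma t},
\end{align*}
where the second step uses $S\ge\bargamma t$ (and that the numerator is nonnegative for $t$ as large as stated, which is routine to check). Substituting $\theta=\sqrt{\ln(n)/t}$ collapses the subtracted terms to $\sqrt{2/(\delta t)}+\bargamma^{-1}t^{-1/2}\bigl(\tfrac32\sqrt{\ln n}+\sqrt{8\ln(2/\delta)}\bigr)$, and the two halves of the stated $t$ make these at most $\epsilon/2$ each. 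I expect the main obstacle to be exactly the denominator: one must resist the naive $\|w_t\|_2\le t\theta$ and route through $\sum_j\|Z^\top q_j\|_2$ via $M_t$, then absorb its fluctuation $\|M_t\|_2$ — for which only a $1/\delta$-type tail is readily available — against the deterministic $\sum_j\|Z^\top q_j\|_2\ge\bargamma t$; matching the precise constants in the stated horizon is the one remaining piece of bookkeeping.
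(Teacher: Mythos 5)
Your proposal is correct and follows essentially the same route as the paper's proof: a lower bound on $-\psi(Zw_t)$ from $\ell_\infty$-smoothness of $\psi$ plus Azuma's inequality on the scalar martingale $\sum_j \theta_j\langle Z^\top q_j, Z^\top q_j - z_{i_j}\rangle$, an upper bound on $\|w_t\|_2$ via the vector martingale $\sum_j\theta_j(z_{i_j}-Z^\top q_j)$ controlled only in second moment through Markov's inequality (which is exactly what produces the $\lceil 32/(\delta\epsilon^2)\rceil$ term), and the final combination using $\|Z^\top q_j\|_2\ge\bargamma$. The one bookkeeping difference is that the paper splits the fraction before inserting bounds on $\|w_t\|_2$ — the nonnegative part is divided by the upper bound, while the subtracted error terms are divided by the deterministic lower bound $\|w_t\|_2\ge\langle w_t,\baru\rangle\ge\bargamma\sqrt{t\ln(n)}$ — which removes the numerator-nonnegativity check you defer as "routine"; that check can in fact fail for $\epsilon$ large relative to $\bargamma$ (where the theorem is still non-vacuous since margins can be negative), so you should adopt the paper's split (or add the lower bound on $\|w_t\|_2$ itself) rather than divide the whole, possibly negative, numerator by the upper bound on $\|w_t\|_2$.
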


The proof of \Cref{fact:primal_margin_sgd} is similar to the proof of \Cref{fact:momentron},
but must additionally produce high-probability bounds on
$-\psi(Zw_t)$ and $\|w_t\|_2$; details are deferred to \Cref{app_sec:adapt}.

\begin{figure}[t!]
  \centering
  \subfigure[Margins.]{%
    \includegraphics[width=\linewidth]{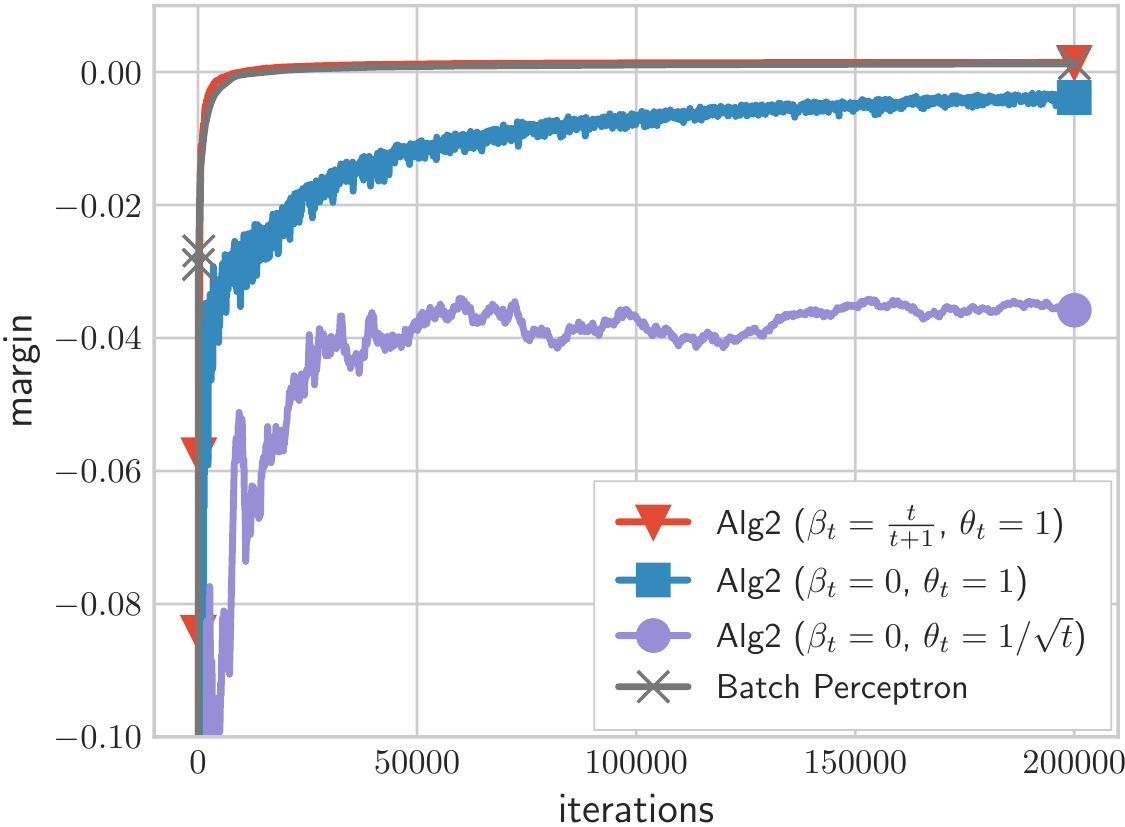}%
  }
  \subfigure[Test error.]{%
    \includegraphics[width=\linewidth]{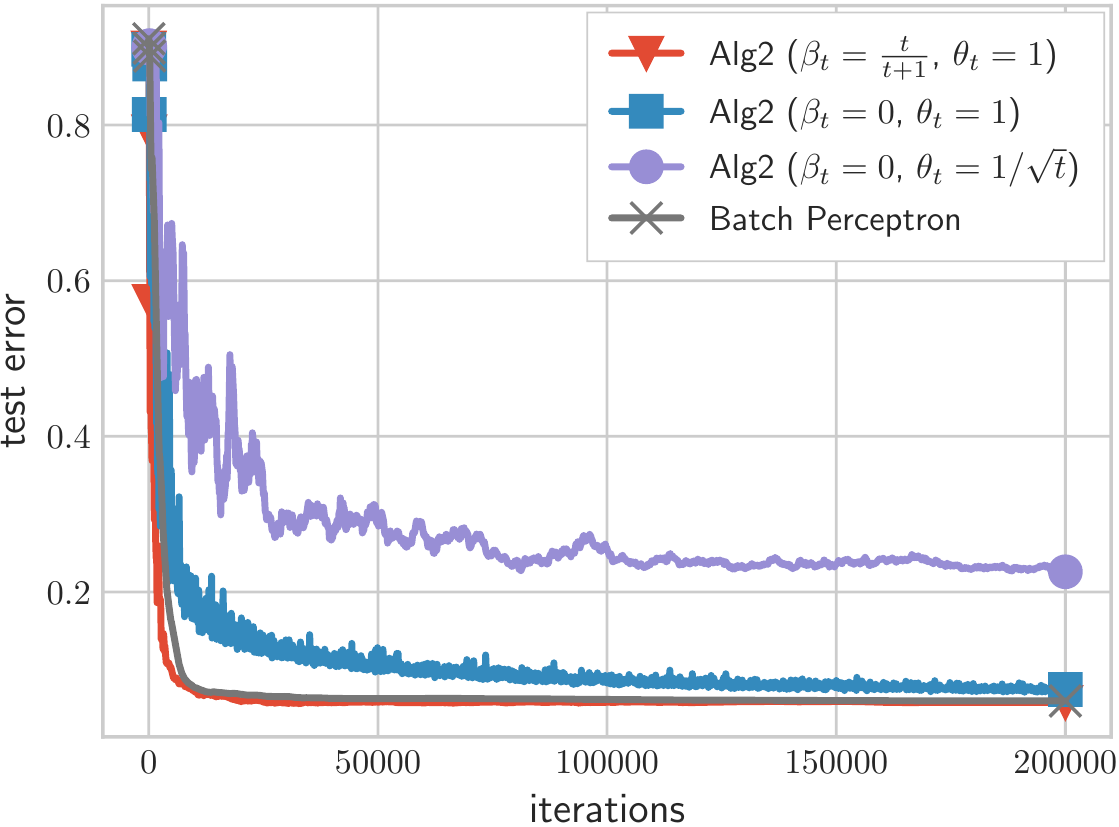}%
  }
  \caption{Margin maximization performance of various methods requiring $\cO(n)$ kernel
    evaluations per iteration.  The batch perceptron is slightly beaten by
    \Cref{alg:adaptron} using the momentum and step size parameters from
    \Cref{alg:momentron}, which is only provided here as a heuristic.  By contrast, the
    theoretically-justified parameters, as analyzed in \Cref{fact:primal_margin_sgd},
    are slower than batch perceptron.
    The data here is the full \texttt{mnist} data, with features given by the initial kernel
    of a 2-homogeneous network of width 128 (cf.~\Cref{sec:app:emp}).}
  \label{fig:sep:adapt}
\end{figure}

Although we do not have a convergence analysis for \Cref{alg:adaptron} with a
nonzero momentum, it works well in practice, as verified on the full
\texttt{mnist} data, shown in \Cref{fig:sep:adapt}.
Still with $\beta_t=t/(t+1)$, \Cref{alg:adaptron} can slightly beat the batch
perceptron method, which is the fastest prior algorithm in the hard-margin
kernel SVM setting.
(Other classical methods, such as stochastic dual coordinate ascent
\citep{sdca}, are focused on the nonseparable soft-margin SVM setting.)

\section{Other Exponentially-Tailed Losses}\label{sec:discussion}

Here we discuss the extension to other exponentially-tailed losses,
such as the logistic loss in the case of binary classification,
and to multiclass losses.

\subsection{Binary Classification}\label{sec:exp_tail}

In previous sections, we focused on the exponential loss.
Our methods can also be applied to other strictly decreasing losses,
such as the logistic loss $\ell(z):=\ln(1+e^z)$, simply by replacing
$Z^\top q_t$ in \Cref{alg:momentron} with $Z^\top\nabla\psi(Zw_t)$, where
$\psi$ is still defined by \cref{eq:psi}.

In the proof of \Cref{fact:momentron}, we only use two properties of
$\psi$: (i) $\psi$ is $\rho$-smooth with respect to the $\ell_\infty$ norm, and
(ii) $\enVert{\nabla\psi}_1\ge1$.
These two properties hold with $\rho=1$ for the exponential loss, and with
$\rho=n$ for the logistic loss \citep[Lemma 5.3, Lemma D.1]{refined_pd}.
Therefore we can use the same analysis to prove a $\tcO(1/t^2)$ margin
maximization rate for the logistic loss; details are given in
\Cref{app_sec:momentron}.

However, the margin rate would additionally depend on $\rho$, which is $n$ for
the logistic loss.
Such a bad dependency on $n$ is probably due to the aggressive initial step size:
from \cref{eq:psi_grad}, we know that $\nabla\psi(Zw_t)$ is just $\nR(w_t)$
normalized by $\ell'\del{\psi(Zw)}/n$.
However, this quantity is at most $1/n$ for the logistic loss, even at
initialization.
It is an interesting open problem to find a better initial step size.

\subsection{Multiclass Classification}\label{sec:multiclass}

Suppose now that inputs $(x_i)_{i=1}^N$ have multiclass labels $(c_i)_{i=1}^N$,
meaning $c_i \in \{1,\ldots, k\}$.  The standard approach to multiclass linear prediction
associates a linear predictor $u_j$ for each class $j\in \{1,\ldots,k\}$; collecting
these as columns of a matrix $U\in \R^{d\times k}$, the multiclass prediction is
\[
  x \mapsto \argmax_{c\in\{1,\ldots,k\}} x^\top U\ve_c,
\]
and letting $\|U\|_\tF$ denote the Frobenius norm,
the margin of $U$ and maximum margin are respectively
\begin{align*}
  \mulgamma(U)
  &:=
  \frac{
  \min_i
  \min_{c \neq c_i}
\del{  x^\top U\ve_{c_i} -  x^\top U \ve_c }}
  {\|U\|_\tF},
  \\
  \bmgamma
  &:=
  \max_{\|U\|_\tF \leq 1}
  \mulgamma(U),
\end{align*}
with edge case $\mulgamma(0) = 0$ as before.

We now show how to reduce this case to the binary case and allow the application
of \Cref{alg:momentron} and its analysis in \Cref{fact:momentron}.
The standard construction
of multiclass losses uses exactly the differences of labels as in the preceding definition
of $\mulgamma$ \citep{zhang_multiclass,tewari_bartlett_multiclass};
that is, define a multiclass risk as
\[
  \mulcR(U) = \frac 1 N \sum_{i=1}^N \sum_{j \neq c_i}
  \ell\del{ x_i^\top U \ve_j - x_i^\top U\ve_{c_i} }.
\]
To rewrite this in our notation as a prediction problem defined by a single matrix $Z$,
define $n := N(k-1)$,
let $F : \R^{d\times k} \to \R^{dk}$ be any fixed flattening of a $d\times k$ matrix into a
vector of length $dk$,
and let $\pi : \{ 1,\ldots, N \} \times \{ 1, \ldots, k-1 \} \to \{1,\ldots,n\}$ be any bijection
between the $N$ original examples and their $n$ new fake counterparts defined as follows:
for each
example $i$ and incorrect label $j\neq c_i$, define $z_{\pi(i,j)} := x_i (\ve_{c_i} - \ve_{j})^\top/\sqrt{2}$,
and let $Z\in\R^{n \times dk}$ be the matrix where row $\pi(i,j)$ is the flattening
$F(z_{\pi(i,j)})^\top$; then, equivalently,
\[
  \frac 1 {k-1} \mulcR(U) = \frac 1 n
\sum_{i=1}^n \ell\del{ F(U)^\top F(z_{\pi(i,j)}) }.
\]
In particular, it suffices to consider a flattened weight vector $w = F(U) \in \R^{dk}$,
and invoke the algorithm and analysis from \Cref{sec:momentron},
with the preceding matrix $Z$.

\begin{theorem}
  \label{fact:multiclass}
  Let a multiclass problem $\{(x_i,c_i)\}_{i=1}^N$ be given with
  maximum multiclass margin $\bmgamma > 0$. Then the corresponding matrix $Z$ as defined
  above has binary margin $\bar\gamma := \bmgamma / \sqrt{2} > 0$.
  Moreover, letting $w_t$ denote the output of \Cref{alg:momentron} when run on this $Z$
  as in \Cref{fact:momentron}, meaning exponential loss $\ell$ and $\beta_t := t / (t+1)$
  and $\theta_t := 1$,
  for every $t\geq 1$
  the un-flattened output $U_t := F^{-1}(w_t)$
  satisfies
  \[
    \mulgamma(U_t)
    \geq
    \bmgamma-\frac{4\del{1+\ln(n)}\del{1+2\ln(t+1)}}{\bmgamma(t+1)^2}.
  \]
\end{theorem}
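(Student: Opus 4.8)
The plan is to obtain \Cref{fact:multiclass} directly from the separable case of \Cref{fact:momentron}, by checking that the flattening construction in \Cref{sec:multiclass} identifies the multiclass problem with the binary $Z$-problem in a way that rescales every margin by a fixed factor of $\sqrt{2}$. First I would verify that $Z$ is an admissible data matrix in the sense of \Cref{sec:notation}: each row of $Z$ is the flattening of some $z_{\pi(i,j)} = x_i(\ve_{c_i}-\ve_j)^\top/\sqrt{2}$, and since a flattening just rearranges entries it is a linear isometry, so that row has $\ell_2$-norm $\|z_{\pi(i,j)}\|_\tF = \|x_i\|_2\,\|\ve_{c_i}-\ve_j\|_2/\sqrt{2} = \|x_i\|_2 \le 1$ --- which is exactly why the $\sqrt{2}$ is there. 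Consequently \Cref{alg:momentron} and \Cref{fact:momentron} apply to $Z$ verbatim, with $n = N(k-1)$.

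The core step is the margin correspondence. Using once more that the flattening $F$ is an isometry, $\|F(U)\|_2 = \|U\|_\tF$ and $\langle F(U), F(V)\rangle = \langle U, V\rangle_\tF$ for all $U,V \in \R^{d\times k}$; specializing to $V = z_{\pi(i,j)}$ and reading off the sign convention $z_i = -y_ix_i$ of \Cref{sec:notation} gives $\langle F(U), F(z_{\pi(i,j)})\rangle = \frac{1}{\sqrt{2}}\del{x_i^\top U\ve_j - x_i^\top U\ve_{c_i}}$. Since $-\max_k a_k = \min_k(-a_k)$, the numerator $-\max_{\pi(i,j)}\langle F(U), F(z_{\pi(i,j)})\rangle$ of the binary margin equals $\frac{1}{\sqrt{2}}\min_i\min_{c\neq c_i}\del{x_i^\top U\ve_{c_i} - x_i^\top U\ve_c}$, i.e.\ $\frac{1}{\sqrt{2}}$ times the numerator in the definition of $\mulgamma(U)$; dividing by $\|F(U)\|_2 = \|U\|_\tF$ yields $\margin{F(U)} = \mulgamma(U)/\sqrt{2}$ for every $U$, with the edge case $U = 0$ immediate. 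Taking the supremum over $\|U\|_\tF \le 1$ on one side and over $\|w\|_2 \le 1$ on the other then gives $\bargamma = \bmgamma/\sqrt{2}$; in particular $\bmgamma > 0$ forces $\bargamma > 0$, so $Z$ is separable and the first part of \Cref{fact:momentron} applies.

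It then remains only to push the bound back through the correspondence. Running \Cref{alg:momentron} on $Z$ with $\theta_t = 1$ and $\beta_t = t/(t+1)$ produces iterates $w_t$, and the first part of \Cref{fact:momentron} gives, for $t \ge 1$, $\margin{w_t} \ge \bargamma - 4(1+\ln n)(1+2\ln(t+1))/\del{\bargamma(t+1)^2}$. Setting $U_t := F^{-1}(w_t)$ and substituting $\margin{w_t} = \mulgamma(U_t)/\sqrt{2}$ and $\bargamma = \bmgamma/\sqrt{2}$, then clearing the factors of $\sqrt{2}$, gives the claimed lower bound on $\mulgamma(U_t)$. I expect no genuine analytic obstacle --- all the real work is already inside \Cref{fact:momentron} --- and the only steps that require care are the two inside the core step: fixing the orientation of $\ve_{c_i}-\ve_j$ so that the $\min$/$\max$ over the $n$ fake examples aligns with $\mulgamma$ (consistently with the $z_i = -y_ix_i$ convention), and then tracking the two separate factors of $\sqrt{2}$ --- one from the margin identity, one from the $\bargamma$ in the denominator of the error term --- consistently all the way to the final constant.
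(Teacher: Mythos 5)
Your proposal follows essentially the same route as the paper's proof: verify via the $\sqrt{2}$ normalization that the rows of $Z$ have $\ell_2$-norm at most $1$, establish the pointwise identity $\margin{F(U)}=\mulgamma(U)/\sqrt{2}$ (hence $\bargamma=\bmgamma/\sqrt{2}>0$), and then push the separable-case bound of \Cref{fact:momentron} through this correspondence. One bookkeeping remark: the literal substitution yields an error term $8\del{1+\ln(n)}\del{1+2\ln(t+1)}/\del{\bmgamma(t+1)^2}$ rather than the stated constant $4$, since $\sqrt{2}/\bargamma=2/\bmgamma$ --- but this factor-of-two slip appears in the paper's own final display as well, so your attempt matches the paper's argument in this respect too.
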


Due to proceeding by reduction, the guarantees of \Cref{sec:adapt} also hold for an analogous
multiclass version of \Cref{alg:adaptron}.  Indeed, \Cref{alg:adaptron}, with the aggressive
(heuristic) parameters $\beta_t = t/(t+1)$ and $\theta_t = 1$ proved effective in practice,
and was used in the experiments of \Cref{fig:sep:adapt}, as well as the upcoming
\Cref{fig:kernel_evolution}.

One issue that arises in these reduction-based implementations is avoiding explicitly writing
down $Z$ or even individual rows of $Z$, which have $dk$ elements.  Instead, note that sampling
from $q$ as in \Cref{alg:adaptron} now returns both an example index $i$, as well as an incorrect
label $j\neq c_i$.  From here, updates to just the two columns of $U$ corresponding to
$j$ and $c_i$ can be constructed.

\section{Application: Deep Network Kernel Evolution}\label{sec:emp}

As an application of these fast margin-maximization methods, we study the evolution
of kernels encountered during deep network training.  Specifically, consider
the \texttt{cifar10} dataset, which has 50,000 input images in 10 classes; a standard
deep network architecture for this problem is the AlexNet \citep{imagenet_sutskever},
which has both convolutional, dense linear, and various nonlinear layers.

Let $v_t$ denote the AlexNet parameters encountered at epoch $t$
of training on \texttt{cifar10} with a standard stochastic gradient method,
and let $A(x;v_t)$ denote the prediction of AlexNet on input $x$
with these parameters $v_t$.
From here, we can obtain a feature vector $\nabla_{v} A(x;v_t)$, and use it to
construct a matrix $Z$ to plug in to our methods; when $t=0$, this corresponds to the
Neural Tangent Kernel (NTK)
\citep{jacot_ntk,li_liang_nips,du_2_opt}, but here we are also interested in later kernels,
meaning $t>0$, each of which are sometimes called an \emph{after kernel}
\citep{long_afterkernel}, and which in the homogeneous case are known
to converge to a single limiting kernel \citep{dir_align}.
(To handle multiclass output, we simply flatten the Jacobian; as another technical point,
we $\ell_2$-normalize the features to further simplify training and the selection of step sizes.)

For any fixed $t$, we thus obtain a linear prediction problem with rows of matrix $Z$
given by the features $\nabla_{v} A(x;v_t)$ (with additional care for class labels, as in the reductions defined in \Cref{sec:multiclass}),
and can use \Cref{alg:adaptron} to quickly
determine the maximum margin.  \Cref{fig:kernel_evolution:b} presents an experiment
that is consistent with standard beliefs: as $t$ increases, the test error of the corresponding
maximum-margin (kernel) predictor decreases.  In these experiments, the AlexNet training is run
until the features converge, and the test error of the final maximum-margin kernel predictor
is identical to that of the final deep network.

A more interesting example is given in \Cref{fig:kernel_evolution:c}: a case where
feature learning does not help.  All that differs
between \Cref{fig:kernel_evolution:b} and \Cref{fig:kernel_evolution:c} is the choice of random
seed.

A key point is that the AlexNet in both experiments was trained with only 128 training points (the
testing set had the usual 10,000 images, but test error is unsurprisingly large).
The idea is that the feature learning implicit in
deep network training can overfit with such small amounts of data.

Of course, 128 examples is not a standard deep learning regime; these figures merely illustrate
that feature learning may fail, not that it always fails.  It is an interesting open
question to study this phenomenon in realistic scenarios.

\begin{figure}[t!]
  \centering
  \vspace{0.2em}
  \subfigure[Random seed $100$.]{%
  \label{fig:kernel_evolution:b}%
    \includegraphics[width=\linewidth]{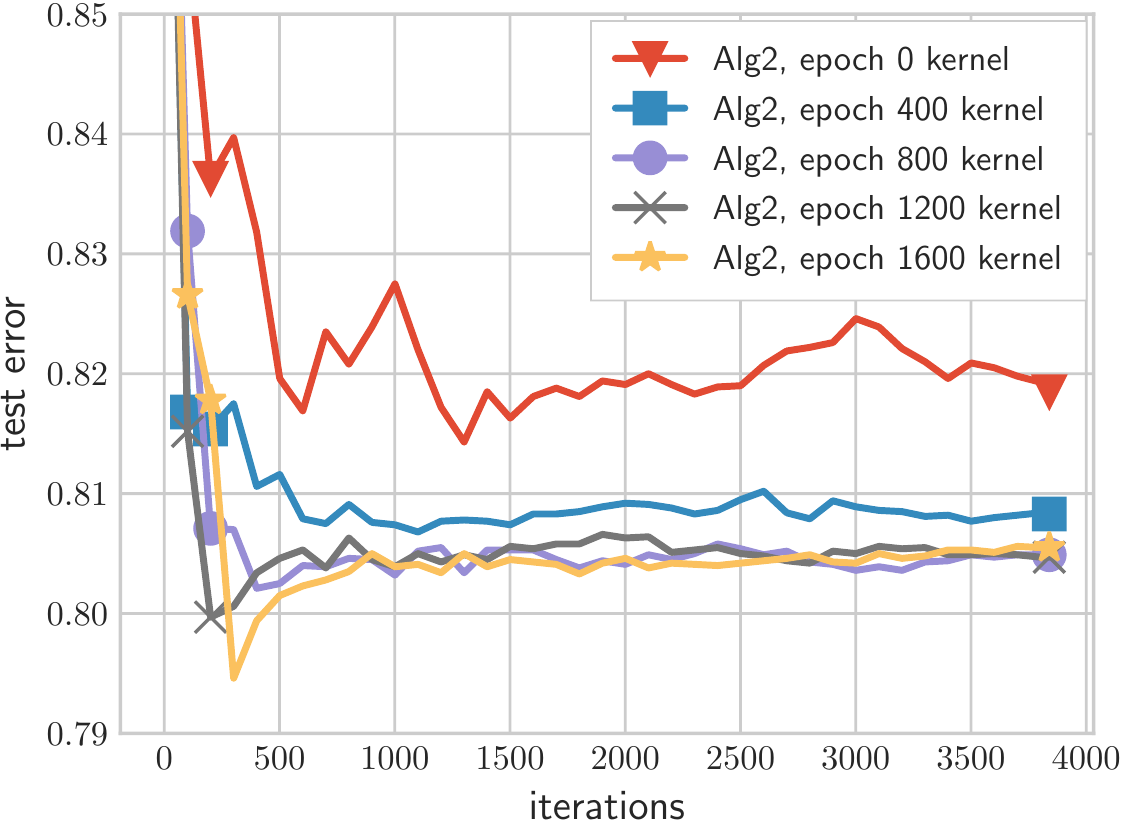}%
  }
  \subfigure[Random seed $13579$.]{%
  \label{fig:kernel_evolution:c}%
    \includegraphics[width=\linewidth]{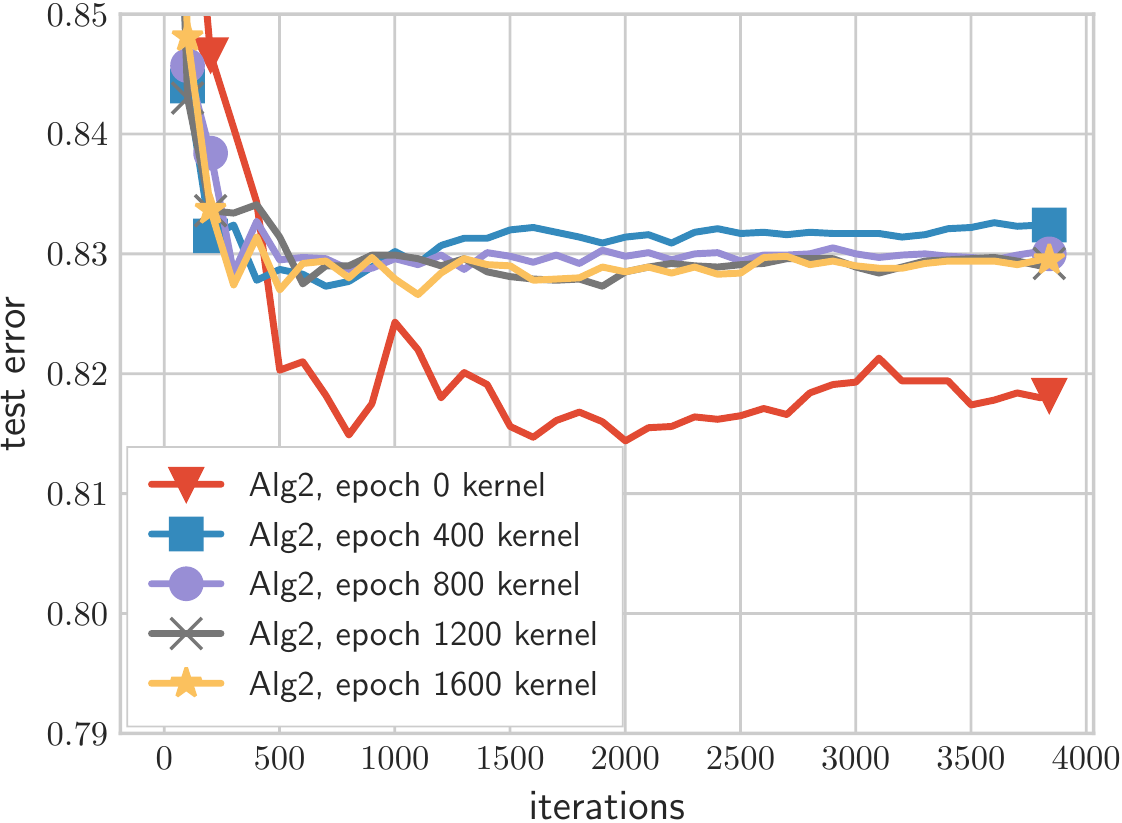}%
  }
  \vspace{-1em}
  \caption{Test error curves of kernel predictors trained with \Cref{alg:adaptron},
    using kernels from different epochs of standard deep network training.  Please see
    \Cref{sec:emp,sec:app:emp}
    for details; the short summary is that changing the random seed
  suffices to change whether kernel features improve or not.}
  \label{fig:kernel_evolution}
\end{figure}

\section{Concluding Remarks and Open Problems}
\label{sec:open}

In this work, we gave two new algorithms based on a dual perspective of margin maximization
and implicit bias: a momentum-based method in \Cref{sec:momentron} constructed via translating
dual Nesterov acceleration iterates into the primal, and an adaptive sampling method in
\Cref{sec:adapt} which aims for greater per-iteration efficiency in the kernel case.

Turning first to \Cref{alg:momentron},
its derivation exposes a connection between Nesterov acceleration in the dual and
momentum in the primal.
Does this connection exist more generally, namely in other optimization problems?

A second open problem is to formally analyze \Cref{alg:adaptron} with
momentum.
As demonstrated empirically in \Cref{fig:sep:adapt}, it can work well,
whereas our analysis disables momentum.

On the empirical side, the small-scale experiments of \Cref{sec:emp}
scratched the surface of situations where feature learning can fail.
Can this phenomenon be exhibited in more realistic scenarios?

\section*{Acknowledgements}

We thank the reviewers for their comments.
ZJ and MT are grateful for support from the NSF under grant IIS-1750051,
and from NVIDIA under a GPU grant.

\bibliography{bib}
\bibliographystyle{icml2021}

\appendix
\onecolumn

\section{Margins in the Primal and in the Dual}
\label{sec:dual}

For completeness, we explicitly derive the convex dual to the primal margin maximization problem,
which is also explored in prior work.

\begin{lemma}\label{fact:margin_pd}
  It holds that
  \begin{align*}
    \bargamma:=\max_{\|w\|_2\le1}\min_{1\le i\le n}(Zw)_i=\min_{q\in\Delta_n}\enVert{Z^\top q}_2.
  \end{align*}
  In the separable case, $\bargamma>0$, and there exists a unique primal optimal
  solution $\baru$, such that for all dual optimal solution $\barq$, it holds
  that $-Z^\top\barq=\bargamma\baru$.
\end{lemma}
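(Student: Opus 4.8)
The plan is to prove the value identity by a minimax (von Neumann / Sion) argument, and then read off the structural claims in the separable case from the resulting saddle-point conditions.

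First I would rewrite the inner minimum over coordinates as a minimum over the simplex: since a linear functional on $\Delta_n$ attains its minimum at a vertex, $\min_{1\le i\le n}(Zw)_i = \min_{q\in\Delta_n}\langle q, Zw\rangle = \min_{q\in\Delta_n}\langle Z^\top q, w\rangle$. Hence
\[
  \bargamma = \max_{\|w\|_2\le1}\ \min_{q\in\Delta_n}\ \langle Z^\top q, w\rangle ,
\]
a minimax of the bilinear form $(w,q)\mapsto\langle Z^\top q,w\rangle$ over the nonempty compact convex sets $\{w:\|w\|_2\le1\}$ and $\Delta_n$. Von Neumann's minimax theorem (equivalently Sion's, with bilinearity trivially supplying the concavity/convexity and semicontinuity hypotheses) lets me interchange $\max$ and $\min$; then applying $\max_{\|w\|_2\le1}\langle v,w\rangle=\|v\|_2$ with $v=Z^\top q$ gives $\bargamma=\min_{q\in\Delta_n}\|Z^\top q\|_2$. (A harmless substitution $w\mapsto -w$ reconciles this formula with the margin as defined in \Cref{sec:notation}.)

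Second, for the separable case, compactness of both domains and continuity of the objective guarantee a primal optimum $\baru$ and a dual optimum $\barq$, and the minimax equality makes $(\baru,\barq)$ a saddle point; in particular every primal optimum maximizes $w\mapsto\langle -Z^\top\barq,w\rangle$ over the unit ball, for every dual optimum $\barq$. When $\bargamma>0$ we have $\|Z^\top\barq\|_2=\bargamma>0$, so $Z^\top\barq\neq0$ and this maximizer is the single point $-Z^\top\barq/\|Z^\top\barq\|_2$. Therefore any primal optimum equals $-Z^\top\barq/\bargamma$, which at once gives uniqueness of $\baru$ and the claimed identity $-Z^\top\barq=\bargamma\baru$; as a byproduct all dual optima share the same value of $Z^\top\barq$ (alternatively one sees this directly from strict convexity of $\|\cdot\|_2$ along the segment between two dual optima, both of norm $\bargamma$).

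The main point requiring care — more a bookkeeping matter than a genuine obstacle — is the interchangeability of saddle points: I need that a primal optimum of $\max_w\min_q$ is a best response to an \emph{arbitrary} dual optimum, i.e.\ the standard fact that under the minimax equality the set of optimal pairs is the product of the two marginal optimal sets. Verifying the hypotheses of the minimax theorem is immediate since the payoff is bilinear and both feasible sets are nonempty compact convex, so the classical statement applies directly; everything else (existence of optima, the dual-norm formula $\max_{\|w\|_2\le1}\langle v,w\rangle=\|v\|_2$, and $Z^\top\barq\neq0$ from $\bargamma>0$) is routine.
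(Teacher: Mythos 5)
Your argument is correct, but it reaches the lemma by a different route than the paper. The paper proves the value identity via Fenchel strong duality: it computes the conjugate pairs $\iota_{\Delta_n}^*(v)=\max_i v_i$ and $(\|\cdot\|_2)^*=\iota_{\|\cdot\|_2\le1}$, cites Borwein--Lewis for strong duality, reads the relation $-Z^\top\barq=\bargamma\baru$ off the subdifferential optimality condition $Z^\top\barq\in\partial\bigl(\iota_{\|\cdot\|_2\le1}\bigr)(-\baru)$, and then outsources uniqueness of $\baru$ to an external lemma. You instead write $\min_i(Zw)_i=\min_{q\in\Delta_n}\ip{q}{Zw}$, invoke von Neumann/Sion for a bilinear payoff over two nonempty compact convex sets, and then use the standard interchangeability of optimal strategies (any maximin point pairs with any minimax point to form a saddle) so that every primal optimum is a best response to an arbitrary dual optimum; since $\bargamma>0$ forces $Z^\top\barq\neq0$, that best response over the Euclidean ball is a single unit vector, which delivers the identity, uniqueness of $\baru$, and the fact that all dual optima share the same $Z^\top\barq$ in one stroke --- a self-contained argument where the paper needs a citation. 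The two routes are two faces of the same convex duality; yours is more elementary and buys uniqueness for free, while the paper's conjugate bookkeeping matches the machinery it reuses in its dual analysis. One bookkeeping caution: with the payoff $\ip{Z^\top q}{w}$ of your first paragraph, the saddle condition makes the primal optimum maximize $w\mapsto\ip{Z^\top\barq}{w}$, whose maximizer is the negative of the max-margin direction of \Cref{sec:notation}; your switch to $\ip{-Z^\top\barq}{w}$ in the second paragraph is exactly the harmless $w\mapsto-w$ reconciliation you flagged (and which the paper itself makes silently), so it suffices to state once which orientation $\baru$ refers to.
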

\begin{proof}
  Given a convex set $C$, let $\iota_C$ denote the indicator function, i.e.,
  $\iota_C(x)=0$ if $x\in C$, and $\iota_C(x)=\infty$ if $x\not\in C$.
  We note the following convex conjugate pairs:
  \begin{align*}
    \iota_{\Delta_n}^*(v)
    &= \sup_{u\in\Delta_n} \ip{v}{u} = \max_{1\le i\le n} v_i,\\
    (\|\cdot\|_2)^*(q)
    &= \iota_{\|\cdot\|_2 \leq 1}(q).
  \end{align*}
  This gives the Fenchel strong duality \citep[Theorem 3.3.5]{borwein_lewis}
  \begin{align*}
    \min \del{\|Z^\top q\|_2 + \iota_{\Delta_n}(q)}
    &= \max - \iota_{\|\cdot\|_2 \leq 1}(-w) - \iota_\Delta^*(Zw)
    \\
    &= \max\cbr{ - \max_i (Zw)_i : \|w\|_2 \leq 1 }
    \\
    &= \max\cbr{ \min_i (Zw)_i : \|w\|_2 \leq 1 }.
  \end{align*}
  Moreover, for any optimal primal-dual pair $(\baru,\barq)$, we have
  $Z^\top\barq\in\partial\del{\iota_{\|\cdot\|_2 \leq 1}}(-\baru)$, meaning
  $-Z^\top\barq$ and $\baru$ have the same direction.
  Since $\enVert{Z^\top\barq}_2=\bargamma$, we have
  $-Z^\top\barq=\bargamma\baru$.
  The uniqueness of $\baru$ is ensured by \citep[Lemma A.1]{min_norm}.
\end{proof}

\section{A Unified Analysis of Normal and Accelerated Mirror Descent / Dual
Averaging}\label{app_sec:unified}

Consider a convex function $f$, and a convex set $C$, such that $f$ is defined
and $1$-smooth with respect to norm $\|\cdot\|$ on $C$.
Moreover, suppose $\omega:C\to\R$ is differentiable, closed, proper, and
$\alpha$-strongly convex with respect to the same norm $\|\cdot\|$.
We maintain three sequences $q_t,\mu_t,\nu_t$: initialize $\mu_0=q_0\in C$, and
for $t\ge0$, let
\begin{equation}\label{eq:agd_update}
  \begin{split}
    \nu_t & :=(1-\lambda_t)\mu_t+\lambda_tq_t, \\
    q_{t+1} & :=\argmin_{q\in C}\del{f(q_t)+\ip{\nf(\nu_t)}{q-q_t}+\frac{\lambda_t}{\alpha\theta_t}D_\omega(q,q_t)}, \\
    \mu_{t+1} & :=(1-\lambda_t)\mu_t+\lambda_tq_{t+1},
  \end{split}
\end{equation}
where $\lambda_t,\theta_t\in(0,1]$, and
$D_\omega(q,q'):=\omega(q)-\omega(q')-\ip{\nabla\omega(q')}{q-q'}$ denotes the
Bregman distance.

The above update to $q_t$ resembles the mirror descent update.
We can instead use a dual-averaging update, which does not require
differentiability of $\omega$: first note that since $\omega$ is strongly
convex, its convex conjugate $\omega^*$ is smooth
\citep[lemma 2.19]{shalev_online}, and thus defined and differentiable on the
whole Euclidean space.
For any initialization $p_0$, let $q_0:=\nabla\omega^*(p_0)$ and for $t\ge0$,
let
\begin{align}\label{eq:da_agd_update}
  p_{t+1}:=p_t-\frac{\alpha\theta_t}{\lambda_t}\nf(\nu_t),\quad\textup{and}\quad q_{t+1}:=\nabla\omega^*(p_{t+1}).
\end{align}
Note that it is exactly the original update to $q_t$ if for all $t\ge0$ and
$q\in C$, we define
$D_\omega(q,q_t):=\omega(q)-\omega(q_t)-\langle p_t,q-q_t\rangle$.
Below we will analyze this dual-averaging-style update.

The following result is crucial to our analysis.
When $\theta_t=1$, it is basically \citep[eq. (24)]{tseng_agd}, and choosing a
proper $\lambda_t$ would give us acceleration; on the other hand, we further
handle the case of $\lambda_t=1$, when it becomes the usual convergence result
for dual averaging.
\begin{lemma}\label{fact:agd_f_full}
  If $\lambda_t,\theta_t\in(0,1]$ for all $t\ge0$, then for all $t\ge1$ and
  $q\in C$,
  \begin{align*}
    \frac{\theta_{t-1}}{\lambda_{t-1}^2}\del{f(\mu_t)-f(q)}+\sum_{j=1}^{t-1}\del{\frac{\theta_{j-1}}{\lambda_{j-1}^2}-\frac{\theta_j(1-\lambda_j)}{\lambda_j^2}}\del{f(\mu_j)-f(q)}\le & \ \frac{1}{\alpha}\del{D_\omega(q,q_0)-D_\omega(q,q_t)} \\
     & \ +\frac{\theta_0(1-\lambda_0)}{\lambda_0^2}\del{f(\mu_0)-f(q)}.
  \end{align*}
\end{lemma}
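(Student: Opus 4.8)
The plan is to run the standard Tseng-style estimate-sequence argument for accelerated mirror descent, but carrying the extra $\theta_t$ factors through every step so that the final telescoping works even when $\lambda_t$ and $\theta_t$ are chosen independently. The starting point is one step of the dual-averaging / mirror-descent inequality for $q_{t+1}$: by the optimality of $q_{t+1}$ in \cref{eq:da_agd_update} (equivalently, the three-point property of the Bregman divergence $D_\omega$, using $\alpha$-strong convexity of $\omega$), one gets for every $q \in C$
\[
  \frac{\alpha\theta_t}{\lambda_t}\ip{\nf(\nu_t)}{q_{t+1}-q}
  \le D_\omega(q,q_t) - D_\omega(q,q_{t+1}) - D_\omega(q_{t+1},q_t),
\]
and $D_\omega(q_{t+1},q_t) \ge \tfrac{\alpha}{2}\|q_{t+1}-q_t\|^2$.

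Next I would combine this with the $1$-smoothness of $f$ applied along the segment from $\nu_t$ to $\mu_{t+1}$. Since $\mu_{t+1} - \nu_t = \lambda_t(q_{t+1}-q_t)$, smoothness gives
\[
  f(\mu_{t+1}) \le f(\nu_t) + \lambda_t\ip{\nf(\nu_t)}{q_{t+1}-q_t} + \frac{\lambda_t^2}{2}\|q_{t+1}-q_t\|^2,
\]
and convexity of $f$ gives $f(\nu_t) + \ip{\nf(\nu_t)}{(1-\lambda_t)\mu_t + \lambda_t q - \nu_t} \le (1-\lambda_t)f(\mu_t) + \lambda_t f(q)$, i.e. $\ip{\nf(\nu_t)}{q_t - q} \le \tfrac{1-\lambda_t}{\lambda_t}\big(f(\mu_t) - f(\nu_t)\big) + \big(f(q) - f(\nu_t)\big)$ after rearranging using $\nu_t - (1-\lambda_t)\mu_t = \lambda_t q_t$. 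Substituting the mirror-descent bound for $\ip{\nf(\nu_t)}{q_{t+1}-q}$, splitting it as $\ip{\nf(\nu_t)}{q_{t+1}-q_t} + \ip{\nf(\nu_t)}{q_t - q}$, and scaling everything by $\theta_t/\lambda_t$ so that the $\tfrac{\lambda_t^2}{2}\|q_{t+1}-q_t\|^2$ term (with the $\theta_t \le 1$ factor in front) is dominated by the $\tfrac{\alpha}{2}\|q_{t+1}-q_t\|^2$ from $D_\omega(q_{t+1},q_t)$, yields the per-step inequality
\[
  \frac{\theta_t}{\lambda_t^2}\big(f(\mu_{t+1})-f(q)\big)
  \le \frac{\theta_t(1-\lambda_t)}{\lambda_t^2}\big(f(\mu_t)-f(q)\big)
   + \frac{1}{\alpha}\big(D_\omega(q,q_t) - D_\omega(q,q_{t+1})\big).
\]

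Finally I would sum this inequality over $t = 0, 1, \ldots, t-1$. The $D_\omega$ terms telescope to $\tfrac1\alpha(D_\omega(q,q_0) - D_\omega(q,q_t))$, and on the left the coefficient of $f(\mu_j) - f(q)$ for $1 \le j \le t-1$ becomes $\tfrac{\theta_{j-1}}{\lambda_{j-1}^2} - \tfrac{\theta_j(1-\lambda_j)}{\lambda_j^2}$, with the $j=t$ term contributing $\tfrac{\theta_{t-1}}{\lambda_{t-1}^2}(f(\mu_t)-f(q))$ and the $j=0$ term on the right leaving $\tfrac{\theta_0(1-\lambda_0)}{\lambda_0^2}(f(\mu_0)-f(q))$; this is exactly the claimed bound. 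The main obstacle is bookkeeping rather than depth: one must be careful that the quadratic error term $\tfrac{\theta_t}{\lambda_t^2}\cdot\tfrac{\lambda_t^2}{2}\|q_{t+1}-q_t\|^2 = \tfrac{\theta_t}{2}\|q_{t+1}-q_t\|^2$ is genuinely absorbed by $\tfrac{1}{\alpha}\cdot\tfrac{\alpha}{2}\|q_{t+1}-q_t\|^2$ — which is where $\theta_t \le 1$ is used — and that the convexity/smoothness steps are applied at the right points ($\nu_t$ for the linearizations, the segment $\nu_t \to \mu_{t+1}$ for smoothness), since the whole telescoping collapses if the coefficients don't line up. The case $\lambda_t = 1$ requires no special treatment here: the $(1-\lambda_t)$ factors simply kill the $f(\mu_t)$ carry-over terms, recovering the ordinary dual-averaging regret bound.
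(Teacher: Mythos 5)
Your proposal is correct and follows essentially the same route as the paper's proof: one step of the three-point/dual-averaging inequality for $q_{t+1}$, $1$-smoothness of $f$ along the segment $\nu_t\to\mu_{t+1}$ (using $\mu_{t+1}-\nu_t=\lambda_t(q_{t+1}-q_t)$), convexity of $f$ at $\nu_t$ against the combination $(1-\lambda_t)\mu_t+\lambda_t q$, absorption of the quadratic error by $\alpha$-strong convexity of $\omega$ together with $\theta_t\le 1$, and finally multiplication by $\theta_t/\lambda_t$ and telescoping, which is exactly the paper's per-step bound \cref{eq:agd_tmp3} summed over steps. The only blemish is a sign transposition in one intermediate line: your convexity display yields an upper bound on $\ip{\nf(\nu_t)}{q-q_t}$, equivalently a \emph{lower} bound on $\ip{\nf(\nu_t)}{q_t-q}$ (which is what the argument needs, since the mirror-descent step upper-bounds $\ip{\nf(\nu_t)}{q_{t+1}-q}$), and the per-step inequality you then state is the correct consequence of that corrected direction.
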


To prove \Cref{fact:agd_f_full}, we first recall the following standard result
on mirror descent.
\begin{lemma}\label{fact:md_3_point}
  For all $t\ge0$ and $q\in C$,
  \begin{align*}
    \ip{p_t-p_{t+1}}{q_{t+1}-q}=D_\omega(q,q_t)-D_\omega(q,q_{t+1})-D_\omega(q_{t+1},q_t).
  \end{align*}
\end{lemma}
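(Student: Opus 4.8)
The plan is to recognize that \cref{fact:md_3_point} is a purely algebraic identity, provable by directly expanding the right-hand side using the definition of the Bregman distance employed in this section. The one point to keep in mind is that, because $q_t=\nabla\omega^*(p_t)$, we have $p_t\in\partial\omega(q_t)$, so the convention $D_\omega(q,q_t):=\omega(q)-\omega(q_t)-\ip{p_t}{q-q_t}$ is a bona fide Bregman-type distance with $p_t$ playing the role of the (sub)gradient of $\omega$ at $q_t$; in particular no differentiability or smoothness of $\omega$ is needed here, only this convention.

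First I would write out the three distances on the right-hand side explicitly:
\begin{align*}
  D_\omega(q,q_t) &= \omega(q)-\omega(q_t)-\ip{p_t}{q-q_t}, \\
  D_\omega(q,q_{t+1}) &= \omega(q)-\omega(q_{t+1})-\ip{p_{t+1}}{q-q_{t+1}}, \\
  D_\omega(q_{t+1},q_t) &= \omega(q_{t+1})-\omega(q_t)-\ip{p_t}{q_{t+1}-q_t}.
\end{align*}
Forming $D_\omega(q,q_t)-D_\omega(q,q_{t+1})-D_\omega(q_{t+1},q_t)$, every appearance of $\omega(\cdot)$ cancels: the two $\omega(q)$ terms cancel, the $-\omega(q_t)$ from the first line cancels the $+\omega(q_t)$ from the third, and the $+\omega(q_{t+1})$ from the second cancels the $-\omega(q_{t+1})$ from the third.

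What remains is $-\ip{p_t}{q-q_t}+\ip{p_{t+1}}{q-q_{t+1}}+\ip{p_t}{q_{t+1}-q_t}$, in which the two $\ip{p_t}{q_t}$ contributions cancel; collecting the coefficients of $q$ and of $q_{t+1}$ leaves $\ip{p_{t+1}-p_t}{q}+\ip{p_t-p_{t+1}}{q_{t+1}}=\ip{p_t-p_{t+1}}{q_{t+1}-q}$, which is exactly the left-hand side. The argument is a finite chain of cancellations with no genuine obstacle; the only step deserving care is the bookkeeping that the $\omega$ terms and the stray $\ip{p_t}{q_t}$ term vanish, which is why I would carry out the expansion in full rather than quote the familiar three-point identity for differentiable $\omega$.
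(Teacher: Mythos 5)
Your proposal is correct and matches the paper's own proof: both expand the three quantities via the convention $D_\omega(q,q_t)=\omega(q)-\omega(q_t)-\ip{p_t}{q-q_t}$ and conclude by direct cancellation, with your write-up merely spelling out the bookkeeping the paper leaves as ``direct calculation.'' Your remark that no differentiability of $\omega$ is needed, only the dual-averaging convention with $p_t$, is consistent with how the paper sets up this section.
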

\begin{proof}
  Note that
  \begin{align*}
    D_\omega(q,q_t) & =\omega(q)-\omega(q_t)-\langle p_t,q-q_t\rangle, \\
    D_\omega(q,q_{t+1}) & =\omega(q)-\omega(q_{t+1})-\langle p_{t+1},q-q_{t+1}\rangle, \\
    D_\omega(q_{t+1},q_t) & =\omega(q_{t+1})-\omega(q_{t})-\langle p_t,q_{t+1}-q_t\rangle.
  \end{align*}
  The proof is finished by direct calculation.
\end{proof}

Now we are ready to prove \Cref{fact:agd_f_full}.
\begin{proof}[Proof of \Cref{fact:agd_f_full}]
  For any $t\ge0$ and $q\in C$,
  \begin{align}
    f(\nu_t)-f(q) & \le\ip{\nf(\nu_t)}{\nu_t-q} \nonumber \\
     & =\ip{\nf(\nu_t)}{\nu_t-q_t}+\ip{\nf(\nu_t)}{q_t-q} \nonumber \\
     & =\frac{1-\lambda_t}{\lambda_t}\ip{\nf(\nu_t)}{\mu_t-\nu_t}+\ip{\nf(\nu_t)}{q_t-q} \nonumber \\
     & \le \frac{1-\lambda_t}{\lambda_t}\del{f(\mu_t)-f(\nu_t)}+\ip{\nf(\nu_t)}{q_t-q}. \label{eq:agd_tmp1}
  \end{align}
  Moreover,
  \begin{align}
    \ip{\nf(\nu_t)}{q_t-q} & =\ip{\nf(\nu_t)}{q_t-q_{t+1}}+\ip{\nf(\nu_t)}{q_{t+1}-q} \nonumber \\
     & =\ip{\nf(\nu_t)}{q_t-q_{t+1}}+\frac{\lambda_t}{\alpha\theta_t}\langle p_t-p_{t+1},q_{t+1}-q_t\rangle \nonumber \\
     & =\ip{\nf(\nu_t)}{q_t-q_{t+1}}-\frac{\lambda_t}{\alpha\theta_t}D_\omega(q_{t+1},q_t)+\frac{\lambda_t}{\alpha\theta_t}\del{D_\omega(q,q_t)-D_\omega(q,q_{t+1})}, \label{eq:agd_md}
  \end{align}
  where we use \Cref{fact:md_3_point} in the last step.
  Next by $1$-smoothness of $f$ and $\alpha$-strong convexity of $\omega$, we
  have
  \begin{align}
    f(\mu_{t+1})-f(\nu_t) & \le\ip{\nf(\nu_t)}{\mu_{t+1}-\nu_t}+\frac{1}{2}\|\mu_{t+1}-\nu_t\|^2 \nonumber \\
     & =\lambda_t\ip{\nf(\nu_t)}{q_{t+1}-q_t}+\frac{\lambda_t^2}{2}\|q_{t+1}-q_t\|^2 \nonumber \\
     & \le\lambda_t\ip{\nf(\nu_t)}{q_{t+1}-q_t}+\frac{\lambda_t^2}{2\theta_t}\|q_{t+1}-q_t\|^2 \nonumber \\
     & \le\lambda_t\ip{\nf(\nu_t)}{q_{t+1}-q_t}+\frac{\lambda_t^2}{\alpha\theta_t}D_\omega(q_{t+1},q_t), \nonumber
  \end{align}
  and therefore
  \begin{align}\label{eq:agd_tmp2}
    \ip{\nf(\nu_t)}{q_t-q_{t+1}}-\frac{\lambda_t}{\alpha\theta_t}D_\omega(q_{t+1},q_t)\le \frac{1}{\lambda_t}\del{f(\nu_t)-f(\mu_{t+1})}.
  \end{align}
  Then \cref{eq:agd_tmp1,eq:agd_md,eq:agd_tmp2} imply
  \begin{align}
    f(\nu_t)-f(q) & \le\ip{\nf(\nu_t)}{\nu_t-q} \nonumber \\
     & \le \frac{1-\lambda_t}{\lambda_t}\del{f(\mu_t)-f(\nu_t)}+\frac{1}{\lambda_t}\del{f(\nu_t)-f(\mu_{t+1})}+\frac{\lambda_t}{\alpha\theta_t}\del{D_\omega(q,q_t)-D_\omega(q,q_{t+1})}, \label{eq:agd_f_iter}
  \end{align}
  and rearranging terms gives
  \begin{align*}
    \frac{1}{\lambda_t}\del{f(\mu_{t+1})-f(q)}-\frac{1-\lambda_t}{\lambda_t}\del{f(\mu_t)-f(q)}\le \frac{\lambda_t}{\alpha\theta_t}\del{D_\omega(q,q_t)-D_\omega(q,q_{t+1})}.
  \end{align*}
  Multiply both sides by $\theta_t/\lambda_t$, we have
  \begin{align}\label{eq:agd_tmp3}
    \frac{\theta_t}{\lambda_t^2}\del{f(\mu_{t+1})-f(q)}-\frac{\theta_t(1-\lambda_t)}{\lambda_t^2}\del{f(\mu_t)-f(q)}\le \frac{1}{\alpha}\del{D_\omega(q,q_t)-D_\omega(q,q_{t+1})}.
  \end{align}
  Taking the sum of \cref{eq:agd_tmp3} from step $0$ to $t-1$ finishes the
  proof.
\end{proof}

Next we invoke \Cref{fact:agd_f_full} to get concrete rates.
We further make the following constraint on $\lambda_t$: let
\begin{align}\label{eq:lambda_cond}
  \lambda_0:=1,\quad\textup{and}\quad\frac{1}{\lambda_t^2}-\frac{1}{\lambda_t}\le \frac{1}{\lambda_{t-1}^2}\textup{ for all }t\ge1.
\end{align}
Note that by this construction,
\begin{align}\label{eq:inv_lambda_sum}
  \frac{1}{\lambda_t^2}\le \frac{1}{\lambda_0^2}+\sum_{j=1}^{t}\frac{1}{\lambda_t}=\sum_{j=0}^{t}\frac{1}{\lambda_t}.
\end{align}
\begin{theorem}\label{fact:agd_f}
  With \cref{eq:lambda_cond} satisfied and $\theta_t=1$, for all $t\ge1$ and $\barq\in\argmin_{q\in C}f(q)$,
  \begin{align*}
    f(\mu_t)-f(\barq)\le\frac{\lambda_{t-1}^2}{\alpha}\del{D_\omega(\barq,q_0)-D_\omega(\barq,q_t)}\le\frac{\lambda_{t-1}^2}{\alpha}D_\omega(\barq,q_0).
  \end{align*}
  In particular, if $\lambda_t=2/(t+2)$, then
  \begin{align*}
    f(\mu_t)-f(\barq)\le \frac{4D_\omega(\barq,q_0)}{\alpha(t+1)^2}.
  \end{align*}
\end{theorem}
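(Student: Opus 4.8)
The plan is to specialize \Cref{fact:agd_f_full} to the present setting, $\theta_t = 1$ for all $t$ and $q = \barq$, and then discard every term that can be shown nonnegative. First I would set $\theta_j = 1$ in the left-hand side of \Cref{fact:agd_f_full}; each summand coefficient then becomes
\[
  \frac{1}{\lambda_{j-1}^2} - \frac{1-\lambda_j}{\lambda_j^2}
  = \frac{1}{\lambda_{j-1}^2} - \frac{1}{\lambda_j^2} + \frac{1}{\lambda_j},
\]
which is exactly $\ge 0$ by the standing assumption \cref{eq:lambda_cond}. Moreover, since $\barq$ minimizes $f$ over $C$ and each $\mu_j \in C$ (an easy induction: $\mu_0 = q_0 \in C$, $q_{j+1} \in C$, and $\lambda_j \in (0,1]$), we have $f(\mu_j) - f(\barq) \ge 0$ for every $j$. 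Hence the entire sum $\sum_{j=1}^{t-1}(\cdots)\del{f(\mu_j) - f(\barq)}$ on the left is nonnegative and may be dropped, leaving only $\lambda_{t-1}^{-2}\del{f(\mu_t) - f(\barq)}$.

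On the right-hand side, the term $\theta_0(1-\lambda_0)\lambda_0^{-2}\del{f(\mu_0)-f(\barq)}$ vanishes because $\lambda_0 = 1$, again by \cref{eq:lambda_cond}. Multiplying the resulting inequality through by $\lambda_{t-1}^2$ gives
\[
  f(\mu_t) - f(\barq) \le \frac{\lambda_{t-1}^2}{\alpha}\del{D_\omega(\barq,q_0) - D_\omega(\barq,q_t)},
\]
and since the Bregman divergence of the convex function $\omega$ satisfies $D_\omega(\barq, q_t) \ge 0$, dropping that term yields the second claimed inequality. For the concluding special case I would simply check that $\lambda_t = 2/(t+2)$ meets \cref{eq:lambda_cond}: indeed $\lambda_0 = 1$, and
\[
  \frac{1}{\lambda_t^2} - \frac{1}{\lambda_t}
  = \frac{(t+2)^2 - 2(t+2)}{4}
  = \frac{t(t+2)}{4}
  \le \frac{(t+1)^2}{4}
  = \frac{1}{\lambda_{t-1}^2};
\]
substituting $\lambda_{t-1}^2 = 4/(t+1)^2$ into the bound just derived gives $f(\mu_t)-f(\barq) \le 4 D_\omega(\barq,q_0)/(\alpha(t+1)^2)$.

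There is no genuine obstacle here: once \Cref{fact:agd_f_full} is available, the argument is pure bookkeeping. The only point requiring care is the sign of the telescoped coefficients $\lambda_{j-1}^{-2} - (1-\lambda_j)\lambda_j^{-2}$, and this is precisely the role of condition \cref{eq:lambda_cond} — it is engineered so that, together with the optimality of $\barq$ (which forces $f(\mu_j) \ge f(\barq)$), all of the intermediate $f(\mu_j) - f(\barq)$ contributions can be discarded rather than tracked.
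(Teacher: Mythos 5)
Your proposal is correct and follows essentially the same route as the paper: specialize \Cref{fact:agd_f_full} with $\theta_t=1$ and $q=\barq$, drop the intermediate terms using $f(\mu_j)-f(\barq)\ge 0$ together with the sign condition \cref{eq:lambda_cond}, use $\lambda_0=1$ to kill the initial term, and then verify $\lambda_t=2/(t+2)$ satisfies \cref{eq:lambda_cond}. The only difference is that you spell out the bookkeeping (including the check of \cref{eq:lambda_cond} for $\lambda_t=2/(t+2)$) that the paper leaves implicit.
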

\begin{proof}
  For $\barq\in\argmin_{q\in C}f(q)$, we have
  $f(\mu_j)-f(\barq)\ge0$.
  It then follows from \Cref{fact:agd_f_full} and \cref{eq:lambda_cond} and
  $\lambda_0=1$ that
  \begin{align*}
    \frac{1}{\lambda_{t-1}^2}\del{f(\mu_t)-f(\barq)}\le \frac{1}{\alpha}\del{D_\omega(\barq,q_0)-D_\omega(\barq,q_t)}.
  \end{align*}
\end{proof}

\section{Omitted Proofs from \Cref{sec:momentron}}\label{app_sec:momentron}

Here we prove the results in \Cref{sec:momentron}.
We consider a slightly more general setting: recall $\psi(\xi)$ is defined as
\begin{align*}
  \psi(\xi):=\ell^{-1}\del{\sum_{i=1}^{n}\ell(\xi_i)},
\end{align*}
where $\ell$ is a strictly increasing loss $\ell:\R\to\R$ with
$\lim_{z\to-\infty}\ell(z)=0$ and $\lim_{z\to\infty}\ell(z)=\infty$, and thus
$\psi$ is well-defined.
It follows directly that for any $\xi\in\R^n$, we have $\nabla\psi(\xi)>0$,
since
\begin{align}\label{eq:psi_grad_pos}
  \nabla\psi(\xi)_i=\frac{\ell'(\xi_i)}{\ell'\del{\ell^{-1}\del{\sum_{j=1}^{n}\ell(\xi_j)}}}>0.
\end{align}

We assume $\psi$ is $\rho$-smooth with respect to the $\ell_\infty$ norm; this
is true for the exponential loss with $\rho=1$, and true for the logistic loss
with $\rho=n$ \citep[Lemma 5.3]{refined_pd}.

On the dual, we run \cref{eq:agd_update} with $\|\cdot\|=\|\cdot\|_1$, and
$f(q)=\phi(q)=\enVert{Z^\top q}_2^2/2$, and $\omega=\psi^*$, and
$\alpha=1/\rho$.
It holds that $\phi$ is $1$-smooth with respect to the $\ell_1$ norm
\citep[Lemma 2.5]{refined_pd}, and since $\psi$ is $\rho$-smooth with respect to
the $\ell_\infty$ norm, we have $\psi^*$ is $(1/\rho)$-strongly convex with
respect to the $\ell_1$ norm \citep[lemma 2.19]{shalev_online}.
On the other hand, the primal iterate is updated as follows: let $w_0:=0$, and
for $t\ge0$, let
\begin{align*}
  w_{t+1}:=w_t-\frac{\theta_t}{\rho\lambda_t}Z^\top\nu_t.
\end{align*}
Note that if we let $p_t=Zw_t$, then
\begin{align*}
  p_{t+1}=p_t-\frac{\theta_t}{\rho\lambda_t}ZZ^\top\nu_t,
\end{align*}
therefore by \cref{eq:da_agd_update}, $\nabla\psi(Zw_t)=\nabla\psi(p_t)=q_t$.

We first prove the following general version of \Cref{fact:dual_phi_simple}.
\begin{lemma}
  With $\theta_t=1$ and $\lambda_t=2/(t+2)$, for all $t\ge1$ and
  $\barq\in\argmin_{q\in\Delta_n}\phi(q)$,
  \begin{align*}
    \phi(\mu_t)-\phi(\barq)\le \frac{4\rho D_{\psi^*}(\barq,q_0)}{(t+1)^2}.
  \end{align*}
\end{lemma}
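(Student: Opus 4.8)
The plan is to derive this bound directly from the unified accelerated analysis in \Cref{fact:agd_f}, specialized to the dual problem at hand. Concretely, I would instantiate that theorem with $f = \phi$ where $\phi(q) = \enVert{Z^\top q}_2^2/2$, with norm $\|\cdot\| = \|\cdot\|_1$, with regularizer $\omega = \psi^*$, with strong-convexity constant $\alpha = 1/\rho$, and with constrained set $C = \Delta_n$. The accelerated iterates in \Cref{fact:agd_f} are exactly the $(q_t, \mu_t, \nu_t)$ recursion under consideration here in its dual-averaging form \eqref{eq:da_agd_update}: running that recursion with dual iterates $p_t = Zw_t$ reproduces the stated primal/dual updates and, as already verified in this appendix, maintains $\nabla\psi(Zw_t) = q_t$, so the two descriptions coincide via the identification $D_{\psi^*}(q,q_t) = \psi^*(q) - \psi^*(q_t) - \langle p_t, q - q_t\rangle$.

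Next I would check the two structural hypotheses of \Cref{fact:agd_f}. First, $\phi$ is $1$-smooth with respect to $\|\cdot\|_1$ by \citep[Lemma 2.5]{refined_pd}. Second, since $\psi$ is assumed $\rho$-smooth with respect to $\|\cdot\|_\infty$, its conjugate $\psi^*$ is $(1/\rho)$-strongly convex with respect to $\|\cdot\|_1$ by \citep[lemma 2.19]{shalev_online}; it is also differentiable, closed, and proper, as required. Thus all hypotheses of \Cref{fact:agd_f} hold with $\alpha = 1/\rho$.

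It then remains to verify that $\lambda_t = 2/(t+2)$ satisfies the step-size condition \eqref{eq:lambda_cond}, so that the ``in particular'' clause of \Cref{fact:agd_f} applies. We have $\lambda_0 = 1$, and for $t \ge 1$, writing $1/\lambda_t = (t+2)/2$ gives $1/\lambda_t^2 - 1/\lambda_t = (t^2 + 2t)/4 \le (t^2 + 2t + 1)/4 = 1/\lambda_{t-1}^2$, as needed. With $\theta_t = 1$, \Cref{fact:agd_f} now yields, for every $\barq \in \argmin_{q\in\Delta_n}\phi(q)$,
\begin{align*}
  \phi(\mu_t) - \phi(\barq) \le \frac{4\, D_{\psi^*}(\barq, q_0)}{\alpha (t+1)^2} = \frac{4\rho\, D_{\psi^*}(\barq, q_0)}{(t+1)^2},
\end{align*}
which is exactly the claim. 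There is essentially no obstacle here: the content of the lemma is already contained in the abstract analysis of \Cref{app_sec:unified}, and the only points needing care are (i) matching the mirror-descent form \eqref{eq:agd_update} to the dual-averaging form \eqref{eq:da_agd_update} that governs the actual iterates, and (ii) the short arithmetic check of \eqref{eq:lambda_cond} for $\lambda_t = 2/(t+2)$.
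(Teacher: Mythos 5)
Your proposal is correct and follows essentially the same route as the paper: the paper's proof is a one-line invocation of \Cref{fact:agd_f} under exactly the instantiation you describe ($f=\phi$, $\|\cdot\|=\|\cdot\|_1$, $\omega=\psi^*$, $\alpha=1/\rho$, dual-averaging form with $p_t=Zw_t$), which is set up at the start of \Cref{app_sec:momentron}. Your explicit arithmetic check that $\lambda_t=2/(t+2)$ satisfies \cref{eq:lambda_cond} is a detail the paper leaves implicit, but otherwise the arguments coincide.
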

\begin{proof}
  We just need to apply \Cref{fact:agd_f}.
  Specifically, for the exponential loss, $\rho=1$, and $D_{\psi^*}$ is the KL
  divergence, and moreover $D_{\psi^*}(\barq,q_0)\le\ln(n)$ since $q_0$ is the
  uniform distribution.
\end{proof}

Next we prove a general version of \Cref{fact:wt_characterization_simple}.
\begin{lemma}
  For all $\lambda_t,\theta_t\in(0,1]$, if $\lambda_0=1$, then for all $t\ge0$,
  \begin{align}\label{eq:wt_momentum}
    w_{t+1}=w_t-\frac{\theta_t}{\rho}\del{g_t+Z^\top q_t},
  \end{align}
  where $g_0:=0$ and for all $t\ge1$,
  \begin{align}\label{eq:momentum_def}
    g_t:=\frac{\lambda_{t-1}(1-\lambda_t)}{\lambda_t}\del{g_{t-1}+Z^\top q_t}.
  \end{align}
  In addition, it holds for all $t\ge1$ that
  \begin{align}\label{eq:mu_momentum}
    Z^\top \mu_t=\lambda_{t-1}\del{g_{t-1}+Z^\top q_t}.
  \end{align}

  Specifically, for $\lambda_t=2/(t+2)$, it holds that
  \begin{align*}
    \frac{\lambda_{t-1}(1-\lambda_t)}{\lambda_t}=\frac{t}{t+1},\quad\textup{and}\quad g_t=\sum_{j=1}^{t}\frac{j}{t+1}Z^\top q_j,\quad\textup{and}\quad Z^\top\mu_t=\frac{2g_t}{t}.
  \end{align*}
\end{lemma}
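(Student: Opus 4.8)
The plan is to prove the three displayed identities \cref{eq:wt_momentum,eq:momentum_def,eq:mu_momentum} simultaneously by a single induction on $t$, organized around the invariant
\[
  g_t = \frac{1-\lambda_t}{\lambda_t}\,Z^\top\mu_t, \qquad t\ge 0.
\]
This invariant is the crux. Since $\nu_t = (1-\lambda_t)\mu_t + \lambda_t q_t$, it is equivalent to $\tfrac{1}{\lambda_t}Z^\top\nu_t = g_t + Z^\top q_t$, and substituting this into the primal update $w_{t+1} = w_t - \tfrac{\theta_t}{\rho\lambda_t}Z^\top\nu_t$ yields \cref{eq:wt_momentum} at once. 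It is important to phrase the invariant in this direction rather than as $Z^\top\mu_t = \tfrac{\lambda_t}{1-\lambda_t}g_t$, because the hypothesis $\lambda_0 = 1$ forces $1-\lambda_0 = 0$ and the latter form is ill-defined at $t=0$.

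First I would check the base case $t=0$: with $\lambda_0=1$ one has $\nu_0 = q_0$, so $w_1 = w_0 - \tfrac{\theta_0}{\rho}Z^\top q_0 = w_0 - \tfrac{\theta_0}{\rho}(g_0 + Z^\top q_0)$ since $g_0 = 0$, and the invariant reads $0 = 0\cdot Z^\top\mu_0$. For the inductive step, I would assume $g_{t-1} = \tfrac{1-\lambda_{t-1}}{\lambda_{t-1}}Z^\top\mu_{t-1}$, i.e.\ $(1-\lambda_{t-1})Z^\top\mu_{t-1} = \lambda_{t-1}g_{t-1}$, and substitute this into $Z^\top\mu_t = (1-\lambda_{t-1})Z^\top\mu_{t-1} + \lambda_{t-1}Z^\top q_t$ (which is just $Z^\top$ applied to $\mu_t = (1-\lambda_{t-1})\mu_{t-1} + \lambda_{t-1}q_t$) to obtain $Z^\top\mu_t = \lambda_{t-1}(g_{t-1} + Z^\top q_t)$; this is exactly \cref{eq:mu_momentum}. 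Then the definition \cref{eq:momentum_def}, namely $g_t := \tfrac{\lambda_{t-1}(1-\lambda_t)}{\lambda_t}(g_{t-1} + Z^\top q_t)$, equals $\tfrac{1-\lambda_t}{\lambda_t}Z^\top\mu_t$, which re-establishes the invariant and hence \cref{eq:wt_momentum} at step $t$.

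For the specialization $\lambda_t = 2/(t+2)$, the identity $\tfrac{\lambda_{t-1}(1-\lambda_t)}{\lambda_t} = \tfrac{t}{t+1}$ is a one-line computation from $\lambda_{t-1} = 2/(t+1)$ and $1-\lambda_t = t/(t+2)$. The closed form $g_t = \sum_{j=1}^t \tfrac{j}{t+1}Z^\top q_j$ then follows by a short second induction: $g_1 = \tfrac12 Z^\top q_1$, and $g_t = \tfrac{t}{t+1}\del{\sum_{j=1}^{t-1}\tfrac{j}{t}Z^\top q_j + Z^\top q_t} = \sum_{j=1}^{t}\tfrac{j}{t+1}Z^\top q_j$. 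Finally $Z^\top\mu_t = \tfrac{\lambda_t}{1-\lambda_t}g_t = \tfrac{2}{t}g_t$ from the invariant, valid for $t\ge 1$ since then $\lambda_t < 1$.

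I expect essentially no serious obstacle here: this is a bookkeeping induction followed by routine algebra. The only point that demands care is the boundary case $\lambda_0 = 1$, which is what dictates the precise form of the invariant; conceptually, \cref{eq:mu_momentum} is the bridge that makes the accelerated mirror-descent averages $\mu_t$ coincide with the primal momentum recursion, and once it is in place the momentum form \cref{eq:nes} of \Cref{alg:momentron} drops out immediately.
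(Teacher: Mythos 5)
Your proposal is correct and follows essentially the same route as the paper: your invariant $g_t=\tfrac{1-\lambda_t}{\lambda_t}Z^\top\mu_t$ is literally the paper's identity $g_t=Z^\top\del{\tfrac{1}{\lambda_t}\nu_t-q_t}$ (since $\tfrac{1}{\lambda_t}\nu_t-q_t=\tfrac{1-\lambda_t}{\lambda_t}\mu_t$), and both arguments proceed by the same bookkeeping induction with base case $\lambda_0=1$, followed by the same routine algebra for $\lambda_t=2/(t+2)$. The only difference is organizational: you fold \cref{eq:mu_momentum} and \cref{eq:wt_momentum} into a single induction, whereas the paper runs two short inductions, which changes nothing of substance.
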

\begin{proof}
  To prove \cref{eq:wt_momentum}, we only need to show that $g_t$ defined by
  \cref{eq:momentum_def} satisfies
  \begin{align*}
    g_t=\frac{w_t-w_{t+1}}{\theta_t/\rho}-Z^\top q_t=Z^\top\del{\frac{1}{\lambda_t}\nu_t-q_t}.
  \end{align*}
  It holds at $t=0$ by definition, since $\lambda_0=1$ and $\nu_0=q_0$.
  Moreover,
  \begin{align*}
    \frac{1}{\lambda_{t+1}}\nu_{t+1}-q_{t+1}=\frac{1-\lambda_{t+1}}{\lambda_{t+1}}\mu_{t+1} & =\frac{1-\lambda_{t+1}}{\lambda_{t+1}}\del{\nu_t+\lambda_t(q_{t+1}-q_t)} \\
     & =\frac{\lambda_t(1-\lambda_{t+1})}{\lambda_{t+1}}\del{\frac{1}{\lambda_t}\nu_t-q_t}+\frac{\lambda_t(1-\lambda_{t+1})}{\lambda_{t+1}}q_{t+1},
  \end{align*}
  which coincides with the recursive definition of $g_t$.

  For \cref{eq:mu_momentum}, it is true by definition when $t=1$, since
  $\lambda_0=1$, and $g_0=0$, and by definition $\mu_1=q_1$.
  For $t\ge1$, by \cref{eq:agd_update} and the inductive hypothesis,
  \begin{align*}
    Z^\top\mu_{t+1} & =(1-\lambda_t)Z^\top\mu_t+\lambda_tZ^\top q_{t+1} \\
     & =(1-\lambda_t)\lambda_{t-1}(g_{t-1}+Z^\top q_t)+\lambda_tZ^\top q_{t+1} \\
     & =\lambda_t \frac{(1-\lambda_t)\lambda_{t-1}}{\lambda_t}(g_{t-1}+Z^\top q_t)+\lambda_tZ^\top q_{t+1} \\
     & = \lambda_t\del{g_t+Z^\top q_{t+1}}.
  \end{align*}

  For $\lambda_t=2/(t+2)$, it can be verified directly that
  $\lambda_t(1-\lambda_{t+1})/\lambda_{t+1}=(t+1)/(t+2)$.
  The explicit expression of $g_t$ clearly holds when $t=0$; for $t\ge0$,
  \begin{align*}
    g_{t+1}:=\frac{t+1}{t+2}\del{g_t+Z^\top q_{t+1}}=\frac{t+1}{t+2}\sum_{j=1}^{t}\frac{j}{t+1}Z^\top q_j+\frac{t+1}{t+2}Z^\top q_{t+1}=\sum_{j=1}^{t+1}\frac{j}{t+2}Z^\top q_j.
  \end{align*}
  For $Z^\top\mu_t$, we just need to invoke
  \cref{eq:momentum_def,eq:mu_momentum} and note that
  $\lambda_t/(1-\lambda_t)=2/t$.
\end{proof}

Next we prove a general version of \Cref{fact:-psi_lb_agd_simple}.
\begin{lemma}\label{fact:-psi_lb_agd}
  Let $\theta_t=1$ for all $t\ge0$, and $\lambda_0=1$, then for all $t\ge1$,
  \begin{align*}
    -\psi(Zw_t)\ge & \ -\psi(Zw_0)+\frac{1}{2\rho\lambda_{t-1}^2}\enVert{Z^\top\mu_t}_2^2 \\
     & \ +\sum_{j=1}^{t-1}\frac{1}{2\rho}\del{\frac{1}{\lambda_{j-1}^2}-\frac{1-\lambda_j}{\lambda_j^2}}\enVert{Z^\top\mu_j}_2^2 \\
     & \ +\sum_{j=0}^{t-1}\frac{1}{2\rho\lambda_j}\enVert{Z^\top\nu_j}_2^2.
  \end{align*}
\end{lemma}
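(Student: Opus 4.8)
The plan is to bound $-\psi(Zw_t)$ directly and telescopically, reusing the accelerated dual-averaging bookkeeping of \Cref{app_sec:unified} but tracking $\psi$ in place of $\phi$. Write $p_t := Zw_t$; as noted above the construction gives $q_t = \nabla\psi(p_t)$, so by the Fenchel--Young equality $-\psi(p_t) = \psi^*(q_t) - \ip{p_t}{q_t}$, and moreover $p_t \in \partial\psi^*(q_t)$, which is what makes the dual-averaging Bregman distance $D_{\psi^*}(\cdot,q_t)$ legitimate and supplies the bound $D_{\psi^*}(q',q_t) \ge \tfrac{1}{2\rho}\|q'-q_t\|_1^2$ (since $\psi^*$ is $(1/\rho)$-strongly convex with respect to $\|\cdot\|_1$).

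First I would produce an exact one-step identity. Expanding $-\psi(p_{t+1})+\psi(p_t)$ through the two Fenchel--Young equalities and inserting $\pm\ip{p_t}{q_{t+1}-q_t}$ to complete a Bregman distance gives $-\psi(p_{t+1})+\psi(p_t) = D_{\psi^*}(q_{t+1},q_t) + \ip{p_t-p_{t+1}}{q_{t+1}}$, and since with $\theta_t = 1$ the dual-averaging step reads $p_t - p_{t+1} = \tfrac{1}{\rho\lambda_t}\nabla\phi(\nu_t)$ with $\nabla\phi(\nu_t) = ZZ^\top\nu_t$, this becomes exactly $-\psi(p_{t+1})+\psi(p_t) = D_{\psi^*}(q_{t+1},q_t) + \tfrac{1}{\rho\lambda_t}\ip{\nabla\phi(\nu_t)}{q_{t+1}}$.

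Next I would lower-bound the right-hand side, which is where both primal and dual smoothness enter together. From the update relations $\nu_t = (1-\lambda_t)\mu_t + \lambda_t q_t$ and $\mu_{t+1} = (1-\lambda_t)\mu_t + \lambda_t q_{t+1}$ one has $q_t = \nu_t + \tfrac{1-\lambda_t}{\lambda_t}(\nu_t - \mu_t)$ and $\mu_{t+1} - \nu_t = \lambda_t(q_{t+1}-q_t)$; decomposing $\ip{\nabla\phi(\nu_t)}{q_{t+1}}$ along these lines and applying (i) $\ip{\nabla\phi(\nu_t)}{\nu_t} = 2\phi(\nu_t)$ (as $\phi$ is a quadratic form), (ii) convexity of $\phi$ to get $\ip{\nabla\phi(\nu_t)}{\nu_t - \mu_t} \ge \phi(\nu_t) - \phi(\mu_t)$, and (iii) $1$-smoothness of $\phi$ with respect to $\|\cdot\|_1$ to get $\ip{\nabla\phi(\nu_t)}{\mu_{t+1}-\nu_t} \ge \phi(\mu_{t+1}) - \phi(\nu_t) - \tfrac{\lambda_t^2}{2}\|q_{t+1}-q_t\|_1^2$, the quadratic term produced by (iii) is exactly absorbed by the $\tfrac{1}{2\rho}\|q_{t+1}-q_t\|_1^2$ coming from $D_{\psi^*}(q_{t+1},q_t)$. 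Collecting the $\phi$-coefficients then leaves the clean per-step inequality
\[
  -\psi(Zw_{t+1}) \;\ge\; -\psi(Zw_t) + \frac{\phi(\mu_{t+1})}{\rho\lambda_t^2} - \frac{(1-\lambda_t)\,\phi(\mu_t)}{\rho\lambda_t^2} + \frac{\phi(\nu_t)}{\rho\lambda_t}.
\]

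Finally I would sum over $j = 0,\dots,t-1$: the $\phi(\mu_j)$-terms telescope into $\tfrac{1}{\rho\lambda_{t-1}^2}\phi(\mu_t) + \sum_{j=1}^{t-1}\tfrac{1}{\rho}\bigl(\tfrac{1}{\lambda_{j-1}^2} - \tfrac{1-\lambda_j}{\lambda_j^2}\bigr)\phi(\mu_j)$ once the boundary term $\tfrac{1-\lambda_0}{\lambda_0^2}\phi(\mu_0)$ is annihilated by $\lambda_0 = 1$, and substituting $\phi(\mu_j) = \tfrac12\|Z^\top\mu_j\|_2^2$, $\phi(\nu_j) = \tfrac12\|Z^\top\nu_j\|_2^2$, $p_j = Zw_j$ yields the stated bound. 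I expect the main obstacle to be exactly the bookkeeping in the lower-bound step: carrying the $q_t$ and $q_{t+1}$ decompositions through the averaging relations and checking that the $\|q_{t+1}-q_t\|_1^2$ terms cancel precisely, leaving the coefficients $1/(\rho\lambda_t^2)$, $-(1-\lambda_t)/(\rho\lambda_t^2)$, $1/(\rho\lambda_t)$ and nothing else; a minor secondary point is justifying the Fenchel--Young manipulations for a general strictly increasing exponentially-tailed loss, using $\nabla\psi > 0$ and the identity $q_t = \nabla\psi(Zw_t)$ recorded earlier.
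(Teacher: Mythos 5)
Your proposal is correct and follows essentially the same route as the paper: it arrives at exactly the paper's per-step inequality $\psi(Zw_t)-\psi(Zw_{t+1})\ge\frac{1}{\rho\lambda_t^2}\phi(\mu_{t+1})-\frac{1-\lambda_t}{\rho\lambda_t^2}\phi(\mu_t)+\frac{1}{\rho\lambda_t}\phi(\nu_t)$ using the same ingredients (Fenchel--Young with $q_t=\nabla\psi(Zw_t)$, convexity and $\ell_1$-smoothness of $\phi$, $(1/\rho)$-strong convexity of $\psi^*$, the identity $\ip{\nabla\phi(\nu_t)}{\nu_t}=2\phi(\nu_t)$, and telescoping with $\lambda_0=1$). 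The only cosmetic difference is that you derive this one-step bound directly on the decrement of $\psi$ without introducing the reference point $\bar q$, whereas the paper reuses its accelerated mirror-descent inequality with $\bar q$ and observes that the $\ip{\nabla\phi(\nu_t)}{\bar q}$ terms cancel after the Fenchel--Young conversion of the Bregman telescope.
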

\begin{proof}
  Note that by \cref{eq:agd_f_iter},
  \begin{align*}
    \ip{\nphi(\nu_t)}{\nu_t-\barq} & \le \frac{1-\lambda_t}{\lambda_t}\del{\phi(\mu_t)-\phi(\nu_t)}+\frac{1}{\lambda_t}\del{\phi(\nu_t)-\phi(\mu_{t+1})}+\rho\lambda_t\del{D_{\psi^*}(\barq,q_t)-D_{\psi^*}(\barq,q_{t+1})} \\
     & =\phi(\nu_t)+\frac{1-\lambda_t}{\lambda_t}\phi(\mu_t)-\frac{1}{\lambda_t}\phi(\mu_{t+1})+\rho\lambda_t\del{D_{\psi^*}(\barq,q_t)-D_{\psi^*}(\barq,q_{t+1})}.
  \end{align*}
  Moreover, $\ip{\nphi(\nu_t)}{\nu_t}=\enVert{Z^\top\nu_t}_2^2=2\phi(\nu_t)$,
  and thus
  \begin{align}\label{eq:-psi_lb_tmp1}
    \phi(\nu_t)-\ip{\nphi(\nu_t)}{\barq}\le \frac{1-\lambda_t}{\lambda_t}\phi(\mu_t)-\frac{1}{\lambda_t}\phi(\mu_{t+1})+\rho\lambda_t\del{D_{\psi^*}(\barq,q_t)-D_{\psi^*}(\barq,q_{t+1})}.
  \end{align}
  Additionally, let $p_t=Zw_t$, we have
  \begin{align}
    D_{\psi^*}(\barq,q_t)-D_{\psi^*}(\barq,q_{t+1}) & =\psi^*(\barq)-\psi^*(q_t)-\langle p_t,\barq-q_t\rangle-\psi^*(\barq)+\psi^*(q_{t+1})+\langle p_{t+1},\barq-q_{t+1}\rangle \nonumber \\
     & =\langle p_t,q_t\rangle-\psi^*(q_t)-\langle p_{t+1},q_{t+1}\rangle+\psi^*(q_{t+1})-\langle p_t-p_{t+1},\barq\rangle \nonumber \\
     & =\psi(p_t)-\psi(p_{t+1})-\langle p_t-p_{t+1},\barq\rangle \nonumber \\
     & =\psi(Zw_t)-\psi(Zw_{t+1})-\frac{1}{\rho\lambda_t}\langle\nphi(\nu_t),\barq\rangle \label{eq:-psi_lb_tmp2}
  \end{align}
  Therefore \cref{eq:-psi_lb_tmp1,eq:-psi_lb_tmp2} imply
  \begin{align}\label{eq:-psi_lb_tmp3}
    \psi(Zw_t)-\psi(Zw_{t+1})\ge \frac{1}{\rho\lambda_t^2}\phi(\mu_{t+1})-\frac{1-\lambda_t}{\rho\lambda_t^2}\phi(\mu_t)+\frac{1}{\rho\lambda_t}\phi(\nu_t).
  \end{align}
  Take the sum of \cref{eq:-psi_lb_tmp3} from $0$ to $t-1$ finishes the proof.
\end{proof}

Next we prove a general version of \Cref{fact:wt_norm_simple}.
\begin{lemma}\label{fact:wt_norm}
  Let $\theta_t=1$ for all $t\ge0$, and suppose
  $\enVert{\nabla\psi(Zw_t)}_1\ge1$, then
  \begin{align*}
    \sum_{j=0}^{t-1}\frac{\bargamma}{\rho\lambda_j}\le\|w_t\|_2\le \sum_{j=0}^{t-1}\frac{1}{\rho\lambda_j}\enVert{Z^\top\nu_j}_2.
  \end{align*}
\end{lemma}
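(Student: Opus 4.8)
The plan is to unroll the primal recursion and then prove the two inequalities separately: the upper bound is the triangle inequality, and the lower bound comes from testing $w_t$ against the max-margin direction $\baru$.

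First, since $\theta_t=1$, the primal update $w_{t+1}=w_t-\frac{1}{\rho\lambda_t}Z^\top\nu_t$ with $w_0=0$ telescopes to $w_t=-\sum_{j=0}^{t-1}\frac{1}{\rho\lambda_j}Z^\top\nu_j$. The upper bound is then immediate: by the triangle inequality, $\|w_t\|_2\le\sum_{j=0}^{t-1}\frac{1}{\rho\lambda_j}\enVert{Z^\top\nu_j}_2$.

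For the lower bound, if $\bargamma=0$ there is nothing to prove, so assume $\bargamma>0$ and let $\baru$ be the unique maximum-margin predictor from \Cref{sec:notation}. Unpacking $\margin{\baru}=\bargamma$ and $\|\baru\|_2=1$ gives $\langle z_i,\baru\rangle\le-\bargamma$ for every $i$. Since $\|\baru\|_2\le1$, Cauchy--Schwarz yields $\|w_t\|_2\ge\langle w_t,\baru\rangle=-\sum_{j=0}^{t-1}\frac{1}{\rho\lambda_j}\langle\nu_j,Z\baru\rangle$. I then claim each $\nu_j$ is coordinatewise nonnegative with $\|\nu_j\|_1\ge1$: by \cref{eq:psi_grad_pos} every $q_j=\nabla\psi(Zw_j)$ is strictly positive; $\mu_j$ is a convex combination of $q_0,\ldots,q_j$, hence nonnegative with $\|\mu_j\|_1=\sum_k c_k\|q_k\|_1\ge\sum_k c_k=1$ using the hypothesis $\enVert{\nabla\psi(Zw_k)}_1\ge1$; and $\nu_j=(1-\lambda_j)\mu_j+\lambda_jq_j$ is a convex combination of two such vectors. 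Consequently $\langle\nu_j,Z\baru\rangle=\sum_i(\nu_j)_i\langle z_i,\baru\rangle\le-\bargamma\|\nu_j\|_1\le-\bargamma$, and substituting gives $\|w_t\|_2\ge\sum_{j=0}^{t-1}\frac{\bargamma}{\rho\lambda_j}$.

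The argument is essentially routine; the only mild subtlety is establishing $\|\nu_j\|_1\ge1$ for general exponentially-tailed losses, which is exactly where the hypothesis $\enVert{\nabla\psi(Zw_t)}_1\ge1$ and the convex-combination structure of the sequences $\mu_j,\nu_j$ are used. For the exponential loss this step is trivial, since all of $q_j,\mu_j,\nu_j$ then lie in $\Delta_n$.
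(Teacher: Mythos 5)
Your proposal is correct and follows essentially the same route as the paper: triangle inequality for the upper bound, and for the lower bound testing $w_t$ against $\baru$ using $\nu_j\ge 0$, $\|\nu_j\|_1\ge 1$, and $\langle -z_i,\baru\rangle\ge\bargamma$. The paper simply asserts the facts about $\nu_j$ that you justify via the convex-combination structure of $\mu_j,\nu_j$ and the positivity of $\nabla\psi$, so your write-up is just a more detailed version of the same argument.
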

\begin{proof}
  The upper bound follows immediately from the triangle inequality.
  For the lower bound, recall $\baru$ denotes the maximum-margin classifier,
  \begin{align*}
    \|w_t\|_2\ge \langle w_t,\baru\rangle & =\sum_{j=0}^{t-1}\frac{1}{\rho\lambda_j}\ip{-Z^\top\nu_j}{\baru} \\
     & =\sum_{j=0}^{t-1}\frac{1}{\rho\lambda_j}\ip{\nu_j}{-Z\baru} \\
     & \ge\sum_{j=0}^{t-1}\frac{\bargamma}{\rho\lambda_j},
  \end{align*}
  since $\nu_j>0$, and $\enVert{\nu_j}_1\ge1$, and
  $\langle-z_i,\baru\rangle\ge\bargamma$ for all $i$.
\end{proof}

To prove \Cref{fact:momentron}, we need the following result which gives an
alternative characterization of $w_t$ using $\mu_j$.
\begin{lemma}\label[lemma]{fact:agd_wt_alt}
  Let $\theta_t=1$, for all $t\ge1$, we have
  \begin{align*}
    w_t=\frac{1}{\rho}Z^\top q_t-\frac{1}{\rho}Z^\top q_0-\sum_{j=0}^{t-1}\frac{1}{\rho\lambda_j}Z^\top\mu_{j+1},
  \end{align*}
  and if $\lambda_t=2/(t+2)$, then
  \begin{align*}
    \frac{1}{2\rho\lambda_{t-1}^2}\enVert{Z^\top\mu_t}_2^2+\sum_{j=1}^{t-1}\frac{1}{2\rho}\del{\frac{1}{\lambda_{j-1}^2}-\frac{1-\lambda_j}{\lambda_j^2}}\enVert{Z^\top\mu_j}_2^2\ge \sum_{j=0}^{t-1}\frac{1}{2\rho\lambda_j}\enVert{Z^\top\mu_{j+1}}_2^2-2\ln(n)\ln(t+1).
  \end{align*}
\end{lemma}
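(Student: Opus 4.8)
The plan is to establish the two claims in sequence, the first being an exact algebraic identity and the second following from it via a telescoping/summation-by-parts argument together with the bound on $D_{\psi^*}$. For the first identity, I would recall from \cref{eq:primal_var} (in the general $\rho$-scaled form $w_{t+1} = w_t - \tfrac{\theta_t}{\rho\lambda_t}Z^\top\nu_t$) that $w_t = -\sum_{j=0}^{t-1}\tfrac{1}{\rho\lambda_j}Z^\top\nu_j$ when $\theta_t = 1$. The goal is then to rewrite $\sum_j \tfrac{1}{\lambda_j}\nu_j$ in terms of the $\mu_j$'s. Since $\nu_j = (1-\lambda_j)\mu_j + \lambda_j q_j$ and $\mu_{j+1} = (1-\lambda_j)\mu_j + \lambda_j q_{j+1}$, we get $\nu_j - \mu_{j+1} = \lambda_j(q_j - q_{j+1})$, i.e. $\tfrac{1}{\lambda_j}\nu_j = \tfrac{1}{\lambda_j}\mu_{j+1} + (q_j - q_{j+1})$. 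Summing from $j=0$ to $t-1$, the $(q_j - q_{j+1})$ terms telescope to $q_0 - q_t$, giving
\begin{align*}
  \sum_{j=0}^{t-1}\frac{1}{\lambda_j}\nu_j = \sum_{j=0}^{t-1}\frac{1}{\lambda_j}\mu_{j+1} + q_0 - q_t,
\end{align*}
and multiplying by $-\tfrac{1}{\rho}Z^\top$ yields exactly the claimed expression for $w_t$.

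For the second inequality, the right-hand side rearranges the coefficients indexed by $\mu_{j+1}$ (i.e.\ $\mu_1,\dots,\mu_t$) so they match the index set $\mu_1,\dots,\mu_t$ on the left. Reindexing $\sum_{j=0}^{t-1}\tfrac{1}{2\rho\lambda_j}\|Z^\top\mu_{j+1}\|_2^2 = \sum_{j=1}^{t}\tfrac{1}{2\rho\lambda_{j-1}}\|Z^\top\mu_j\|_2^2$, the desired bound becomes, after cancelling the $j=t$ term against the explicit $\tfrac{1}{2\rho\lambda_{t-1}^2}\|Z^\top\mu_t\|_2^2$ (note $\tfrac{1}{\lambda_{t-1}^2} \geq \tfrac{1}{\lambda_{t-1}}$ so that term dominates), a statement of the form
\begin{align*}
  \sum_{j=1}^{t-1}\frac{1}{2\rho}\del{\frac{1}{\lambda_{j-1}^2}-\frac{1-\lambda_j}{\lambda_j^2}-\frac{1}{\lambda_{j-1}}}\enVert{Z^\top\mu_j}_2^2 \ge -2\ln(n)\ln(t+1)
\end{align*}
(up to possibly keeping a favorable leftover term). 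With $\lambda_t = 2/(t+2)$ one computes $\tfrac{1}{\lambda_{j-1}^2} - \tfrac{1-\lambda_j}{\lambda_j^2} = \tfrac{(j+1)^2}{4} - \tfrac{j(j+2)}{4} = \tfrac{1}{4}$, while $\tfrac{1}{\lambda_{j-1}} = \tfrac{j+1}{2}$, so the bracketed coefficient is $\tfrac{1}{4} - \tfrac{j+1}{2} = -\tfrac{2j+1}{4}$. Thus I must lower-bound $-\sum_{j=1}^{t-1}\tfrac{2j+1}{8\rho}\|Z^\top\mu_j\|_2^2$, which amounts to an \emph{upper} bound $\sum_{j=1}^{t-1}\tfrac{2j+1}{8}\|Z^\top\mu_j\|_2^2 \le 8\rho\ln(n)\ln(t+1)$ (taking $\rho=1$ as the main case). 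The tool for this is \Cref{fact:agd_f} / \Cref{fact:dual_phi_simple}: $\phi(\mu_j) = \tfrac12\|Z^\top\mu_j\|_2^2 \le \phi(\barq) + \tfrac{4\rho\ln(n)}{(j+1)^2}$, and since $\phi(\barq) = \bargamma^2/2$ might be positive, I actually need the \emph{centered} version $\|Z^\top\mu_j\|_2^2 - \bargamma^2 \le \tfrac{8\rho\ln(n)}{(j+1)^2}$; the $\bargamma^2$ part, when fed back through the telescoped coefficients, cancels against a corresponding term hidden in the reindexing (this is where the leftover favorable term I set aside above gets used — it contributes $+\tfrac{1}{2\rho\lambda_{t-1}^2}\bargamma^2$-type mass that absorbs the constant parts). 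Then $\sum_{j=1}^{t-1}\tfrac{2j+1}{8}\cdot\tfrac{8\rho\ln(n)}{(j+1)^2} \le \rho\ln(n)\sum_{j=1}^{t-1}\tfrac{2j+1}{(j+1)^2} \le 2\rho\ln(n)\sum_{j=1}^{t-1}\tfrac{1}{j+1} \le 2\rho\ln(n)\ln(t+1)$, and tracking the constants carefully (the factor $8$ vs $2$ leaves slack) gives the stated bound.

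The main obstacle I anticipate is the bookkeeping in the second part: correctly matching the rearranged coefficient families, and in particular verifying that the pieces involving $\bargamma^2$ (equivalently $\phi(\barq)$) and the non-telescoping boundary terms combine so that only the $\ln(n)\ln(t+1)$ remainder survives — the inequality direction is delicate because we are lower-bounding a negative sum, so every constant must be controlled rather than discarded. The first identity is purely mechanical telescoping and should present no difficulty; the numerical inequality $\sum_{j\geq1}\tfrac{2j+1}{(j+1)^2} \le 2\ln(t+1)$ is routine. I would organize the write-up as: (i) prove the $w_t$ identity by telescoping; (ii) use it, or rather use the $\mu_j$-coefficient rearrangement directly, to reduce the second claim to the centered $\phi$-convergence bound; (iii) invoke \Cref{fact:agd_f} in centered form and finish with the harmonic-sum estimate.
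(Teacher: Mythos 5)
Your proposal is correct and takes essentially the same route as the paper: the same telescoping for the $w_t$ identity, and for the inequality the same three ingredients ($\enVert{Z^\top\mu_j}_2\ge\bargamma$, the centered rate from \Cref{fact:agd_f}, and a harmonic-sum estimate), with the paper merely organizing the bookkeeping through the coefficient identity $\tfrac{1}{\lambda_{t-1}^2}+\sum_{j=1}^{t-1}\del{\tfrac{1}{\lambda_{j-1}^2}-\tfrac{1-\lambda_j}{\lambda_j^2}}=\sum_{j=0}^{t-1}\tfrac{1}{\lambda_j}$ rather than your explicit reindexing. The cancellation you defer does work out exactly, since $\sum_{j=1}^{t-1}(2j+1)=t^2-1=4\del{\tfrac{1}{\lambda_{t-1}^2}-\tfrac{1}{\lambda_{t-1}}}$, and the factors of $\rho$ cancel against the $8\rho\ln(n)/(j+1)^2$ bound from \Cref{fact:agd_f}, so no specialization to $\rho=1$ is needed to recover the stated $2\ln(n)\ln(t+1)$.
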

\begin{proof}
  Note that by construction,
  $\frac{1}{\lambda_t}\nu_t=\frac{1-\lambda_t}{\lambda_t}\mu_t+q_t$, and thus
  \begin{align*}
    w_t=-\frac{1}{\rho}\sum_{j=0}^{t-1}Z^\top\del{\frac{1}{\lambda_j}\nu_j} & =-\frac{1}{\rho}\sum_{j=0}^{t-1}Z^\top\del{\frac{1-\lambda_j}{\lambda_j}\mu_j+q_j} \\
     & =-\frac{1}{\rho}Z^\top q_0+\frac{1}{\rho}Z^\top q_t-\frac{1}{\rho}\sum_{j=0}^{t-1}Z^\top\del{\frac{1-\lambda_j}{\lambda_j}\mu_j+q_{j+1}} \\
     & =-\frac{1}{\rho}Z^\top q_0+\frac{1}{\rho}Z^\top q_t-\frac{1}{\rho}\sum_{j=0}^{t-1}Z^\top\del{\frac{1}{\lambda_j}\mu_{j+1}}. \\
  \end{align*}

  On the second claim, note that
  \begin{align*}
    \frac{1}{2\rho\lambda_{t-1}^2}\enVert{Z^\top\mu_t}_2^2+\sum_{j=1}^{t-1}\frac{1}{2\rho}\del{\frac{1}{\lambda_{j-1}^2}-\frac{1-\lambda_j}{\lambda_j^2}}\enVert{Z^\top\mu_j}_2^2 & \ge \frac{1}{2\rho\lambda_{t-1}^2}\bargamma^2+\sum_{j=1}^{t-1}\frac{1}{2\rho}\del{\frac{1}{\lambda_{j-1}^2}-\frac{1-\lambda_j}{\lambda_j^2}}\bargamma^2 \\
     & =\sum_{j=0}^{t-1}\frac{1}{2\rho\lambda_j}\bargamma^2.
  \end{align*}
  Additionally, \Cref{fact:agd_f} implies
  \begin{align*}
    \frac{1}{2\rho\lambda_j}\del{\enVert{Z^\top\mu_{j+1}}_2^2-\bargamma^2}\le\lambda_j D_{\psi^*}(\barq,q_0)\le\lambda_j\ln(n).
  \end{align*}
  Therefore
  \begin{align*}
     & \ \frac{1}{2\rho\lambda_{t-1}^2}\enVert{Z^\top\mu_t}_2^2+\sum_{j=1}^{t-1}\frac{1}{2\rho}\del{\frac{1}{\lambda_{j-1}^2}-\frac{1-\lambda_j}{\lambda_j^2}}\enVert{Z^\top\mu_j}_2^2 \\
    \ge & \ \sum_{j=0}^{t-1}\frac{1}{2\rho\lambda_j}\bargamma^2 \\
    = & \ \sum_{j=0}^{t-1}\frac{1}{2\rho\lambda_j}\enVert{Z^\top\mu_{j+1}}_2^2-\sum_{j=0}^{t-1}\frac{1}{2\rho\lambda_j}\del{\enVert{Z^\top\mu_{j+1}}_2^2-\bargamma^2} \\
    \ge & \ \sum_{j=0}^{t-1}\frac{1}{2\rho\lambda_j}\enVert{Z^\top\mu_{j+1}}_2^2-\ln(n)\sum_{j=0}^{t-1}\lambda_j,
  \end{align*}
  and note that
  \begin{align*}
    \sum_{j=0}^{t-1}\lambda_j=\sum_{j=0}^{t-1}\frac{2}{j+2}\le2\ln(t+1).
  \end{align*}
\end{proof}

Now we can prove \Cref{fact:momentron}.
Note that here we need \Cref{fact:wt_norm}, and particularly
$\enVert{\nabla\psi(Zw_t)}_1\ge1$; this is true for the exponential loss since
$\nabla\psi\in\Delta_n$, and it is also true for the logistic loss
\citep[Lemma D.1]{refined_pd}.
\begin{proof}[Proof of \Cref{fact:momentron}]
  \Cref{fact:-psi_lb_agd,fact:agd_wt_alt} imply
  \begin{align*}
    \psi(Zw_0)-\psi(Zw_t)\ge \sum_{j=0}^{t-1}\frac{1}{2\rho\lambda_j}\enVert{Z^\top\mu_{j+1}}_2^2+\sum_{j=0}^{t-1}\frac{1}{2\rho\lambda_j}\enVert{Z^\top\nu_j}_2^2-2\ln(n)\ln(t+1).
  \end{align*}
  Therefore
  \begin{align}
    \margin{w_t} & \ge\frac{\psi(Zw_0)-\psi(Zw_t)}{\|w_t\|_2}-\frac{\psi(Zw_0)}{\|w_t\|_2} \nonumber \\
     & \ge \frac{\sum_{j=0}^{t-1}\frac{1}{2\rho\lambda_j}\enVert{Z^\top\mu_{j+1}}_2^2}{\|w_t\|_2}+\frac{\sum_{j=0}^{t-1}\frac{1}{2\rho\lambda_j}\enVert{Z^\top\nu_j}_2^2}{\|w_t\|_2}-\frac{2\ln(n)\ln(t+1)}{\|w_t\|_2}-\frac{\ln(n)}{\|w_t\|_2} \nonumber \\
     & =\frac{\sum_{j=0}^{t-1}\frac{1}{2\rho\lambda_j}\enVert{Z^\top\mu_{j+1}}_2^2}{\|w_t\|_2}+\frac{\sum_{j=0}^{t-1}\frac{1}{2\rho\lambda_j}\enVert{Z^\top\nu_j}_2^2}{\|w_t\|_2}-\frac{\ln(n)\del{1+2\ln(t+1)}}{\|w_t\|_2}. \label{eq:momentron_tmp1}
  \end{align}
  By the triangle inequality and the alternative characterization of $w_t$ in
  \Cref{fact:agd_wt_alt}, we have
  \begin{align*}
    \|w_t\|_2\le \frac{1}{\rho}\enVert{Z^\top q_0}_2+\frac{1}{\rho}\enVert{Z^\top q_t}_2+\sum_{j=0}^{t-1}\frac{1}{\rho\lambda_j}\enVert{Z^\top\mu_{j+1}}_2\le \frac{2}{\rho}+\sum_{j=0}^{t-1}\frac{1}{\rho\lambda_j}\enVert{Z^\top\mu_{j+1}}_2.
  \end{align*}
  Therefore
  \begin{align*}
    \frac{\sum_{j=0}^{t-1}\frac{1}{2\rho\lambda_j}\enVert{Z^\top\mu_{j+1}}_2^2}{\|w_t\|_2} & \ge \frac{\bargamma \sum_{j=0}^{t-1}\frac{1}{2\rho\lambda_j}\enVert{Z^\top\mu_{j+1}}_2}{\frac{2}{\rho}+\sum_{j=0}^{t-1}\frac{1}{\rho\lambda_j}\enVert{Z^\top\mu_{j+1}}_2} \\
     & =\frac{\bargamma}{2}\del{1-\frac{2}{2+\sum_{j=0}^{t-1}\frac{1}{\lambda_j}\enVert{Z^\top\mu_{j+1}}_2}} \\
     & \ge \frac{\bargamma}{2}\del{1-\frac{2}{\sum_{j=0}^{t-1}\frac{1}{\lambda_j}\enVert{Z^\top\mu_{j+1}}_2}} \\
     & \ge \frac{\bargamma}{2}\del{1-\frac{8}{\bargamma(t+1)^2}}=\frac{\bargamma}{2}-\frac{4}{(t+1)^2}.
  \end{align*}
  where we use
  $\sum_{j=0}^{t-1}\frac{1}{\lambda_j}\enVert{Z^\top\mu_{j+1}}_2\ge \sum_{j=0}^{t-1}\frac{\bargamma}{\lambda_j}\ge \frac{\bargamma(t+1)^2}{4}$.
  The remaining part of \cref{eq:momentron_tmp1} can be handled in the same way
  as in the proof of \Cref{fact:momentron_half}.

  The second part of \Cref{fact:momentron} is proved at the end of
  \Cref{sec:pd_update}.
\end{proof}

\section{Omitted Proofs from \Cref{sec:adapt}}\label{app_sec:adapt}

Here we prove \Cref{fact:primal_margin_sgd}.
We need the following two results.
The first one gives a lower bound on $-\psi(Zw_t)$, which is an approximation of
the true unnormalized margin.
\begin{lemma}\label{fact:primal_margin_sgd_num}
  Under the conditions of \Cref{fact:primal_margin_sgd}, it holds with
  probability $1-\nicefrac{\delta}{2}$ that
  \begin{align*}
    -\psi(Zw_t)\ge \sum_{j=0}^{t-1}\theta_j\enVert{Z^\top q_j}_2^2-2\ln(n)-\sqrt{8\ln\del{\frac{2}{\delta}}\ln(n)}.
  \end{align*}
\end{lemma}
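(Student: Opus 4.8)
The plan is to follow the template of \Cref{fact:momentron_half} and \Cref{fact:momentron}: track the smoothed unnormalized margin $-\psi(Zw_t)$ along the iterates, but now cope with the randomness of the single-sample update $w_{j+1}=w_j-\theta_j z_{i_j}$ (here $\beta_t=0$, so $g_t=0$) by separating a deterministic "descent" part from a mean-zero fluctuation controlled by Azuma--Hoeffding. Concretely, since $q_j=\nabla\psi(Zw_j)$ (the softmax of $Zw_j$) and $\psi$ is $1$-smooth with respect to $\enVert{\cdot}_\infty$ for the exponential loss, one has
\[
  \psi(Zw_{j+1}) \le \psi(Zw_j) - \theta_j \ip{Z^\top q_j}{z_{i_j}} + \frac{\theta_j^2}{2}\enVert{Zz_{i_j}}_\infty^2
  \le \psi(Zw_j) - \theta_j \ip{Z^\top q_j}{z_{i_j}} + \frac{\theta_j^2}{2},
\]
where $\enVert{Zz_{i_j}}_\infty=\max_i\abs{\ip{z_i}{z_{i_j}}}\le1$ because $\enVert{z_i}_2\le1$. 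Telescoping from $j=0$ to $t-1$, using $\psi(Zw_0)=\psi(0)=\ln(n)$ and $\sum_{j=0}^{t-1}\theta_j^2=\ln(n)$ (since $\theta_j=\sqrt{\ln(n)/t}$), I obtain the deterministic bound
\[
  -\psi(Zw_t) \ge \sum_{j=0}^{t-1}\theta_j\ip{Z^\top q_j}{z_{i_j}} - \tfrac32\ln(n).
\]

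Next I would set up the martingale. With $\cF_j:=\sigma(i_0,\dots,i_{j-1})$, the vector $q_j$ — hence $Z^\top q_j$ — is $\cF_j$-measurable, and sampling $i_j\sim q_j$ gives $\bbE[z_{i_j}\mid\cF_j]=\sum_i (q_j)_i z_i=Z^\top q_j$, so $X_j:=\ip{Z^\top q_j}{z_{i_j}}-\enVert{Z^\top q_j}_2^2$ is a martingale-difference sequence. Since $Z^\top q_j$ is a convex combination of unit vectors, $\enVert{Z^\top q_j}_2\le1$, whence $\abs{X_j}\le\abs{\ip{Z^\top q_j}{z_{i_j}}}+\enVert{Z^\top q_j}_2^2\le2$. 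Applying the one-sided Azuma--Hoeffding inequality to $\sum_{j=0}^{t-1}\theta_j X_j$, whose increments are bounded by $2\theta_j$ with $\sum_{j=0}^{t-1}(2\theta_j)^2=4\ln(n)$, yields with probability at least $1-\delta/2$ that $\sum_{j=0}^{t-1}\theta_j X_j\ge-\sqrt{8\ln(2/\delta)\ln(n)}$. Substituting $\ip{Z^\top q_j}{z_{i_j}}=\enVert{Z^\top q_j}_2^2+X_j$ into the deterministic bound and using $\tfrac32\ln(n)\le2\ln(n)$ then gives exactly the stated inequality.

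The calculations here are all routine; the only point that requires genuine care is the martingale construction — checking that $q_j$ (and hence $Z^\top q_j$) is $\cF_j$-measurable so that the conditional mean of $z_{i_j}$ is precisely $Z^\top q_j$, and verifying that both the $\ell_\infty$-smoothness error term and the Azuma increments are controlled by absolute constants, which rests entirely on the two deterministic facts $\enVert{z_i}_2\le1$ and $\enVert{Z^\top q_j}_2\le1$. No separability is needed for this lemma.
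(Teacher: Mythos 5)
Your proposal is correct and follows essentially the same route as the paper's proof: the $\ell_\infty$-smoothness of $\psi$ to get the per-step descent inequality, telescoping with $\psi(Zw_0)=\ln(n)$ and $\sum_j\theta_j^2=\ln(n)$, and a one-sided Azuma bound on the martingale differences $\theta_j\ip{Z^\top q_j}{z_{i_j}-Z^\top q_j}$ (bounded by $2\theta_j$) to replace the sampled terms by $\enVert{Z^\top q_j}_2^2$. The only cosmetic difference is that you keep the $\tfrac12$ factor in the smoothness bound (yielding $\tfrac32\ln(n)$, relaxed to $2\ln(n)$), whereas the paper drops it and obtains $2\ln(n)$ directly.
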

\begin{proof}
  Since $\psi$, the ln-sum-exp function, is $1$-smooth with respect to the
  $\ell_\infty$ norm, we have
  \begin{align*}
    \psi(Zw_{t+1})-\psi(Zw_t) & \le \langle q_t,Zw_{t+1}-Zw_t\rangle+\enVert{Z(w_{t+1}-w_t)}_\infty^2 \\
     & =-\theta_t\ip{Z^\top q_t}{z_{i_t}}+\theta_t^2\enVert{Zz_{i_t}}_\infty^2 \\
     & \le-\theta_t\ip{Z^\top q_t}{z_{i_t}}+\theta_t^2\enVert{z_{i_t}}_2^2\le-\theta_t\ip{Z^\top q_t}{z_{i_t}}+\theta_t^2. \\
  \end{align*}
  Therefore
  \begin{align*}
    -\psi(Zw_t)\ge-\psi(Zw_0)+\sum_{j=0}^{t-1}\theta_j\ip{Z^\top q_j}{z_{i_j}}-\sum_{j=0}^{t-1}\theta_j^2.
  \end{align*}
  Additionally,
  \begin{align*}
    \envert{\ip{Z^\top q_j}{Z^\top q_j-z_{i_j}}}\le2,
  \end{align*}
  and Azuma's inequality implies, with probability $1-\nicefrac{\delta}{2}$,
  \begin{align*}
    \sum_{j=0}^{t-1}\theta_j\ip{Z^\top q_j}{Z^\top q_j-z_{i_j}}\le\sqrt{8\ln\del{\frac{2}{\delta}}\sum_{j=0}^{t-1}\theta_j^2}.
  \end{align*}
  Consequently, letting $\theta_j=\sqrt{\ln(n)/t}$, with probability
  $1-\nicefrac{\delta}{2}$,
  \begin{align*}
    -\psi(Zw_t) & \ge-\psi(Zw_0)+\sum_{j=0}^{t-1}\theta_j\enVert{Z^\top q_j}_2^2-\sqrt{8\ln\del{\frac{2}{\delta}}\sum_{j=0}^{t-1}\theta_j^2}-\sum_{j=0}^{t-1}\theta_j^2 \\
     & =\sum_{j=0}^{t-1}\theta_j\enVert{Z^\top q_j}_2^2-2\ln(n)-\sqrt{8\ln\del{\frac{2}{\delta}}\ln(n)}.
  \end{align*}
\end{proof}

On the other hand, we give upper and lower bounds on $\|w_t\|_2$, the
normalization term.
\begin{lemma}\label{fact:primal_margin_sgd_den}
  Under the conditions of \Cref{fact:primal_margin_sgd}, it holds with
  probability $1-\nicefrac{\delta}{2}$ that
  \begin{align*}
    \|w_t\|_2\le \sum_{j=0}^{t-1}\theta_j\enVert{Z^\top q_j}+\sqrt{\frac{8\ln(n)}{\delta}},
  \end{align*}
  and it always holds that $\|w_t\|_2\ge\bargamma\sqrt{t\ln(n)}$.
\end{lemma}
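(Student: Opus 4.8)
The plan is to exploit the fact that with momentum disabled ($\beta_t=0$), \Cref{alg:adaptron} maintains $g_t=0$ throughout, so its iterate is simply the running sum $w_t=-\sum_{j=0}^{t-1}\theta_j z_{i_j}$ (recall $w_0=0$). Both bounds then follow from short arguments in the spirit of \Cref{fact:wt_norm}: the lower bound is deterministic, while the upper bound is a one-line martingale second-moment computation followed by Markov's inequality.

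For the lower bound, I would project $w_t$ onto the maximum-margin classifier $\baru$. Since $\enVert{\baru}_2=1$ and $\langle -z_i,\baru\rangle\ge\bargamma$ for every $i$ (cf.\ \Cref{fact:margin_pd}), and $\theta_j=\sqrt{\ln(n)/t}$ under the conditions of \Cref{fact:primal_margin_sgd}, we get
\begin{align*}
  \enVert{w_t}_2\ge\langle w_t,\baru\rangle=\sum_{j=0}^{t-1}\theta_j\langle -z_{i_j},\baru\rangle\ge\bargamma\sum_{j=0}^{t-1}\theta_j=\bargamma\sqrt{t\ln(n)},
\end{align*}
which holds for every realization of the sampled indices, hence with probability $1$.

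For the upper bound, I would split the update into its conditional mean and a martingale remainder. Let $\cF_j:=\sigma(i_0,\ldots,i_{j-1})$, so that $w_j$ and $q_j$ are $\cF_j$-measurable; since $i_j\sim q_j$, we have $\mathbb{E}[z_{i_j}\mid\cF_j]=Z^\top q_j$, so with $X_j:=z_{i_j}-Z^\top q_j$,
\begin{align*}
  w_t=-\sum_{j=0}^{t-1}\theta_j Z^\top q_j-\sum_{j=0}^{t-1}\theta_j X_j.
\end{align*}
Writing $M_t:=\sum_{j=0}^{t-1}\theta_j X_j$, the triangle inequality bounds the first sum by $\sum_{j=0}^{t-1}\theta_j\enVert{Z^\top q_j}_2$, which is exactly the leading term in the claim. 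For $M_t$, the $(X_j)$ are martingale differences for $(\cF_j)$, so the cross terms vanish in expectation and $\mathbb{E}\enVert{M_t}_2^2=\sum_{j=0}^{t-1}\theta_j^2\,\mathbb{E}\enVert{X_j}_2^2$; using $\enVert{z_{i_j}}_2\le1$ and $\enVert{Z^\top q_j}_2\le1$ gives $\enVert{X_j}_2\le2$, hence $\mathbb{E}\enVert{M_t}_2^2\le4\sum_{j=0}^{t-1}\theta_j^2=4\ln(n)$. Markov's inequality then yields $\enVert{M_t}_2\le\sqrt{8\ln(n)/\delta}$ with probability at least $1-\delta/2$, and combining with the triangle-inequality term completes the proof. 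The argument is essentially routine; the only point needing care is this martingale bookkeeping, i.e.\ confirming that $(X_j)$ really are martingale differences for the natural filtration so that the cross terms drop and the second-moment estimate is legitimate (one can even sharpen the constant via $\mathbb{E}[\enVert{X_j}_2^2\mid\cF_j]=\mathbb{E}[\enVert{z_{i_j}}_2^2\mid\cF_j]-\enVert{Z^\top q_j}_2^2\le1$, but the crude bound already suffices).
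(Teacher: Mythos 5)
Your proposal is correct and follows essentially the same route as the paper: the lower bound via projection onto $\baru$ is identical, and for the upper bound the paper likewise defines the martingale remainder $r_t=\sum_{j=0}^{t-1}\theta_j(z_{i_j}-Z^\top q_j)$, bounds $\bbE\|r_t\|_2^2\le 4\sum_j\theta_j^2=4\ln(n)$ by the same conditional-expectation computation, applies Markov's inequality to get $\|r_t\|_2\le\sqrt{8\ln(n)/\delta}$ with probability $1-\nicefrac{\delta}{2}$, and finishes with the triangle inequality.
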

\begin{proof}
  Define
  \begin{align*}
    r_t:=\sum_{j=0}^{t-1}\theta_j\del{z_{i_j}-Z^\top q_j}.
  \end{align*}
  Note that
  \begin{align*}
    \bbE\sbr{\|r_t\|_2^2\middle|q_0,\ldots,q_{t-2}} & =\bbE\sbr{\|r_{t-1}\|_2^2+\theta_j\ip{r_{t-1}}{z_{i_{t-1}}-Z^\top q_{t-1}}+\theta_j^2\enVert{z_{i_{t-1}}-Z^\top q_{t-1}}_2^2\middle|q_0,\ldots,q_{t-2}} \\
     & =\|r_{t-1}\|_2^2+\bbE\sbr{\theta_j^2\enVert{z_{i_{t-1}}-Z^\top q_{t-1}}_2^2\middle|q_0,\ldots,q_{t-2}} \\
     & \le\|r_{t-1}\|_2^2+4\theta_j^2,
  \end{align*}
  thus
  \begin{align*}
    \bbE\sbr{\|r_t\|_2^2}\le\bbE\sbr{\|r_{t-1}\|_2^2}+4\theta_j^2,
  \end{align*}
  and $\bbE\sbr{\|r_t\|_2^2}\le4\theta_j^2t=4\ln(n)$.
  By Markov's inequality, with probability $1-\nicefrac{\delta}{2}$, it holds
  that $\|r_t\|_2^2\le8\ln(n)/\delta$.
  In this case, we have
  \begin{align*}
    \|w_t\|_2=\enVert{\sum_{j=0}^{t-1}\theta_jz_{i_j}}_2\le\enVert{\sum_{j=0}^{t-1}\theta_jZ^\top q_j}_2+\sqrt{\frac{8\ln(n)}{\delta}}\le \sum_{j=0}^{t-1}\theta_j\enVert{Z^\top q_j}+\sqrt{\frac{8\ln(n)}{\delta}}.
  \end{align*}
  For the lower bound, just note that
  \begin{align*}
    \|w_t\|_2\ge \langle w_t,\baru\rangle=\sum_{j=0}^{t-1}\theta_j \langle-z_{i_j},\baru\rangle\ge\sum_{j=0}^{t-1}\theta_j\bargamma=\bargamma\sqrt{t\ln(n)}.
  \end{align*}
\end{proof}

Now we are ready to prove \Cref{fact:primal_margin_sgd}.
\begin{proof}[Proof of \Cref{fact:primal_margin_sgd}]
  First note the inequality $\margin{w_t}\ge-\psi(Zw_t)/\|w_t\|_2$.
  Then \Cref{fact:primal_margin_sgd_num,fact:primal_margin_sgd_den} imply, with
  probability $1-\delta$,
  \begin{align*}
    \margin{w_t}\ge \frac{-\psi(Zw_t)}{\|w_t\|_2} & \ge \frac{\sum_{j=0}^{t-1}\theta_j\enVert{Z^\top q_j}_2^2-2\ln(n)-\sqrt{8\ln\del{\frac{2}{\delta}}\ln(n)}}{\|w_t\|_2} \\
     & =\frac{\sum_{j=0}^{t-1}\theta_j\enVert{Z^\top q_j}_2^2+\bargamma\sqrt{\frac{8\ln(n)}{\delta}}}{\|w_t\|_2}-\frac{2\ln(n)+\sqrt{8\ln\del{\frac{2}{\delta}}\ln(n)}+\bargamma\sqrt{\frac{8\ln(n)}{\delta}}}{\|w_t\|_2} \\
     & \ge \frac{\sum_{j=0}^{t-1}\theta_j\enVert{Z^\top q_j}_2^2+\bargamma\sqrt{\frac{8\ln(n)}{\delta}}}{\sum_{j=0}^{t-1}\theta_j\enVert{Z^\top q_j}_2+\sqrt{\frac{8\ln(n)}{\delta}}}-\frac{2\ln(n)+\sqrt{8\ln\del{\frac{2}{\delta}}\ln(n)}+\bargamma\sqrt{\frac{8\ln(n)}{\delta}}}{\bargamma\sqrt{t\ln(n)}}.
  \end{align*}
  Since $\enVert{Z^\top q_j}_2\ge\bargamma$, it follows that
  \begin{align*}
    \margin{w_t}\ge\bargamma-\frac{2\sqrt{\ln(n)}+\sqrt{8\ln\del{\frac{2}{\delta}}}}{\bargamma\sqrt{t}}-\sqrt{\frac{8}{\delta t}}.
  \end{align*}
  Letting
  \begin{align*}
    t=\max\del{\left\lceil \frac{32\ln(n)+64\ln(2/\delta)}{\bargamma^2\epsilon^2}\right\rceil,\left\lceil\frac{32}{\delta\epsilon^2}\right\rceil}
  \end{align*}
  finishes the proof.
\end{proof}

\section{Reducing Multiclass to Binary Classification}

Here we verify the reduction to the binary case.

\begin{proof}[Proof of \Cref{fact:multiclass}]
  We first show a property than the first one given in the statement,
  namely that $\gamma(F(U)) = \mulgamma(U)/\sqrt{2}$ for any $U\in\R^{d\times k}$;
  from this it follows directly that
  \[
    \bargamma
    =
    \max_{\|w\|_2\leq 1} \gamma(w)
    =
    \max_{\|U\|_\tF\leq 1} \gamma(F(U))
    =
    \max_{\|U\|_\tF\leq 1} \frac{ \mulgamma(U)}{\sqrt 2}
    =
    \frac{\bmgamma}{\sqrt 2}
  \]
  To this end, for any $U\in\R^{d\times k}$, the case $U=0$ follows directly since
  $\gamma(F(0)) = 0 = \mulgamma(0)$, and when $U\neq 0$ then
  \begin{align*}
    \gamma(F(U))
    &=
    \min_{i\in\{1,\ldots,n\}} \ve_i^\top Z F(U)
    \\
    &=
    \min_{i\in\{1,\ldots,N\}} \min_{j\neq c_i} \ip{ z_{\pi(i,j)} }{ U }
    \\
    &=
    \frac 1 {\sqrt 2} \min_{i\in\{1,\ldots,N\}} \min_{j\neq c_i}
    \ip{x_i (\ve_{c_i} - \ve_j)^\top }{ U }
    \\
    &=
    \frac 1 {\sqrt 2} \min_{i\in\{1,\ldots,N\}} \min_{j\neq c_i}
    \del{x_i^\top U \ve_{c_i} - x_i^\top U \ve_j}
    \\
    &= \frac {\mulgamma(U)}{\sqrt 2}.
  \end{align*}
  From here, the algorithmic guarantee is direct from \Cref{fact:momentron}, which
  can be applied due to the $\sqrt{2}$ factor in the definition of $z_{\pi(i,c)}$,
  which insures $\|F(z_{\pi(i,c)})\|_2\leq 1$:
  \begin{align*}
    \mulgamma(U_t)
    &=
    \sqrt{2} \gamma(F(U_t))
    \\
    &\geq
    \sqrt{2}\sbr{ \bargamma - \frac {4(1+\ln(n))(1+2\ln(1+t))}{\bargamma(t+1)^2} }
    \\
    &\geq
    \bmgamma - \frac {4(1+\ln(n))(1+2\ln(1+t))}{\bmgamma(t+1)^2}.
  \end{align*}
\end{proof}

Note that if the reduction included all $k$ classes and not $k-1$, then the margin
equivalence would fail, since there would be a term $\ip{x_i (\ve_{c_i} - \ve_{c_i})^\top}{U}
= \ip{0}{U} = 0$, giving a margin of zero for any data.

\section{Further Experimental Details}%
\label{sec:app:emp}

\Cref{fig:sep:batch,fig:nonsep:batch,fig:sep:adapt} use
the standard \texttt{mnist} data, which has 60,000 training data and
10,000 testing data across $10$ classes, with inputs in $\R^{784}$.

\begin{itemize}
  \item
    \Cref{fig:sep:batch} restricts the data to digits $0$ and $1$, which
    leads to a separable problem.  All methods are run with standard parameters,
    meaning in particular that normalized and unnormalized gradient descent have step size $1$,
    \Cref{alg:momentron} has parameters matching \Cref{fact:momentron}, and batch perceptron is
    implemented exactly as in \citep{batch_perceptron}.

  \item
    \Cref{fig:nonsep:batch} restricts the data to digits $3$ and $5$, which can be verified
    as nonseparable after inspecting the dual objective as discussed in \Cref{sec:momentron}.

  \item
    \Cref{fig:sep:adapt} uses multiclass versions of \Cref{alg:adaptron} as detailed
    in \Cref{sec:multiclass}, for instance avoiding writing down any explicit vectors of
    dimension $dk$.  The data is the full \texttt{mnist}, meaning all $10$ classes, which
    are made into a linearly separable problem by considering the NTK features given
    by a 2-homogeneous network, specifically a shallow network with a single hidden layer
    of $128$ ReLU nodes.  This width is in fact more than adequate to produce reliable results;
    experiments with narrower widths were similar.

\end{itemize}

\Cref{fig:kernel_evolution} used
the standard \texttt{cifar10} data, which has 50,000 training data and
10,000 testing data across $10$ classes, with inputs in $\R^{3072}$, but represented
as $32\times 32$ images with $3$ color channels, a format convenient for convolutional layers.
A standard AlexNet was trained on this data \citep{imagenet_sutskever}, where the training
procedure was an unembellished mini-batch stochastic gradient descent.

\end{document}